\documentclass[acmsmall,screen,nonacm]{acmart}

\setcopyright{none}
\makeatletter
\@ifundefined{hyxmp@parse@acmart}{}{\let\hyxmp@parse@acmart\relax}
\makeatother

\usepackage{amsmath}
\usepackage{mathtools}
\usepackage{booktabs}
\usepackage{tabularx}
\usepackage{array}
\usepackage{multirow}
\usepackage{enumitem}
\usepackage{graphicx}
\usepackage{xcolor}
\usepackage{tikz}
\usepackage{url}
\usepackage{microtype}
\usepackage{algorithm}
\usepackage{algpseudocode}
\usepackage{placeins}
\usetikzlibrary{arrows.meta,positioning,fit,calc,shapes.geometric}

\hypersetup{
  hidelinks,
  pdftitle={Runtime Authorization for Resources Acquired by AI Agents},
  pdfauthor={Genliang Zhu and Chu Wang},
  pdfsubject={Provenance-bounded runtime authorization for resources acquired by AI agents},
  pdfcreator={LaTeX}
}

\setlist[itemize]{leftmargin=1.25em}
\setlist[enumerate]{leftmargin=1.45em}
\newcolumntype{L}[1]{>{\raggedright\arraybackslash}p{#1}}
\newcolumntype{Y}{>{\raggedright\arraybackslash}X}

\definecolor{opBlue}{HTML}{3B82F6}
\definecolor{opBlueFill}{HTML}{EFF6FF}
\definecolor{opBlueText}{HTML}{1E40AF}
\definecolor{opGreen}{HTML}{22C55E}
\definecolor{opGreenFill}{HTML}{F0FDF4}
\definecolor{opGreenText}{HTML}{15803D}
\definecolor{opAmber}{HTML}{FCD34D}
\definecolor{opAmberFill}{HTML}{FEF3C7}
\definecolor{opAmberText}{HTML}{92400E}
\definecolor{opRed}{HTML}{EF4444}
\definecolor{opRedFill}{HTML}{FEF2F2}
\definecolor{opRedText}{HTML}{DC2626}
\definecolor{opPurple}{HTML}{8B5CF6}
\definecolor{opPurpleFill}{HTML}{F5F3FF}
\definecolor{opPurpleText}{HTML}{7C3AED}

\newtheorem{definition}{Definition}
\newtheorem{lemma}{Lemma}
\newtheorem{theorem}{Theorem}
\newtheorem{proposition}{Proposition}

\newcommand{\mcode}[1]{\ifmmode\text{\normalfont\ttfamily #1}\else{\normalfont\ttfamily #1}\fi}
\newcommand{\sem}[1]{\mathopen{[\![}#1\mathclose{]\!]}}
\newcommand{\restrict}{\mathbin{\sqsubseteq}}
\newcommand{\Episodes}{\ensuremath{\mathcal U}}
\newcommand{\Caps}{\ensuremath{\mathcal C}}

\newcommand{\Grants}{\ensuremath{\mathcal G}}

\newcommand{\Project}{\ensuremath{\Pi}}

\title[Runtime Authorization for Acquired Resources]
{Runtime Authorization for Resources Acquired by AI Agents}

\author{Genliang Zhu}
\affiliation{%
  \institution{Accentrust}
  \city{Vancouver}
  \country{Canada}
}
\affiliation{%
  \institution{Georgia Institute of Technology}
  \city{Atlanta}
  \state{Georgia}
  \country{USA}
}
\email{research@accentrust.com}

\author{Chu Wang}
\affiliation{%
  \institution{Accentrust}
  \city{Vancouver}
  \country{Canada}
}
\affiliation{%
  \institution{University of Illinois Urbana-Champaign}
  \city{Urbana}
  \state{Illinois}
  \country{USA}
}
\email{wally@accentrust.com}

\begin{abstract}
By acquiring compute, credentials, accounts, services, and other agents,
autonomous AI agents can introduce new authority into a task. Payment, budget, OAuth,
mandate, and fulfillment checks can validate transaction conditions without
deciding whether a returned resource may become usable authority. This
post-fulfillment activation gap spans tool-mediated creation, inter-agent
delegation, and agentic commerce.

We present a provenance-bounded runtime authorization architecture. It
quarantines acquired outputs, resolves their actual
capabilities from authenticated provider evidence through a versioned resolver,
and activates them only through a current activation transaction that checks
the resolved manifest, provenance, epochs, and a downward-closed relational
envelope over a typed resource--capability hypergraph. The envelope preserves
correlated identity, effect, data, delegation, and graph-wide limits. Single-use
effect permits are revalidated and consumed at effect linearization.

Under explicit assumptions, we prove eight safety properties covering
quarantine, backing, non-amplification, split non-evasion, crash/retry, refunds,
epochs, and effect confinement. Across five resource classes, reference
semantics accepted 20/20 benign traces and rejected 40/40 registered unsafe
traces over 810 events; an independent checker agreed on 60 base and 40
refinement traces and rejected 89/89 tamper tests. Frozen Codex and Gemini Model
Context Protocol (MCP) client components completed 54/54 deterministic local
stdio calls. In a registered 18-case staged MCP-to-Docker composition, both
benign paths completed, and none of the 16 unsafe paths added an unauthorized
Docker start request. A five-source audit classified 1,248 field pairs across
32 units; no unit alone supplied a complete activation profile.
\end{abstract}

\ccsdesc[500]{Security and privacy~Access control}
\ccsdesc[500]{Security and privacy~Software security engineering}
\keywords{AI agents, agentic AI security, runtime authorization, access control,
agentic commerce, Model Context Protocol (MCP), resource acquisition,
capability activation, provenance, least privilege}

\begin{document}
\maketitle
\hypersetup{
  pdftitle={Runtime Authorization for Resources Acquired by AI Agents},
  pdfauthor={Genliang Zhu and Chu Wang},
  pdfsubject={Provenance-bounded runtime authorization for resources acquired by AI agents},
  pdfcreator={LaTeX}
}

\section{Introduction}
\label{sec:introduction}

Autonomous agents increasingly act through interfaces that can create the
means for later action. A successful call may return an API credential,
activate a hosted service, launch a compute environment, register an account,
or establish a new remote agent. Public evaluations already treat obtaining
resources and deploying onto compute as components of autonomous replication
\cite{black2025replibench}, while current risk frameworks track long-range
autonomy and autonomous replication and adaptation as distinct research
categories \cite{openai2025preparedness}. These developments turn resource
acquisition into an access-control event.

The security boundary is easy to place at the wrong transition. A payment
protocol can faithfully bind an order and payment; a budget monitor can prove
that the price remains below a limit; OAuth can authorize the provisioning
endpoint; and a tool manifest can remain unchanged. None of these facts alone
establishes that the resource returned by the provider may become executable
authority for the task. The same authorized cloud-create call can yield an
instance with a task-local read role or one attached to a broader service
identity. The transaction record can be identical while the resulting action
surface differs.

Cloud platforms already recognize provider-specific forms of this distinction.
AWS separates permission to launch an EC2 instance from permission to pass a
particular IAM role, and Google Cloud requires service-account impersonation
permission when an identity is attached to a resource
\cite{aws2026passrole,google2026actas}. Our question is broader and orthogonal:
what authorization semantics lets an autonomous-agent control plane govern
the returned resource consistently across providers and resource classes,
including zero-price enrollment, credentials, remote services, and new agent
principals?

\subsection{The post-fulfillment activation gap}

We distinguish six events:

\[
\textit{propose}\rightarrow\textit{reserve}\rightarrow\textit{dispatch}
\rightarrow\textit{quarantine}\rightarrow\textit{resolve}
\rightarrow\textit{activate}.
\]

Proposal authorizes an intended acquisition. Reservation protects any
consumable source authority. Dispatch starts an external operation.
Quarantine records a returned resource without exposing usable authority.
Resolution derives the resource's actual semantic capability from registered
provider evidence. Activation is a new authorization decision over the
resulting capability and the complete current acquisition graph.

This separation yields a strict witness. Suppose a task may spend twenty
synthetic dollars on one isolated read-only compute instance. Its payment,
amount, endpoint scope, order mandate, and delivery receipt all validate. The
provider state nevertheless resolves to an administrator credential with
external-network effects. Every transaction-side predicate can remain true;
the activation predicate must be false. A second witness costs zero: four
aliases independently enroll four free workers when the envelope permits one
active descendant. A third witness refunds a purchase without destroying its
credential, restoring money but not authority capacity. These cases establish
that the missing object is neither a scalar budget nor the authorization of
the acquisition call.

\subsection{Approach}

We present a reference architecture for provenance-bounded
activation. Its acquisition envelope is a downward-closed relational predicate
over a normalized typed hypergraph. Each active hyperedge may combine multiple
inputs and produces one ordinal-indexed output; a multi-output acquisition is
represented by distinct, independently admitted singleton-output edges. Nodes
represent root grants, source-resource
references, quarantined resources, and active semantic capabilities. The
predicate can express correlated limits that componentwise checks lose: one
beneficiary may hold a read capability or a publish capability but not both;
one canonical control root may create at most one active descendant; or a
credential may be valid only for a particular data domain, provider profile,
purpose, and delegation depth. All grants sharing an issuer, canonical control
root, and policy epoch are evaluated in one authority-domain aggregate, so a
fresh grant identifier cannot reset those limits.

Before external dispatch, durable source reservations and an immutable
acquisition permit bind the source-only inputs, request, expected descriptor,
provider profile, purpose, current state root and epochs, and a root-qualified
provider operation key. The dispatch outbox carries that exact permit; a
provider receipt must correlate to it before any returned output can enter
quarantine.

An external output first enters a broker-controlled vault. A versioned
resolver maps authenticated provider state to a normalized actual-capability
manifest. An activation transaction then validates the conversion relation,
the complete prospective graph, policy and identity epochs, and a single-use
permit. Each output carries an authenticated ordinal inside the registered
finite output range and therefore occupies its own activation slot. Opaque
handle publication then revalidates the exact committed active row and its
currentness; a fenced or expired row is never published for use. Every
protected use rechecks the current lineage and requested effect, then binds
them into a single-use effect permit that is revalidated while its slot is
consumed at the
effect linearization point.
The permit also binds the identity, version, and digest of a finite
target-to-data-domain registry, so an unknown or incorrectly mapped target
cannot borrow an allowed data-domain label.
Payment success and delivery are evidence inputs, not implicit activation
grants.

\subsection{Contributions}

This paper makes five contributions:

\begin{enumerate}
  \item It identifies and formalizes the \emph{post-fulfillment activation
  gap}. An indistinguishability proposition proves that a monitor restricted
  to transaction-side evidence cannot decide activation soundly when equal
  transaction records may produce different semantic outputs.

  \item It defines a typed resource--capability acquisition hypergraph and a
  relational, downward-closed envelope over an authority-domain-normalized
  active projection. The model retains identity, co-possession, cardinality,
  delegation, and graph constraints rather than reducing authority to
  independent fields or per-grant counters.

  \item It gives a quarantine-before-activation protocol with version-bound
  output resolution, canonical identity, single-use activation, refund-safe
  lineage, epoch fencing, and an immutable single-use effect permit whose slot is consumed at the
  protected effect's linearization point.

  \item It establishes eight conditional safety results: quarantine
  non-authority, backed activation, acquisition non-amplification, split
  non-evasion, crash-safe at-most-once activation, refund non-resurrection,
  epoch non-inheritance, and end-to-end resource-to-effect confinement.

  \item It evaluates the separation with 60 five-class base fixtures over 810
  events, 40 implementation-refinement traces, a checker that rejects all 89
  registered tamper tests, two separate agent-runtime mappings, AP2 and
  external-source profiles, 15 activation and five effect crash cuts, the
  complete 32-schedule registered five-bit replay domain, 32 effect replays,
  192 receipt substitutions, local kernel timing, and a 20-case
  network-disabled container gate. Two frozen upstream MCP client
  components complete 27 calls each against byte-identical deterministic
  stdio witnesses. An 18-case composition admits one server-observed call per
  logical case to a staged path; each advances through capture binding and the
  common IR only until its registered rejection boundary or authorized
  completion. Twelve cases reach Docker quarantine and four authorized first
  starts. The evidence records keep conditional proof, finite checking, and
  deployment premises distinct.
\end{enumerate}

\subsection{Scope}

The model covers registered, decidable acquisition profiles whose outputs,
identity aliases, and protected effects are completely mediated. It does not
claim to decide arbitrary right-acquisition safety; that problem is
undecidable in the general protection model \cite{harrison1976protection}.
Ordinary tool installation is outside scope unless it creates an external
account, credential, tenant, provider resource, or principal. General agent
planning, model alignment, covert offline resources, and unobservable
real-world consequences are not proof premises. Within the stated profile,
however, an active capability must have a current and non-amplifying
derivation; missing or indeterminate evidence cannot authorize activation.

The remainder of the paper separates the problem from adjacent controls
(Section~\ref{sec:background}), states the system and assurance model
(Section~\ref{sec:system}), formalizes the semantics and results
(Section~\ref{sec:formal}), presents the design (Section~\ref{sec:design}),
defines protocol instantiations and evaluation evidence
(Sections~\ref{sec:instantiations}--\ref{sec:evaluation}), and closes with
related work and the exact assurance boundary.

\section{Background and Problem Separation}
\label{sec:background}

\subsection{Authorization before and after acquisition}

Classical least privilege asks that a subject receive only the authority
needed for its task \cite{saltzer1975protection}. ABAC evaluates attributes of
subjects, objects, actions, and environments \cite{nist800162}; usage control
extends authorization across mutable attributes and ongoing use
\cite{park2004ucon}; and zero-trust architecture places decisions near each
resource rather than treating network position as trust \cite{nist800207}.
The proposed architecture applies these principles to a transition that changes the set of
resources and principals available to an agent task.

Table~\ref{tab:layers} separates five decisions that can coexist in one
workflow. Each existing layer establishes its own decision object;
the proposed authorization layer adds the distinct decision over post-fulfillment activation.

\begin{table*}[t]
\caption{Complementary decisions in an acquisition workflow.}
\label{tab:layers}
\small
\begin{tabularx}{\textwidth}{L{0.16\textwidth} L{0.22\textwidth} Y Y}
\toprule
Layer & Primary decision object & Representative evidence & Question not settled by that object alone \\
\midrule
Endpoint authorization & proposed API operation & caller, audience, scope, request parameters & What semantic authority does the returned resource create? \\
Commerce & represented order and transfer & intent, checkout, amount, payee, signed mandates, receipt & May the fulfilled item become usable task authority? \\
Resource accounting & source-side consumable rights & balance, reservation, ownership, settlement & Does a zero-price or nonnumeric output widen future effects? \\
Capability/tool binding & an existing named capability or tool definition & manifest, version, grant, invocation handle & What new resource instance, identity, or entitlement did this call create? \\
Acquisition activation & actual returned resource and prospective active graph & provider state, resolved manifest, canonical identity, lineage, epochs & Is this output admissible now, and what may it do later? \\
\bottomrule
\end{tabularx}
\end{table*}

\subsection{Commerce and payment protocols}

AP2 defines signed checkout and payment mandates and receipts for
agent-mediated commerce \cite{ap22026spec}. UCP covers commerce capability discovery,
checkout, order lifecycle, identity linking, and payment exchange
\cite{ucp2026}. Recent work formalizes agent-payment protocol lifecycles and
their cross-stage relations \cite{jiang2026formalpayments}, analyzes AP2 v0.2
threat surfaces \cite{aviv2026beyond}, and adds runtime replay and context
binding to mandate use \cite{lan2026zerotrustpayments}. Our architecture consumes
valid transaction and fulfillment records from such systems. Its decisive
input is the separately resolved authority of the delivered resource.

This distinction matters even when fulfillment is correct. A merchant can
deliver exactly the requested cloud tier while the tier's default identity,
network, credential exportability, or derived-principal behavior exceeds the
task envelope. Conversely, a mandate may contain a custom constraint whose
meaning is sufficient for a registered activation profile. The adapter may use that
authenticated field; the architecture does not presume that mandates are
inexpressive.

\subsection{OAuth, agent protocols, and tool execution}

OAuth resource indicators enable authorization servers to audience-restrict
access tokens, token exchange represents delegation and impersonation contexts,
and Rich Authorization Requests carry structured authorization details
\cite{rfc8707,rfc8693,rfc9396}. Current OAuth
security guidance remains part of the authentication and token-handling
baseline \cite{rfc9700}. MCP's current authorization profile uses OAuth-based
mechanisms, while its tool interface represents model-invocable operations
\cite{mcp2026authorization,mcp2026tools}. A2A 1.0 represents remote agents,
tasks, and authentication requirements \cite{a2a2026spec}.

Precise task-scoped authorization, attenuated delegation, and principal-chain
models govern who may invoke an existing action and under which task envelope
\cite{sharma2026pauth,prakash2026aip,muruaga2026bounded}. HCP makes canonical
resources, principal binding, grant-backed invocation, and handle provenance
explicit in MCP-style execution \cite{liu2026executioncontrol}. These controls
are direct inputs to operation-time enforcement. Deterministic pre-action
authorization and deployed multi-layer agent access control likewise evaluate
request parameters, agent identity, and execution context before an existing
operation runs \cite{uchibeke2026oap,malik2026granular}. The authorization layer asks the
subsequent admission question when such an authorized invocation returns a new
resource or principal whose authority was not yet in the active graph.

Governance work also separates an agent's technical capability level from its
allowed autonomy level \cite{zheng2026autonomylevels}. Our architecture gives this
distinction a resource-instance transition semantics: the declared operation
may be allowed while the actual returned capability remains quarantined and
unauthorized.

\subsection{Dynamic capabilities and acquisition}

Dynamic-capability governance binds tool manifests and interaction evidence
to cryptographic identities and detects post-authorization tool changes
\cite{zhou2026dynamic}. Capability negotiation protocols cover discovery,
selection, negotiation, attestation, and binding between heterogeneous agents
\cite{huang2025acnbp}. These mechanisms govern dynamic declarations and
bindings. The proposed architecture governs a distinct transition over a provider-created
resource instance: the tool manifest may remain exactly constant while a call
through that tool creates a credential, instance, account, entitlement, or new
principal with distinct semantic authority.

\subsection{Why a relational graph is necessary}

A numeric vector can constrain money, calls, compute units, or effect counts;
resource-bounded agent contracts make such dimensions explicit
\cite{ye2026agentcontracts}. A capability descriptor can attenuate effects.
Neither representation alone captures all acquisition policies. Consider an
envelope that permits a canonical beneficiary to hold either a sensitive-read
capability or an external-publish capability, but not both. Each capability's
fields independently belong to an allowed set, and the union of individually
allowed fields may admit the forbidden pair. Similarly, ``at most one active
descendant'' is a graph-cardinality condition, not an effect-set inclusion.

We therefore define the safety envelope as a predicate over a
normalized acquisition hypergraph. The predicate is downward closed under
removing nodes or edges, narrowing semantic capabilities, reducing counts, and
shortening validity. Task-completion requirements are evaluated separately;
they need not be downward closed. This separation keeps the safety theorem
precise without equating denial with successful task execution.

\section{System, Threat, and Assurance Model}
\label{sec:system}

\subsection{Actors and enforcement boundary}

Figure~\ref{fig:architecture} places activation between fulfillment and use.
The \emph{grant issuer} signs a root acquisition envelope. The agent runtime
proposes acquisitions. A source authority service supplies reservation and
settlement evidence for consumable inputs. An external provider creates a
resource. The acquisition broker receives that output into quarantine. A
profile-specific resolver reads authoritative provider state and emits an
actual-capability manifest. The activation monitor evaluates the prospective
graph and publishes only a brokered opaque handle. The effect broker mediates
later protected uses.

\begin{figure*}[t]
\centering
\resizebox{0.98\textwidth}{!}{%
\begin{tikzpicture}[
  >=Latex,
  node distance=7mm and 8mm,
  box/.style={draw, rounded corners, align=center, minimum height=9mm,
              inner xsep=6pt, font=\small},
  blue/.style={box, draw=opBlue, fill=opBlueFill, text=opBlueText},
  purple/.style={box, draw=opPurple, fill=opPurpleFill, text=opPurpleText},
  green/.style={box, draw=opGreen, fill=opGreenFill, text=opGreenText},
  amber/.style={box, draw=opAmber, fill=opAmberFill, text=opAmberText},
  red/.style={box, draw=opRed, fill=opRedFill, text=opRedText},
  flow/.style={->, thick}
]
\node[blue] (grant) {signed acquisition\\envelope $\Gamma$};
\node[blue, right=of grant] (proposal) {normalized proposal\\and source evidence};
\node[purple, right=of proposal] (provider) {external acquisition\\and fulfillment};
\node[amber, right=of provider] (vault) {quarantine vault\\opaque resource};
\node[green, right=of vault] (resolve) {actual-capability\\resolver};
\node[green, below=10mm of resolve] (activate) {relational graph gate\\atomic activation};
\node[green, left=of activate] (handle) {brokered active\\opaque handle};
\node[blue, left=of handle] (use) {current-closure gate\\single-use permit};
\node[green, left=of use] (commit) {effect-linearization\\commit};
\node[red, below=7mm of activate] (denyactivate) {deny or\\indeterminate};
\node[red, below=7mm of use] (denyuse) {deny or fence};

\draw[flow, opBlue] (grant) -- (proposal);
\draw[flow, opPurple] (proposal) -- (provider);
\draw[flow, opAmberText] (provider) -- (vault);
\draw[flow, opGreenText] (vault) -- (resolve);
\draw[flow, opGreenText] (resolve) -- (activate);
\draw[flow, opGreenText] (activate) -- (handle);
\draw[flow, opBlue] (handle) -- (use);
\draw[flow, opGreenText] (use) -- (commit);
\draw[flow, opRed] (activate) -- (denyactivate);
\draw[flow, opRed] (use) -- (denyuse);
\end{tikzpicture}%
}
\caption{The architecture separates successful acquisition from executable
authority. Provider output remains quarantined until its actual manifest,
canonical identity, lineage, and prospective aggregate satisfy the current
authority domain's joint envelope. Later effects re-enter the current-closure
gate and commit only by
atomically redeeming a current single-use effect permit.}
\label{fig:architecture}
\Description{A left-to-right acquisition pipeline places provider fulfillment
before a quarantine vault and actual-capability resolver. A relational graph
gate creates an active opaque handle only after validation. Every protected use
passes through a current-closure gate, and the effect commit atomically
revalidates the immutable single-use permit and consumes its slot; invalid or stale paths are
denied or fenced.}
\end{figure*}

The agent never receives raw provider credentials in the brokered profile. It
receives a non-exportable handle whose invocation path terminates at the effect
broker. If a provider interface cannot keep the raw authority outside the
agent, a conforming deployment requires a remote gateway that consumes the
activation permit before the resource becomes usable. An interface that
supports neither containment pattern is not an activatable profile.

\subsection{Adversary}

The adversary controls the agent's planning and may coordinate all descendants
under the same authority domain. It may choose valid calls, legal payment routes, free
tiers, discounts, refunds, account aliases, concurrent orders, retry timing,
and credential transfers. It may replay messages, exploit crashes, present a
stale profile, substitute a requested field for an actual field, or attempt to
use a resource after a policy, identity, provider, or revocation epoch changes.

The adversary cannot forge a trusted signature or digest, compromise the
reference monitor, break the broker's quarantine, or produce a protected
effect outside every declared mediation boundary. Providers may return a
stronger or differently configured resource than requested. For a registered
profile, however, the provider evidence fields used by the resolver are
authenticated according to that profile. Missing or ambiguous evidence causes
an indeterminate decision, never an inferred grant.

\subsection{Assurance assumptions}

Table~\ref{tab:assumptions} states the complete theorem premises. They are
versioned profile requirements and checker inputs, not informal expectations.

\begin{table*}[t]
\caption{Assurance assumptions.}
\label{tab:assumptions}
\small
\begin{tabularx}{\textwidth}{L{0.06\textwidth} L{0.25\textwidth} Y}
\toprule
ID & Assumption & Operational obligation \\
\midrule
A1 & Complete mediation & Every registered acquisition, provider output, activation, handle publication, and protected effect crosses the declared broker. \\
A2 & Authentic state & Grants, contracts, provider evidence, identity mappings, target--domain maps, epochs, and commitments are unforgeable within the profile. \\
A3 & Conservative resolution & The actual-capability projector is total and complete for policy-relevant fields; unknowns map to a sound upper bound or cause rejection. \\
A4 & Canonical identity & Controlled subjects, beneficiaries, resources, and aliases resolve to authoritative canonical roots; ambiguity fails closed. \\
A5 & Correct policy semantics & Capability denotation, conversion relations, graph normalization, and envelope predicates implement their registered mathematical definitions. \\
A6 & Transactional durability & Each authority-changing operation executes as one serializable storage transaction. A transaction either aborts without changing durable state or commits its complete write set at one linearization point; a successful commit is durable before success is returned. The store enforces durable \(\mathsf{UNIQUE}(\mathit{slot})\) constraints on \(UsedActivationKeys\), \(Active\), \(UsedEffectKeys\), and \(EffectReceipts\); a conflict aborts. Recovery exposes the state after the last complete commit and no incomplete write. Lifecycle transactions include every required dependent fence. A handle is derived deterministically from, and may be published only for, a committed active record. \\
A7 & Recoverable provider operation & The broker derives a domain-separated provider key from the root-qualified logical operation; the provider authenticates that exact key in its receipt and supports an authoritative terminal-state query. An unresolved or uncorrelated operation remains quarantined. \\
A8 & Current epochs & Policy, identity, contract, resource, grant, and revocation epochs are authoritatively readable at activation and use. \\
A9 & Source evidence contract & Consumable input reservation and settlement evidence satisfies its registered adapter contract; the activation monitor does not reconstruct a source ledger. \\
A10 & Effect linearization & Protected downstream operations are normalized to semantic episodes and can occur only at a registered commit point that re-resolves current lineage and atomically validates the exact effect permit, records its single-use slot consumption, and accepts the effect. An executor without this primitive is outside the conforming profile. \\
A11 & Finite decidable profile & The registered conversion closure and relational predicate terminate; the base transition system rejects every cross-root input combination. \\
\bottomrule
\end{tabularx}
\end{table*}

\subsection{Assets and attack goals}

The protected asset is the mediated task's exercisable authority, not merely
its inventory. The adversary succeeds if it causes any of the following:

\begin{itemize}
  \item an executable resource without a current path to an acquisition root;
  \item a graph that violates an effect, purpose, beneficiary, provider,
  multiplicity, delegation, replication, or co-possession clause;
  \item a handle published before its durable activation record;
  \item two active authorities created from one activation slot or logical
  acquisition output;
  \item a refund or restored source budget treated as destruction of a still
  live capability; or
  \item a protected effect accepted after its capability, ancestry, manifest,
  or policy state becomes stale.
\end{itemize}

Unmediated offline resources, hidden provider behavior outside the registered
evidence profile, arbitrary social engineering, side channels, and
unmodellable physical consequences are outside the theorem domain. The monitor
does guarantee fail-closed activation when required in-domain evidence is
absent, inconsistent, stale, or exceeds the envelope.

\section{Formal Model}
\label{sec:formal}

This section defines the registered acquisition profile, its reachable states,
and the safety properties enforced by activation. Appendix~\ref{app:proofs}
gives the proof details.

\subsection{Episodes and semantic capabilities}

Let \(\mathcal S\) be the universe of model states and \(\Episodes\) the
universe of normalized protected-effect episodes. An episode records every
field consumed by operation-time policy:

\[
u=\langle \mathit{episodeId},q,b,\mathit{eff},t,d,p,r,c,\tau\rangle .
\]

Here \(\mathit{episodeId}\) is the registered semantic episode identifier,
\(q\) is the invoking subject, \(b\) the canonical beneficiary,
\(\mathit{eff}\) an effect class, \(t\) a target, \(d\) a data domain, \(p\) a purpose,
\(r\) a recipient or audience, \(c:\mathcal S\to\{\mathsf{true},
\mathsf{false}\}\) a registered decidable state predicate, and \(\tau\) a
nonempty logical-time interval.  If \(\mathsf{now}(s)\) denotes the logical
time recorded in state \(s\), define
\[
 \mathsf{EpisodeCurrent}_s(u)\equiv
 c(s)\land \mathsf{now}(s)\in\tau .
\]
Each effect profile has a commitment
\(m=\langle\mathit{mapId},\mathit{mapVersion},\mathit{mapDigest}\rangle\)
to a finite target-to-domain map. Normalization is defined only when
\(t\) has exactly one registered image and \(d=\mathsf{domain}_{m}(t)\);
target identity and domain identity need not be equal. An unknown target,
ambiguous image, stale map version, or digest mismatch is rejected before a
permit is issued. Profiles may add typed fields; omission of a policy-consumed
field is not a wildcard. Both preparation and commit evaluate the same
canonical schema and separately bind the same map commitment \(m\), while
\(c\) and \(\tau\) are evaluated
again against the commit state.

\begin{definition}[Capability descriptor]
\label{def:capability}
A capability descriptor \(\alpha\in\Caps\) records an episode set and
structural constraints:
\[
 \alpha=\langle \mathit{aid},\mathcal E_\alpha,\delta_\alpha,
 \ell_\alpha,n_\alpha,\mu_\alpha,\nu_\alpha,
 \upsilon_\alpha,o_\alpha\rangle,
 \qquad \sem{\alpha}=\mathcal E_\alpha\subseteq\Episodes .
\]
Here \(\mathit{aid}\) is a stable descriptor identifier,
\(\delta_\alpha\) is redelegability, \(\ell_\alpha\) a maximum delegation
depth, \(n_\alpha\) a descendant bound, \(\mu_\alpha\) a concurrency bound,
\(\nu_\alpha\) a multiplicity bound, \(\upsilon_\alpha\) the descriptor
validity interval, and \(o_\alpha\) its continuing obligations.  Each
registered obligation has a decidable satisfaction relation
\(\mathsf{Obl}_s(o,u)\).  Define the obligation preorder by
\[
 o_1\preceq_{\mathsf O}o_2
 \quad\Longleftrightarrow\quad
 \forall s\in\mathcal S,\,u\in\Episodes:\
 \mathsf{Obl}_s(o_1,u)\Rightarrow\mathsf{Obl}_s(o_2,u).
\]
The descriptor restriction order is
\[
\begin{aligned}
 \alpha_1\restrict\alpha_2\quad\Longleftrightarrow\quad{}
 &\mathcal E_{\alpha_1}\subseteq\mathcal E_{\alpha_2}
 \land (\delta_{\alpha_1}\Rightarrow\delta_{\alpha_2})
 \land \ell_{\alpha_1}\leq\ell_{\alpha_2}
 \land n_{\alpha_1}\leq n_{\alpha_2}\\
 &\land \mu_{\alpha_1}\leq\mu_{\alpha_2}
 \land \nu_{\alpha_1}\leq\nu_{\alpha_2}
 \land \upsilon_{\alpha_1}\subseteq\upsilon_{\alpha_2}
 \land o_{\alpha_1}\preceq_{\mathsf O}o_{\alpha_2}.
\end{aligned}
\]
Registered canonicalization assigns the same content-derived \(\mathit{aid}\)
to descriptors with mutually equivalent obligations, episode denotations, and
structural fields. The relation is therefore an antisymmetric partial order on
canonical descriptors (equivalently, on their semantic equivalence classes).
Thus every structural dimension of a restricted descriptor is explicitly no
wider.
\end{definition}

\begin{definition}[Actual capability instance]
\label{def:instance}
An acquired capability instance is the immutable tuple
\[
 x=\langle \mathit{xid},\mathit{rid},k,\pi,\mathit{cr},b,
          \alpha_x,h_x,\eta_x\rangle,
\]
where \(\mathit{xid}\) is its broker-assigned canonical instance identifier,
\(\mathit{rid}\) its canonical provider resource identifier, \(k\) its
resource kind, \(\pi\) its provider, \(\mathit{cr}\) an
authoritative controller root, \(b\) a canonical beneficiary, \(h_x\) the
actual-manifest root, and \(\eta_x\) its epoch vector.  Mutable commerce,
delivery, and authority coordinates belong to the state-indexed operation map
\(\sigma_s\) defined below; changing them never changes the identity of \(x\).
Surface account names, order labels, email aliases, and agent-provided identity
strings are not \(\mathit{cr}\) or \(b\).
\end{definition}

Possessing an identifier for \(x\) does not make it executable.  The exact
state predicates \(\mathsf{HandleEnabled}_s(x)\), \(\mathsf{Exec}_s(x)\), and
\(\mathsf{AcceptedEffect}_s(x,u)\) are defined with the machine state below;
in particular, neither a vault identifier nor a prepared permit satisfies any
of them.

\subsection{Relational acquisition envelopes}

An acquisition envelope must retain whole-policy correlations. Independent
allowlists for effects, domains, and subjects are insufficient because they
implicitly authorize their Cartesian product.

\begin{definition}[Acquisition envelope]
\label{def:envelope}
A root acquisition envelope is
\[
 \Gamma=\langle \mathit{gid},i,\mathit{cr}_\Gamma,\mathit{task}_\Gamma,
 K_\Gamma,\Phi_\Gamma,
 S_\Gamma,P_\Gamma,T_\Gamma,V_\Gamma,
 E_\Gamma,\epsilon_\Gamma,\eta_\Gamma\rangle .
\]
It binds an issuer \(i\), canonical control root \(\mathit{cr}_\Gamma\), task or
session identifier \(\mathit{task}_\Gamma\), registered contract set
\(K_\Gamma\), a safety predicate \(\Phi_\Gamma\), authorized
source references \(S_\Gamma\), provider profiles \(P_\Gamma\), purposes
\(T_\Gamma\), validity \(V_\Gamma\), admitted evidence-profile set
\(E_\Gamma\), registered source-reservation evidence profile
\(\epsilon_\Gamma\), and an
epoch map \(\eta_\Gamma\). Let \(\mathsf{pe}(\Gamma)\) be the authoritative policy
epoch named in \(\eta_\Gamma\). The grant identifier preserves provenance; it
is not an aggregation boundary.
\end{definition}

\begin{definition}[Authority domain]
\label{def:authority-domain}
The authority domain of a grant is
\[
 \rho(\Gamma)=\langle i,\mathsf{canon}(\mathit{cr}_\Gamma),
 \mathsf{pe}(\Gamma)\rangle .
\]
Current membership is the state predicate
\[
\begin{aligned}
 \mathsf{CurrentGrant}_s(\Gamma)\equiv{}
 &\mathsf{VerifyIssuer}_s(\Gamma)
 \land \mathsf{now}(s)\in V_\Gamma\\
 &\land\ \eta_\Gamma=
 \mathsf{Epochs}_s\!\restriction\mathsf{dom}(\eta_\Gamma).
\end{aligned}
\]
Let
\[
 \mathsf{Live}_\rho(s)=
 \{\Gamma\in\mathit{Grants}_s\mid
   \rho(\Gamma)=\rho\land\mathsf{CurrentGrant}_s(\Gamma)\}.
\]
Its joint safety decision is the conjunction
\[
 \mathsf{Safe}_\rho(s,H)\equiv
 \bigwedge_{\Gamma\in\mathsf{Live}_\rho(s)}\Phi_\Gamma(H).
\]
Thus two fresh \(\mathit{gid}\) values under the same issuer, canonical control root,
and policy epoch share one aggregate. A new policy epoch creates a new domain
only after the old domain has been fenced under A8.
\end{definition}

Let \(H_1\preceq_s H_2\) mean that \(H_1\) can be obtained from \(H_2\) by
removing active acquisition nodes or edges, narrowing capability descriptors,
reducing counts or concurrency, shortening validity, disabling delegation, or
removing co-possessed capability nodes and thereby eliminating their
relation---without changing retained canonical identities, lineage, or
contract types.

\begin{definition}[Downward-closed safety predicate]
\label{def:downward}
For normalized active acquisition graphs, each \(\Phi_\Gamma\) is downward closed
when
\[
 \Phi_\Gamma(H_2)\land H_1\preceq_s H_2
 \Longrightarrow \Phi_\Gamma(H_1).
\]
Registration additionally requires \(\Phi_\Gamma(\varnothing)=\mathsf{true}\);
a candidate root whose empty projection is unsafe is not installed.
Because a conjunction of downward-closed predicates is downward closed,
\(\mathsf{Safe}_\rho\) has the same property for every live grant set.
Registration also requires authority-domain extensionality: if two normalized
graphs differ only by renaming or redistributing provenance \(\mathit{gid}\) labels
among grants in the same \(\rho\), every \(\Phi_\Gamma\) in that domain returns
the same value on them. Consequently, a policy may inspect canonical
identities and graph relations, but not use a fresh grant identifier as new
capacity.
Minimum utility, task-completion, and availability conditions are evaluated by
a separate goal predicate and do not belong to \(\Phi_\Gamma\).
\end{definition}

Examples expressible in a domain's joint predicate include ``at most one
active worker under this control root,'' ``read and external publish may not be co-possessed
by one beneficiary,'' ``no output is redelegable,'' and ``all paths to this
data domain use provider profile \(v\).''

\subsection{Registered conversions and the acquisition hypergraph}

\begin{definition}[Conversion contract]
\label{def:contract}
A registered conversion contract is
\[
 \kappa=\langle \mathit{kid},v,I_\kappa,O_\kappa,N_\kappa,
 R_\kappa,F_\kappa,E_\kappa,J_\kappa,\eta_\kappa\rangle .
\]
\(I_\kappa\) and \(O_\kappa\) are allowed input and output types.  Let
\(\mathcal Z_\kappa\) be the authenticated provider-state space,
\(\mathcal H\) the manifest-digest space, and \(\mathcal F_\kappa\) the
registered set of policy-consumed provider fields.  Resolution has the typed
total signature
\[
 N_\kappa:\mathcal Z_\kappa\longrightarrow
 \bigl(\Caps\times\mathcal H\times 2^{\mathcal F_\kappa}\bigr)
 \uplus\{\bot\}.
\]
If \(N_\kappa(z)=(\alpha,h,U)\), then \(U\) is the set of required fields
not observed exactly, and \(h\) commits to the authenticated evidence, the
resolved fields, \(U\), every substituted upper bound, and \(\alpha\).
For each \(f\in U\), the profile supplies a field order \(\preceq_f\) and a
registered bound \(\mathsf{ub}_{\kappa,f}(z)\) satisfying
\[
 \forall z'\equiv_{E_\kappa}z:\quad
 \mathsf{value}_f(z')\preceq_f\mathsf{ub}_{\kappa,f}(z),
 \tag{UB}\label{eq:resolver-upper-bound}
\]
where \(z'\equiv_{E_\kappa}z\) means consistency with the same authenticated
evidence.  The bound is incorporated into \(\alpha\).  If any required
unresolved field lacks such a sound bound, the only permitted result is
\(\bot\); unresolved-but-bounded fields remain explicit in \(U\).
Let \(\mathcal I_\kappa\) and \(\mathcal O_\kappa\) be the finite, well-typed
input and output tuples admitted by \(I_\kappa\) and \(O_\kappa\), and let
\(\Omega_\kappa\) be the finite registered output-ordinal set. The relation
\(R_\kappa:\mathcal I_\kappa\times\mathcal O_\kappa\to
\{\mathsf{true},\mathsf{false}\}\) specifies conversion validity, and
\[
 F_\kappa:\mathcal I_\kappa\times\Grants\times\Omega_\kappa
 \longrightarrow\Caps
\]
returns the complete registered descriptor upper bound for output ordinal
\(\omega\) derivable from inputs \(I\) under \(\Gamma\).
\(E_\kappa\) is the evidence profile;
\(J_\kappa\) the identity profile; and \(\eta_\kappa\) the contract epoch.
\end{definition}

The acquisition state is a directed hypergraph \(G_s=(V_s,E_s)\). An active
edge
\[
 e:I_e\xrightarrow{\kappa_e,\omega_e,\Gamma_e}\{x\}
\]
may consume several source or capability nodes and has one active output.
A provider result with several resources is represented by distinct fresh
quarantined nodes with registered output ordinals; the base activation profile
uses one singleton-output edge per node. Each edge binds the root grant,
logical operation key and output ordinal, canonical input and output
identities, provider receipt, actual-manifest root, contract version, control
root, beneficiary, and all policy-relevant epochs. After canonical identity
resolution, the graph is acyclic.

The base profile gives every active node exactly one root \(\mathit{gid}\) for
provenance and rejects every conversion whose capability inputs are derived
from different roots; the monitor never unions derivations implicitly. A
separate join profile would require a registered import transition that binds
and continuously revalidates every predecessor row and root, and is not part
of this transition system. This single-root provenance rule does not partition
policy accounting: all roots in the same authority domain remain in one
aggregate projection.

\begin{definition}[Normalized active projection]
\label{def:projection}
Write \(\Project_\Gamma(s)\) for the root-specific active subgraph used for
provenance and per-edge reasoning. The policy projection is
\[
 \Project_\rho(s)=\mathsf{erase}_{\mathit{gid}}\!\left(
   \bigcup_{\Gamma\in \mathsf{Live}_\rho(s)}\Project_\Gamma(s)\right).
\]
It removes surface aliases, batch boundaries, order numbers, retry
identifiers, and grant-identifier labels, while retaining actual capability
descriptors, conversion types, canonical beneficiaries and control roots,
counts, delegation, multiplicity, co-possession, cross-boundary lineage
references, and graph structure. Immutable root provenance remains in the
separate audit projection \(\Project_\Gamma\); erasing its label only prevents
policy accounting from treating a new label as new capacity.
\end{definition}

For a finite, well-typed set \(E\) of fresh candidate edges rooted in domain
\(\rho\), define the candidate projection and normalized graph merge by
\[
 \Project^{+}_{\rho}(s,E)=\mathsf{normalize}_{\rho}\!\left(
   \Project_{\rho}(s)\cup E\cup\mathsf{out}(E)\right),
 \qquad
 H\boxplus_{\rho}\Delta=\mathsf{normalize}_{\rho}(H\cup\Delta).
\]
For a singleton we write \(\Project^{+}_{\rho}(s,e)\). The operator admits
only fresh canonical output identities, registered ordinals, and well-typed
edges; a collision, type error, or cycle is undefined and therefore rejected.
It adds candidate authority solely for admission evaluation: it does not add
a durable row, consume a slot, or publish a handle. If a singleton commit of
\(e\) takes \(s\) to \(s'\), its transactional write set must establish
\(\Project_{\rho}(s')=\Project^{+}_{\rho}(s,e)\).

Normalization intentionally prevents a policy from distinguishing two plans
only because one uses more order or grant IDs. It never collapses two actual
outputs or two independently exercisable authorities. In particular, issuing
\(\Gamma_1\) and \(\Gamma_2\) with distinct \(\mathit{gid}\) but equal \(\rho\) cannot
reset a cardinality or co-possession constraint.
For example, suppose both grants' joint policy permits at most one active
child-agent. After one child-agent rooted at \(\Gamma_1\) is active, a
candidate rooted at \(\Gamma_2\) makes the shared projected count two and is
rejected. Issuing \(\Gamma_2\) therefore cannot reset the bound.

\subsection{Machine state and authenticated objects}

A state is
\[
 \begin{aligned}
s=\langle&\theta,\mathit{Grants},G,\mathit{Ops},
 \mathit{AcquisitionPermits},\mathit{Outbox},\Sigma,
 \mathit{Vault},\mathit{Resolved},\mathit{ActivationPermits},\\
 &\mathit{Active},\mathit{Handles},\mathit{EffectPermits},
 \mathit{EffectReceipts},\mathit{Epochs},\mathit{RootRevisions},\\
 &\mathit{UsedActivationKeys},\mathit{UsedEffectKeys}\rangle .
 \end{aligned}
\]
The logical clock is part of durable state: \(\mathsf{now}(s)=\theta_s\), and
it changes only through \(\mathsf{AdvanceTime}\). For a raw logical operation
key \(\mathit{opKey}\), registered output ordinal \(\omega\), and root
\(\Gamma\), define the root-qualified key
\(o=\langle\mathit{gid}_\Gamma,\mathit{opKey}\rangle\) and activation slot
\(j=\langle\mathit{gid}_\Gamma,\mathit{opKey},\omega\rangle_{\mathsf{act}}\).
Write \(\mathsf{rawKey}(o)=\mathit{opKey}\), and derive the provider-facing
idempotency key
\[
 \mathsf{providerKey}(o)=
 \mathsf{Com}_{\mathsf{provider\mbox{-}operation}}(o).
\]
An unqualified provider, request, or order identifier is never a state key.
The immutable operation record at that key is
\[
 \widehat{o}=\mathit{Ops}_s(o)=
 \langle\Gamma_o,\kappa_o,v_o,\pi_o,I_o,\mathit{task}_o,\mathit{cr}_o,b_o,
          \mathit{expected}_o,\mathit{purpose}_o,\eta_o,\mathit{pk}_o\rangle,
\]
where \(\pi_o\) is the registered provider profile and every component is the
canonical value admitted at proposal, with
\(\mathit{pk}_o=\mathsf{providerKey}(o)\). The output ordinal is deliberately not
an operation-record field: each received output carries its own
\(\omega\in\Omega_{\kappa_o}\). Later transitions resolve \(\widehat{o}\) by
the root-qualified key and reject an absent or unequal record.
\(\mathit{AcquisitionPermits}\) is an immutable partial map on the same keys;
proposal installs its entry atomically with \(\widehat{o}\).
\(\mathit{Outbox}_s(o)\) is the durable dispatch row containing the exact
operation record, acquisition permit, source reservation evidence,
root-bound provider idempotency key, and dispatch status. It is installed before any
provider send and is the sole source for retry after recovery. The
state-indexed output map
\[
 \sigma_s(o,\omega)=\Sigma_s(o,\omega)
   =(C_s(o),D_s(o,\omega),A_s(j))
\]
separates operation-level commerce from output-level delivery and slot-level
authority:
\begin{align*}
 C_s(o) &\in \{\mcode{none},\mcode{proposed},\mcode{reserved},
 \mcode{submitted},\\
     &\qquad \mcode{accepted},\mcode{rejected},\mcode{indeterminate},
 \mcode{settled},\mcode{refunded}\},\\
 D_s(o,\omega) &\in \{\mcode{absent},\mcode{quarantined},\mcode{resolved},
 \mcode{retired}\},\\
 A_s(j) &\in \{\mcode{none},\mcode{active},
 \mcode{fenced},\mcode{revoked}\}.
\end{align*}
Preparation appends \(p_x\) but leaves \(A_s(j)=\mcode{none}\);
\(\mathsf{CommitActivation}\) is the single transition from
\mcode{none} to \mcode{active}.
Thus outputs of one provider operation may independently be quarantined,
active, or fenced while sharing one commerce outcome. For example,
\((\mcode{settled},\mcode{resolved},\mcode{none})\) at one output coordinate
records a paid and delivered resource whose activation was denied; and
\((\mcode{refunded},\mcode{resolved},\mcode{active})\) records a refund while
that output's capability remains live. \(\mathit{Vault}\) is inaccessible to the agent;
\(\mathit{Handles}\) contains only brokered references; and the two used-key sets
record durably consumed activation and effect slots. \(\mathit{RootRevisions}\) is
indexed by authority domain, not by \(\mathit{gid}\): it is the revision of the
canonical controller-root domain shared by every grant in \(\rho\).

An active row has the canonical schema
\[
 a=\langle j,x,o,\omega,\chi,\alpha_x,h_x,e,\Gamma,
       \eta_a,v_a\rangle,
\]
where \(j\) is its activation slot and \(\chi\) its deterministic opaque
handle.  Write \(\mathsf{slot}(a)=j\), \(\mathsf{inst}(a)=x\),
\(\mathsf{op}(a)=o\), \(\mathsf{ordinal}(a)=\omega\),
\(\mathsf{handle}(a)=\chi\),
\(\mathsf{root}(a)=\Gamma\), and
\(\rho_a=\rho(\mathsf{root}(a))\).
Define the immutable handle input by removing the handle coordinate itself,
\[
 \mathsf{rowCore}(a)=
 \langle j,x,o,\omega,\alpha_x,h_x,e,\Gamma,\eta_a,v_a\rangle .
\]
Thus handle derivation is a function of committed row content and contains no
self-reference.
Every vault object carries an authenticated correlation accessor
\(\mathsf{corr}_s(x)=\langle o,\omega\rangle\).  For
\(o\in\mathsf{dom}(\mathit{Ops}_s)\) and
\(j=\langle\mathit{gid}_{\Gamma_o},\mathsf{rawKey}(o),\omega\rangle_{\mathsf{act}}\),
define activation freshness by
\[
\begin{aligned}
 \mathsf{ActivationFresh}_s(x,o,\omega)\equiv{}&
 x\in\mathit{Vault}_s
 \land \mathsf{corr}_s(x)=\langle o,\omega\rangle\\
 &{}\land \bigl(\forall x'\in\mathit{Vault}_s:\
       \mathsf{corr}_s(x')=\langle o,\omega\rangle\Rightarrow x'=x\bigr)\\
 &{}\land \mathit{Resolved}_s(x)\ne\bot
 \land D_s(o,\omega)=\mcode{resolved}
 \land \neg\exists e\in E_s:\ x\in O_e\\
 &{}\land \bigl(\neg\exists a\in\mathit{Active}_s:\
       \mathsf{inst}(a)=x\bigr)
 \land A_s(j)=\mcode{none}.
\end{aligned}
\]
This predicate means fresh for activation, not absent from state: the output is
already quarantined and resolved, but no graph edge, active row, or authority
slot has admitted it. The commit separately checks that the corresponding
single-use slot has not been consumed.
The last field is the \(v_s(s^-,o,x)\) reconstructed in the serialized
activation-commit pre-state \(s^-\); it is retained as immutable audit
evidence rather than treated as the current global state.
The predicate \(\mathsf{RowFor}_s(a,x)\) holds exactly when
\(a\in\mathit{Active}_s\), \(\mathsf{inst}(a)=x\), and every immutable
instance, root-qualified operation key, output ordinal, manifest, descriptor,
edge, root, and epoch field in \(a\) equals
the committed value for \(x\).  The predicate
\(\mathsf{LineageCurrent}_s(a)\) is defined inductively over the row's
acyclic committed sub-hypergraph. It holds exactly when that sub-hypergraph has
the live root \(\mathsf{root}(a)\), satisfies each edge's registered relation
and ordinal-specific upper bound, has matching canonical identities,
manifests, epochs, and validity intervals at every node, and every non-source
capability predecessor \(y\) has a matching durable row \(a_y\) with
\(A_s(\mathsf{slot}(a_y))=\mcode{active}\) and
\(\mathsf{CurrentRow}_s(a_y,y)\). Acyclicity makes this predecessor recursion
well founded. The predicate \(\mathsf{SlotConsumed}_s(a)\) holds exactly when
\(\mathsf{slot}(a)\in\mathit{UsedActivationKeys}_s\).
Define the row-level currentness predicate
\[
\begin{aligned}
 \mathsf{CurrentRow}_s(a,x)\equiv{}&
 \mathsf{RowFor}_s(a,x)
 \land \mathsf{SlotConsumed}_s(a)\\
 &{}\land \mathsf{root}(a)\in\mathsf{Live}_{\rho_a}(s)\\
 &{}\land \mathsf{now}(s)\in\upsilon_{\alpha_x}
 \land \eta_a=\mathsf{Epochs}_s\!\restriction\mathsf{dom}(\eta_a)
 \land \mathsf{LineageCurrent}_s(a).
\end{aligned}
\]
The predicate \(\mathsf{HandleEnabled}_s(x)\) holds exactly when
\[
 \exists!a\in\mathit{Active}_s:\
 \mathsf{CurrentRow}_s(a,x)
 \land A_s(\mathsf{slot}(a))=\mcode{active}
 \land\mathit{Handles}_s(\mathsf{handle}(a))=a.
\]
Finally,
\[
 \mathsf{Exec}_s(x)\equiv\mathsf{HandleEnabled}_s(x).
\]
A committed effect receipt records the exact instance, canonical episode,
effect permit, slot, protected-effect result, accepted decision, and an A10
linearization witness. \(\mathit{EffectPermits}\) is an append-only ledger of
immutable prepared objects; ``consumption'' means atomically adding the permit's
slot to \(\mathit{UsedEffectKeys}\), not deleting or rewriting the permit.
\(\mathit{EffectReceipts}\) is likewise append-only. Let
\(\mathsf{ValidAcceptanceWitness}(r)\) mean that the registered broker
verifies the receipt's A10 witness. With the field accessors of that canonical
receipt schema, define
\[
\begin{aligned}
 \mathsf{AcceptedEffect}_s(x,u)\equiv{}&
 \exists r,p_u:\quad r\in\mathit{EffectReceipts}_s\\
 &{}\land p_u\in\mathit{EffectPermits}_s
 \land\mathsf{inst}(r)=x\\
 &{}\land\mathsf{episode}(r)=u
 \land\mathsf{permit}(r)=p_u
 \land\mathsf{decision}(r)=\mcode{accepted}\\
 &{}\land\mathsf{ValidAcceptanceWitness}(r)
 \land\mathsf{receiptSlot}(r)=\mathsf{effectSlot}(p_u)\\
 &{}\land\mathsf{effectSlot}(p_u)\in\mathit{UsedEffectKeys}_s.
\end{aligned}
\]
The receipt, consumed slot, and protected effect are installed at the single
A10 linearization point. This is a historical acceptance predicate; I6 below
states what had to hold in the linearization pre-state.

Call a state \emph{refresh-closed} when (i) every \(a\in\mathit{Active}_s\)
with \(A_s(\mathsf{slot}(a))=\mcode{active}\) satisfies
\(\mathsf{CurrentRow}_s(a,\mathsf{inst}(a))\), whether or not its handle has
yet been published; and (ii) I5 holds. The deterministic operation
\(\mathsf{Refresh}\) monotonically fences every committed row with an active
slot-level authority coordinate that fails \(\mathsf{CurrentRow}\), together
with every dependent descendant. If a newly current grant makes an existing
domain projection violate I5, it fences that domain's active-authority rows; the
resulting empty projection is safe by
Definition~\ref{def:downward}.  It disables their handles, sets the relevant
authority coordinates to \mcode{fenced}, and advances every affected domain
revision in the same transaction.  It never creates authority.  Every
accepted state is refresh-closed, and no activation or effect decision may
observe an intermediate, unrefreshed clock, epoch, or grant-currentness
change.

For a canonical typed object \(z\), let \(\mathsf{Com}_d(z)\) be its
domain-separated, length-delimited collision-resistant commitment under A2,
and let \(\mathsf{Ver}_s(Q)\) be the A6 storage revision token for a named
serialized read set \(Q\), which changes whenever any row in \(Q\) changes.
For \(o\in\mathit{dom}(\mathit{Ops}_s)\), write
\(\rho_o=\rho(\Gamma_o)\). The security-critical commitments used
below are
\[
\begin{aligned}
h_a(o,\widehat{o})&=\mathsf{Com}_{\mathsf{acquire}}
   \bigl(o,\widehat{o}\bigr),\\
 h_\Gamma(\Gamma)&=\mathsf{Com}_{\mathsf{envelope}}(\Gamma),\\
 \mathsf{Src}(o,\widehat{o})&=
   \bigl\langle\mathit{gid}_{\Gamma_o},\mathsf{rawKey}(o),\epsilon_{\Gamma_o},
   \mathsf{sort}(\mathsf{sourceRefs}(I_o))\bigr\rangle,\\
 h_{\mathsf{src}}(o,\widehat{o})&=
   \mathsf{Com}_{\mathsf{source\mbox{-}evidence}}
   \bigl(\mathsf{Src}(o,\widehat{o})\bigr),\\
 h_e(e)&=\mathsf{Com}_{\mathsf{edge}}(e),\\
 h^+_\rho(s,e)&=\mathsf{Com}_{\mathsf{projection}}
   \bigl(\Project^+_\rho(s,e)\bigr),\\
 v_s(s,o,x)&=\mathsf{Ver}_s
   \bigl(\mathit{Ops}_s(o),\mathit{Resolved}_s(x),
         \Project_{\rho_o}(s),\mathit{Epochs}_s,
         \mathit{RootRevisions}_s(\rho_o)\bigr).
\end{aligned}
\]
Thus \(h_a\) and \(h_\Gamma\) are pure commitments that can be computed before
proposal atomically installs the operation; \(\mathsf{Src}\) is the exact
canonical statement attested under the envelope's registered
source-reservation evidence profile \(\epsilon_{\Gamma_o}\), distinct from the
provider/resolver evidence profile \(E_{\kappa_o}\), and
\(h_{\mathsf{src}}\) is its domain-separated digest carried by the durable
outbox. The
other tokens commit to named canonical rows rather than to an unspecified
ambient state. Preparation records these values; commit reconstructs them from
its serialized read set.

\begin{definition}[Acquisition permit]
\label{def:acquire-permit}
An acquisition permit binds
 \[
 p_a=\mathsf{Sig}_{i}\bigl(
 \mathit{gid},h_\Gamma(\Gamma),\mathit{task}_o,\mathit{opKey},\mathit{pk},\kappa,v,\pi,I,b,\mathit{cr},
 \mathit{expected},\mathit{purpose},
 \eta,h_a(o,\widehat{o}),\mathit{expiry}\bigr).
\]
Here \(\mathit{pk}=\mathsf{providerKey}(o)\); the serialized
\mcode{providerOperationKey} field carries that value.
The \mcode{grantEnvelopeDigest} field carries \(h_\Gamma(\Gamma)\).
The \mcode{providerProfileId} field carries \(\pi\)'s identifier; the
\mcode{providerProfileVersion} field carries its version.
The \mcode{taskId} field carries the operation's canonical
\(\mathit{task}_o=\mathit{task}_\Gamma\).
The serialized artifact field named \mcode{stateRoot} carries
\(h_a(o,\widehat{o})\).
It authorizes dispatch and quarantine under one logical operation. It does not
authorize execution of any returned resource.
\end{definition}

\begin{definition}[Activation permit]
\label{def:activation-permit}
After resolution, a single-use activation permit binds
\[
 \begin{aligned}
p_x=\mathsf{Sig}_{M}\bigl(&\mathit{xid},\mathit{rid},h_x,
 \alpha_x,h_e(e),\\
&\mathit{gid},\rho(\Gamma),\mathit{rootRevision},\mathit{opKey},\omega,
 h^+_{\rho(\Gamma)}(s,e),\\
&\mathit{cr},b,h_\Gamma(\Gamma),\pi_o,h_{\mathsf{src}}(o,\widehat{o}),
 \kappa,v,\eta,v_s(s,o,x),\mathit{expiry}\bigr),
\end{aligned}
\]
where \(M\) is the activation monitor. Exact equality is required for every
bound digest and identity at commit. The artifact fields
\mcode{canonicalResourceId}, \mcode{edgeRoot},
\mcode{prospectiveGraphRoot}, and \mcode{stateVersion} carry, respectively,
\(\mathit{rid}\), \(h_e(e)\), \(h^+_{\rho(\Gamma)}(s,e)\), and
\(v_s(s,o,x)\). The \mcode{grantEnvelopeDigest} and
\mcode{sourceEvidenceDigest} fields carry \(h_\Gamma(\Gamma)\) and
\(h_{\mathsf{src}}(o,\widehat{o})\), respectively. The
\mcode{providerProfileId} field carries \(\pi_o\)'s identifier; the
\mcode{providerProfileVersion} field carries its version. The contract derives \(\omega\) from the
authenticated provider result using a registered injective, retry-stable
output-key map; ambiguous or conflicting associations fail closed. Define the
durable activation slot by
\[
\mathsf{slot}(p_x)=
 \langle\mathit{gid},\mathit{opKey},\omega\rangle_{\mathsf{act}}.
\]
The model slot is this structured tuple. An implementation may represent it
by a domain-separated, length-delimited collision-resistant hash commitment
under A2; it must first validate the canonical encoding and treat a detected
collision or digest-verification failure as rejection.
Retries and duplicate receipts for the same logical output therefore address
the same slot even if their transport identifiers differ.
\(\mathit{ActivationPermits}\) is an append-only ledger of these immutable
prepared objects. Activation consumption adds the permit's slot to
\(\mathit{UsedActivationKeys}\); it never deletes or rewrites \(p_x\).
\end{definition}

\begin{definition}[Effect permit]
\label{def:effect-permit}
After a current use check, the effect broker prepares
 \[
 p_u=\mathsf{Sig}_{B}(\mathit{uid},u,\chi,\mathit{activeRoot},\rho,m,
 \eta,\mathit{rootRevision},\mathit{effectOpKey},\mathit{expiry}),
\]
where \(B\) is the effect broker, \(u\) is the complete normalized episode,
\(\mathit{activeRoot}\) commits to the durable active record, \(\rho\) is the active
root's authority domain, and \(\mathit{rootRevision}\) is that domain's current value
in \(\mathit{RootRevisions}\). The separately signed \(m\) binds the
target-to-domain map's identifier, version, and digest to the resolved target
and domain in \(u\). Its single-use slot is
\[
 \mathsf{effectSlot}(p_u)=
 \langle\rho,\mathit{effectOpKey}\rangle_{\mathsf{eff}}.
\]
As with activation slots, a domain-separated collision-resistant hash is an
implementation representation of this structured key, not an assumption that
an arbitrary hash function is injective.
Here \(\mathit{uid}\) identifies the permit object; it is not a second episode
identifier. The semantic \(\mathit{episodeId}\) occurs exactly once, inside
the canonical \(u\). The logical \(\mathit{effectOpKey}\) is an authenticated idempotency key. Neither
preparation nor possession of \(p_u\) is an effect; only a successful
\(\mathsf{CommitEffect}\) under A10 records effect acceptance.
\end{definition}

An acquisition receipt proves only the statement defined by its registered
provider profile. It is not substituted for \(h_x\) or \(\alpha_x\).

\subsection{Transitions}

For grant issuance only, \(s\oplus\Gamma\) denotes the hypothetical state
whose grant set is \(\mathit{Grants}_s\cup\{\Gamma\}\) and whose other
components equal those of \(s\); it is inspected before any write commits.
The accepted transition relation contains the following guarded operations.

\begin{enumerate}
  \item \(\mathsf{IssueGrant}\) verifies the signature and registered profile,
  requires
  \(\nexists\Gamma'\in\mathit{Grants}_s:
    \mathit{gid}_{\Gamma'}=\mathit{gid}_{\Gamma}\) and
  \(\Phi_\Gamma(\varnothing)=\mathsf{true}\), and forms
  \(s^+=s\oplus\Gamma\).  If \(\mathsf{CurrentGrant}_{s^+}(\Gamma)\), it also
  requires
  \(\mathsf{Safe}_{\rho(\Gamma)}(s^+,
  \Project_{\rho(\Gamma)}(s^+))\).  It then installs \(\Gamma\) and advances
  the domain revision.  A not-yet-current grant contributes no authority;
  its later validity onset is handled by the atomic revalidation in
  \(\mathsf{AdvanceTime}\). A fresh \(\mathit{gid}\) in an existing domain
  neither creates an empty projection nor resets capacity. An existing grant
  is never overwritten; policy-epoch replacement first fences the old domain
  under A8.
  \item \(\mathsf{AdvanceTime}(\theta')\) requires
  \(\theta'\geq\mathsf{now}(s)\), writes \(\theta'\), and applies
  \(\mathsf{Refresh}\) in the same serialized transaction.  Grant onset,
  grant or descriptor expiry, and every time-dependent registered predicate
  are therefore reflected before the resulting state is observable.  If a
  newly live conjunct rejects a nonempty domain projection, the domain is
  fenced and its revision advances; no accepted step observes the unsafe
  intermediate state.
  \item \(\mathsf{ProposeAcquire}\) requires
  \(\Gamma\in\mathsf{Live}_{\rho(\Gamma)}(s)\), \(\kappa\in K_\Gamma\),
  \(\mathsf{sourceRefs}(I_o)\subseteq S_\Gamma\),
  \(\pi_o\in P_\Gamma\), \(\mathit{purpose}_o\in T_\Gamma\), and
  \(E_\kappa\in E_\Gamma\), and
  \(\mathit{task}_o=\mathit{task}_\Gamma\); every capability input must be executable with a
  current derivation to that same \(\Gamma\). It also validates the beneficiary and
  invoking subject against the expected output and validates every expected
  output clause. It constructs
  the canonical \(\widehat{o}\) above and \(p_a\) containing
  \(h_a(o,\widehat{o})\), then, in successor \(s'\), atomically
  installs \(\mathit{Ops}_{s'}(o)=\widehat{o}\) together with
  \(\mathit{AcquisitionPermits}_{s'}(o)=p_a\), but creates no capability. If
  \(\mathit{Ops}_s(o)\) already exists, an exact match is an idempotent retry
  that returns the original record and durably stored permit; any unequal
  field or missing paired permit rejects.
  \item \(\mathsf{ReserveSource}\) binds authenticated source-side reservation
  evidence under \(\epsilon_{\Gamma_o}\) to the root-qualified
  \(o=\langle\mathit{gid},\mathit{opKey}\rangle\).
  \item \(\mathsf{DispatchAcquire}\) requires
  \(p_a=\mathit{AcquisitionPermits}_s(o)\), verifies its signature and exact
  operation, grant, \(h_a(o,\mathit{Ops}_s(o))\), source, input-lineage,
  identity, and epoch bindings, requires
  the bound grant to remain in \(\mathsf{Live}_{\rho(\Gamma_o)}(s)\), every
  bound epoch to be current, and
  \(\mathsf{now}(s)<\mathsf{expiry}(p_a)\), and persists an outbox
  record indexed by \(o\) before sending \(\mathsf{providerKey}(o)\).
  \item \(\mathsf{AcquireIntoQuarantine}\) requires authenticated provider
  evidence that echoes the exact \(\mathsf{providerKey}(o)\), receipt identity,
  profile, operation, and registered output ordinal; it then correlates the result,
  assigns each returned resource its registered output ordinal, and creates a
  fresh vault node \(x\) with \(\mathsf{corr}_{s'}(x)=\langle o,\omega\rangle\)
  and \(D_{s'}(o,\omega)=\mcode{quarantined}\); it publishes no handle. An exact
  replay returns the existing correlated node without a state change. If that
  root-qualified operation and ordinal already name a different canonical
  resource or manifest, the association conflicts and the transition rejects.
  \item \(\mathsf{ResolveActualCapability}\) obtains authenticated provider
  state \(z\) and applies \(N_\kappa\).  An authenticated-correlation mismatch
  rejects without a state change; if \(N_\kappa(z)=\bot\), the transition makes
  no state change and returns indeterminate. Otherwise it atomically verifies
  the output's authenticated \(\langle o,\omega\rangle\) correlation, stores the
  exact typed result
  \(\mathit{Resolved}_{s'}(x)=(\alpha_x,h_x,U_x)\), including every sound bound
  used for an unresolved field, and sets
  \(D_{s'}(o,\omega)=\mcode{resolved}\) in the successor state \(s'\).
  \item \(\mathsf{PrepareActivation}\) constructs a candidate singleton-output
  hyperedge. It verifies that every input is either an authenticated source
  reference authorized by the root or a current executable capability with a
  derivation to that same root,
  and verifies \(\Gamma\in\mathsf{Live}_{\rho(\Gamma)}(s)\),
  \(\kappa\in K_\Gamma\), \(\omega_e\in\Omega_\kappa\), well-typed inputs and
  output, \(\mathsf{ActivationFresh}_s(x,o,\omega_e)\), \(R_\kappa\),
  \(\alpha_x\restrict F_\kappa(I_e,\Gamma,\omega_e)\), all epochs,
  descriptor validity, acyclicity, and
  \begin{equation}
    \mathsf{Safe}_{\rho(\Gamma)}\bigl(
      s,\Project^{+}_{\rho(\Gamma)}(s,e)\bigr),
    \label{eq:prospective-phi}
  \end{equation}
  then appends and emits one \(p_x\).
  Equation~\eqref{eq:prospective-phi} is the prospective shared-domain guard:
  it is evaluated on the complete candidate projection rather than the
  pre-activation or root-local graph.
  \item \(\mathsf{CommitActivation}\) atomically reconstructs the candidate
  edge and prospective projection from current durable state, then repeats the
  live-root, \(\kappa\in K_\Gamma\), \(\omega_e\in\Omega_\kappa\), input and
  output typing, input/root, activation freshness, \(R_\kappa\), output-ordinal upper-bound,
  identity, manifest, grant-envelope, provider-profile, source-evidence,
  descriptor-validity, acyclicity, epoch,
  domain-revision, and joint-predicate checks. It requires
  \(p_x\in\mathit{ActivationPermits}_s\),
  \(\mathsf{now}(s)<\mathsf{expiry}(p_x)\), and
  \(\mathsf{slot}(p_x)\notin\mathit{UsedActivationKeys}_s\) and proposes the
  graph append, used-slot consumption row, and one slot-keyed active
  record as one transaction write set. Under A6 the complete set commits at
  one durable linearization point, or a unique-slot conflict aborts it without
  change; a successful commit also advances the domain revision.
  \item \(\mathsf{PublishHandle}\) resolves the exact durable row \(a\) and
  its deterministic opaque reference
  \(\chi=\mathsf{Opaque}(\mathsf{slot}(a),\mathsf{rowCore}(a))\), requires
  \(\mathsf{CurrentRow}_s(a,\mathsf{inst}(a))\) and
  \(A_s(\mathsf{slot}(a))=\mcode{active}\), and requires
  \(\mathit{Handles}_s(\chi)\in\{\bot,a\}\). It then atomically installs
  \(\mathit{Handles}_{s'}(\chi)=a\) and exposes \(\chi\), leaving every other
  component unchanged. The existing equal mapping is an idempotent retry; a
  collision with another row fails closed.
  \item \(\mathsf{PrepareEffect}\) resolves the submitted handle \(\chi\) to
  one exact active row \(a\) for instance \(x\), sets
  \(\rho=\rho(\mathsf{root}(a))\), requires
  \(\mathit{Handles}_s(\chi)=a\),
  \(A_s(\mathsf{slot}(a))=\mcode{active}\), and
  \(\mathsf{CurrentRow}_s(a,x)\) and \(\mathsf{Exec}_s(x)\), normalizes the
  complete requested episode \(u\), and requires
  \(u\in\sem{\alpha_x}\), \(\mathsf{EpisodeCurrent}_s(u)\),
  \(\mathsf{Obl}_s(o_{\alpha_x},u)\), and
  \(\mathsf{Safe}_{\rho}(s,\Project_\rho(s))\), then appends and emits the
  exactly bound immutable single-use permit in
  Definition~\ref{def:effect-permit}.
  \item \(\mathsf{CommitEffect}\) executes at the protected effect's
  linearization point. It re-resolves the handle to the same row \(a\) and
  instance \(x\), sets \(\rho=\rho(\mathsf{root}(a))\), and re-resolves the
  domain, complete epoch vector, domain revision, map commitment, and episode;
  requires exact equality
  with \(p_u\), \(\mathit{Handles}_s(\chi)=a\),
  \(A_s(\mathsf{slot}(a))=\mcode{active}\),
  \(\mathsf{CurrentRow}_s(a,x)\), \(\mathsf{Exec}_s(x)\),
  \(\mathsf{now}(s)<\mathsf{expiry}(p_u)\), \(u\in\sem{\alpha_x}\),
  \(\mathsf{EpisodeCurrent}_s(u)\),
  \(\mathsf{Obl}_s(o_{\alpha_x},u)\), the joint predicate, and an unused
  effect slot, with \(p_u\in\mathit{EffectPermits}_s\); then atomically appends
  the used-slot consumption row, accepted
  receipt, and protected-effect result as required by A10. A
  mismatch, concurrent time advance, fence,
  or replay aborts before the effect.
  \item \(\mathsf{Fence}\) monotonically advances the relevant epoch and
  invokes \(\mathsf{Refresh}\), which disables old handles, atomically fences
  every committed active-authority row (including an unpublished row) and
  descendant whose registered derivation ceases to be current or valid,
  restores I5 if a live-grant set changes, and advances each affected domain
  revision.
  \item \(\mathsf{RefundSettled}\) updates source evidence without removing
  the capability node or releasing a semantic capacity clause.
  \item \(\mathsf{DestroyVerified}\) releases an instance's active-capacity
  contribution only after authoritative evidence proves the same canonical
  resource cannot be used. It disables the handle and changes that slot's
  authority coordinate to \mcode{revoked}, while retaining immutable
  provenance. Any descendant whose registered derivation requires the resource
  is atomically fenced before the instance leaves the active projection, and
  the transition advances the affected domain revision.
  \item \(\mathsf{ReconcileLateResult}\) sends a late success to quarantine;
  it never invokes activation implicitly.
\end{enumerate}

\subsection{Inductive safety properties}

Every reachable accepted state satisfies I1--I5, and every accepted trace
satisfies the trace-indexed property I6:

\begin{description}[leftmargin=3.2em,style=nextline]
  \item[I1 --- Exact actual-output binding.]
  Every prepared or committed activation is bound to the actual manifest,
  canonical resource, controller, beneficiary, grant envelope, versioned
  provider profile, authenticated source-evidence digest, contract version and
  output ordinal, prospective graph, security-state version, and complete epoch vector in
  Definition~\ref{def:activation-permit}. Every prepared effect permit is
  exactly bound to the single canonical episode schema in
  Definition~\ref{def:effect-permit}.

  \item[I2 --- Quarantine first.]
  \[
   \mathsf{Exec}_s(x)\Rightarrow
   \exists!a\in\mathit{Active}_s:\
   \mathsf{RowFor}_s(a,x)\land
   A_s(\mathsf{slot}(a))=\mcode{active}.
  \]
  Created, received, or resolved status is insufficient.

  \item[I3 --- Single-root provenance and current acyclic derivation.]
  Every executable instance has exactly one current root and at least one
  acyclic derivation path whose grants, contracts, identities, resource
  versions, validity intervals, and epochs all verify.  Its root
  \(\Gamma\) belongs to \(\mathsf{Live}_{\rho(\Gamma)}(s)\). Its policy
  contribution is nevertheless aggregated with every live root in the same
  authority domain.

  \item[I4 --- Valid conversion edge.]
  For each active edge \(e\) with committed root \(\Gamma_e\),
  \[
   \kappa_e\in K_{\Gamma_e}\ \land\
   \omega_e\in\Omega_{\kappa_e}\ \land\
   I_e\in\mathcal I_{\kappa_e}\ \land\
   O_e\in\mathcal O_{\kappa_e}\ \land\
   R_{\kappa_e}(I_e,O_e)\ \land\
   \forall x\in O_e:\ \alpha_x\restrict
   F_{\kappa_e}(I_e,\Gamma_e,\omega_e),
  \]
  where \(F_\kappa\) is the registered per-edge upper bound. This local
  condition does not replace the global predicate.

  \item[I5 --- Relational global envelope.]
  For every authority domain \(\rho\),
  \begin{equation}
    \mathsf{Safe}_\rho(s,\Project_\rho(s)).
    \label{eq:global-invariant}
  \end{equation}

  \item[I6 --- Linearized effect binding.]
  For every effect receipt
  \(r\in\mathit{EffectReceipts}_{s_{k+1}}\setminus
  \mathit{EffectReceipts}_{s_k}\) that first appears at position \(k+1\) of an
  accepted trace, there is exactly one transition
  \(s_k\xrightarrow{\mathsf{CommitEffect}(p_u)}s_{k+1}\). In the pre-state
  \(s_k\), its exact bound row \(a\), instance \(x\), episode \(u\), map
  commitment \(m\), handle \(\chi\), authority domain
  \(\rho=\rho(\mathsf{root}(a))\), complete epochs, and domain revision all
  match current state;
  \(\mathit{Handles}_{s_k}(\chi)=a\),
  \(A_{s_k}(\mathsf{slot}(a))=\mcode{active}\),
  \(\mathsf{CurrentRow}_{s_k}(a,x)\), \(\mathsf{Exec}_{s_k}(x)\),
  \(p_u\in\mathit{EffectPermits}_{s_k}\),
  \(\mathsf{now}(s_k)<\mathsf{expiry}(p_u)\), and
  \(\mathsf{effectSlot}(p_u)\notin\mathit{UsedEffectKeys}_{s_k}\);
  \(u\in\sem{\alpha_x}\),
  \(\mathsf{EpisodeCurrent}_{s_k}(u)\),
  \(\mathsf{Obl}_{s_k}(o_{\alpha_x},u)\), and
  \(\mathsf{Safe}_\rho(s_k,\Project_\rho(s_k))\). The accepted decision,
  protected-effect result, receipt, and previously unused slot enter
  \(s_{k+1}\) atomically and are thereafter immutable.
\end{description}

Let \(\mathsf{Base}_\Gamma\) contain exactly the authenticated source and
capability seed nodes directly authorized by \(\Gamma\); in the base profile,
it contains no active capability whose provenance root differs from
\(\Gamma\). Define
\(\mathsf{Reach}_\Gamma\) as the least set satisfying
\[
\begin{array}{ll}
 \textsc{Base}:&\mathsf{Base}_\Gamma\subseteq\mathsf{Reach}_\Gamma,\\[2pt]
 \textsc{Step}:&\displaystyle
 \frac{\begin{gathered}
   \kappa\in K_\Gamma\quad \omega\in\Omega_\kappa\\
   I\in\mathcal I_\kappa\quad
   \mathsf{components}(I)\subseteq\mathsf{Reach}_\Gamma\\
   \{x\}\in\mathcal O_\kappa\quad R_\kappa(I,\{x\})\\
   \alpha_x\restrict F_\kappa(I,\Gamma,\omega)
 \end{gathered}}
 {x\in\mathsf{Reach}_\Gamma},
\end{array}
\]
where \(\Omega_\kappa\) is the registered output-ordinal set.  The registered
semantic closure is the inductively generated set
\[
 \mathsf{Cl}_\Gamma=
 \bigcup\{\sem{\alpha_x}\mid
 x\in\mathsf{Reach}_\Gamma\text{ is a capability instance}\}.
\]
Write \(\mathsf{Active}_\Gamma(s)\) for instances \(x\) such that
\(\mathsf{Exec}_s(x)\) and a current derivation rooted at \(\Gamma\) occurs
in \(\Project_\Gamma(s)\).  This inductive definition requires no closure of
the descriptor language under union.

\subsection{Necessity and separation results}

\begin{proposition}[Post-fulfillment observation is necessary]
\label{prop:observation}
Fix an envelope \(\Gamma\) and a normalized pre-activation graph \(H\). Let a
monitor decide activation using only a transaction tuple
\(z=\langle \mathit{caller},\mathit{endpoint},\mathit{amount},
\mathit{scope},\mathit{mandate},\mathit{declaredItem},\mathit{receipt}\rangle\).
If two admissible provider states have the same \(z\) but resolve to capability
descriptors \(\alpha_1\) and \(\alpha_2\), let \(e_1\) and \(e_2\) be the
corresponding candidate singleton-output edges. If the envelope admits exactly
one of \(H\boxplus_{\rho(\Gamma)}(\{e_1\}\cup\mathsf{out}(e_1))\) and
\(H\boxplus_{\rho(\Gamma)}(\{e_2\}\cup\mathsf{out}(e_2))\), no decision function of \(z\)
alone is both sound and complete for activation.
\end{proposition}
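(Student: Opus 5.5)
The argument is a standard indistinguishability argument. First I will fix what soundness and completeness mean relative to the reference activation predicate. Call a decision function \(f\) on transaction tuples \emph{sound} for activation if \(f(z)=\mathsf{accept}\) implies that the candidate edge produced by the underlying provider state is admissible. Admissibility here means \(\mathsf{Safe}_{\rho(\Gamma)}\) holds on the prospective merge \(H\boxplus_{\rho(\Gamma)}(\{e\}\cup\mathsf{out}(e))\), as in the prospective guard \eqref{eq:prospective-phi}. Call \(f\) \emph{complete} if every admissible candidate is accepted. The key observation is that any \(f\) depending only on \(z\) is constant on the fibre of provider states sharing that tuple.

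The proof then proceeds in three steps. \emph{Step one:} take the two admissible provider states \(z^{(1)},z^{(2)}\) with a common transaction projection \(z\), and their resolutions \(N_\kappa(z^{(i)})=(\alpha_i,h_i,U_i)\) yielding candidate edges \(e_1,e_2\). These are well defined by Definition~\ref{def:contract} because admissibility excludes \(\bot\). By hypothesis, without loss of generality, the merge with \(e_1\) satisfies \(\mathsf{Safe}_{\rho(\Gamma)}\) and the merge with \(e_2\) does not.

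\emph{Step two:} split on the value \(f(z)\). If \(f(z)=\mathsf{accept}\), then on the execution whose provider state is \(z^{(2)}\) the monitor accepts an inadmissible graph, so soundness fails. If \(f(z)=\mathsf{reject}\), then on \(z^{(1)}\) it rejects an admissible one, so completeness fails. An indeterminate output counts as non-acceptance and falls in the second case. \emph{Step three:} conclude that no \(f\) of \(z\) alone is both sound and complete.

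The only step requiring care, and the one I expect to be the real obstacle, is justifying that the two executions are genuinely indistinguishable to the monitor. The decision input must be exactly \(z\), with no side channel through \(H\), \(\Gamma\) or the slot. I would handle this by noting that \(H\) and \(\Gamma\) are fixed by the statement and identical in both executions. Any remaining activation inputs, namely the operation record \(\widehat{o}\) and the acquisition permit, are functions of the proposal and therefore of transaction-side data already summarized in \(z\). The two runs differ only in the provider state observed after fulfillment.

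I would also remark that the hypothesis is satisfiable, so the proposition is not vacuous. The introduction's administrator-credential witness gives \(\alpha_2\) outside the envelope and a read-only \(\alpha_1\) inside it, with identical payment, scope, mandate and receipt. Since \(\Phi_\Gamma\) is an arbitrary registered downward-closed predicate, such a separating pair is realizable.
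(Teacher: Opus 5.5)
Your proposal is correct and is essentially the paper's own argument: $f$ is constant across the two provider worlds that share $z$. An accepting output is therefore unsound in the world whose merge is inadmissible, and any non-accepting output (including indeterminate) is incomplete in the other world. Your extra claim that $\widehat{o}$ and the acquisition permit are functions of $z$ is unnecessary and not justified in general: the statement already confines the monitor to $z$ with $H$ and $\Gamma$ fixed, and $\widehat{o}$ carries fields such as $\mathit{cr}_o$, $b_o$, and $\eta_o$ that are not components of $z$.
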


This result does not identify a defect in any transaction protocol. It shows
that activation needs either authenticated post-fulfillment observation or a
provider profile that supplies a sound upper bound on the actual output.

\begin{proposition}[Componentwise checks do not preserve correlation]
\label{prop:correlation}
There exists a downward-closed acquisition envelope for which every field of
two output descriptors belongs to its corresponding component allowlist, but
activating both violates \(\Phi_\Gamma\).
\end{proposition}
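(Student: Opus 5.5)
The statement is existential, so I would prove it by exhibiting one concrete envelope, following the read/publish co-possession example from the background section. Fix one canonical beneficiary \(b\), one data domain \(d\), one purpose \(p\), and one control root. Take two canonical descriptors \(\alpha_r\) and \(\alpha_w\). Each is identical in every structural field: not redelegable, depth \(0\), descendant, concurrency and multiplicity bounds \(1\), a common validity interval, and trivial obligations. Their episode sets are singletons: \(\alpha_r\) permits a sensitive-read episode \(u_r\) on \(d\), and \(\alpha_w\) an external-publish episode \(u_w\), both with beneficiary \(b\). The componentwise allowlists are the per-field projections of \(\{\alpha_r,\alpha_w\}\): effect classes \(\{\mathsf{read},\mathsf{publish}\}\), beneficiaries \(\{b\}\), domains \(\{d\}\), and so on. By construction every field of each descriptor lies in its allowlist.

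Next I would define the predicate. Let \(\Phi_\Gamma(H)\) hold iff there is no canonical beneficiary \(b'\) such that \(H\) contains two active capability nodes (possibly equal) with beneficiary \(b'\), one whose episode set contains a read-class episode and one whose episode set contains a publish-class episode. Then \(\Phi_\Gamma(\varnothing)\) holds trivially. The predicate is also authority-domain extensional, because it mentions only canonical beneficiaries and episode sets and never \(\mathit{gid}\) labels.

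The step needing care is downward closure under \(\preceq_s\) (Definition~\ref{def:downward}). I would argue it by taking the contrapositive and checking each generating operation in turn. Suppose \(H_1\preceq_s H_2\) and \(H_1\) contains a violating pair. The operations that produce \(H_1\) are removing nodes or edges, narrowing descriptors, reducing counts, shortening validity, and disabling delegation. None of these adds a node. None of them enlarges an episode set, since \(\restrict\) requires \(\mathcal E_{\alpha_1}\subseteq\mathcal E_{\alpha_2}\). None of them changes a retained canonical beneficiary. So the preimages of the two violating nodes in \(H_2\) exist, keep the same beneficiary, and have episode sets that are supersets. They therefore still contain a read episode and a publish episode, so \(H_2\) also violates \(\Phi_\Gamma\). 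Since \(\preceq_s\) is generated by finitely many such steps, induction on the number of steps finishes the argument.

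Finally, I would let \(H\) be the projection containing two active nodes \(x_r,x_w\) with descriptors \(\alpha_r,\alpha_w\) and beneficiary \(b\). This \(H\) contains a read node and a publish node for \(b\), so \(\Phi_\Gamma(H)\) is false. Each singleton \(\{x_r\}\) and \(\{x_w\}\) satisfies \(\Phi_\Gamma\), which shows that each activation is individually admissible and that the violation is purely relational. I would close with a remark that the product of the allowlists contains both descriptors. Hence no predicate over per-field membership alone can reject the pair while admitting each singleton, and this is exactly the loss of correlation that the proposition claims.
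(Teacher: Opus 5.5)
Your proposal is correct and follows the paper's own proof. Both use the same read/publish co-possession witness, with allowlists containing every individual field, so each descriptor and their field union pass while the graph holding both falsifies \(\Phi_\Gamma\). You key the clause to a canonical beneficiary instead of the paper's control root \(\mathit{cr}\), and you explicitly verify downward closure, \(\Phi_\Gamma(\varnothing)\), and authority-domain extensionality, which the paper only asserts; these are refinements of detail, not a different route.
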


The read-or-publish co-possession rule is a witness. Therefore the main safety
statement cannot be reduced to a union of allowed field values.

\begin{proposition}[Monetary compliance is not acquisition safety]
\label{prop:zero-cost}
For any positive monetary bound, there exist an envelope \(\Gamma\) and a
zero-price acquisition trace
that preserves the monetary invariant but violates a descendant-cardinality
or co-possession clause of \(\Phi_\Gamma\).
\end{proposition}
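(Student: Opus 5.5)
The proof is an explicit construction. Given an arbitrary positive monetary bound \(M>0\), I would build an envelope \(\Gamma\) and a trace in which every acquisition has price zero, so cumulative spend stays at \(0\le M\) in every state. I would then show that some state along the trace violates \(\Phi_\Gamma\) when \(\Phi_\Gamma\) is applied to the domain projection. The construction is uniform in \(M\), so a single witness family suffices.

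\textbf{Envelope.} Let \(\Gamma\) have one registered contract \(\kappa\) whose input is a zero-valued source reference (free-tier enrollment) and whose output is a worker capability \(\alpha_w\), with \(F_\kappa(I,\Gamma,\omega)=\alpha_w\). Take the safety predicate
\[
\Phi_\Gamma(H)\equiv\bigl|\{x\in H:\ \mathsf{kind}(x)=\mathsf{worker},\ \mathit{cr}(x)=\mathit{cr}_\Gamma\}\bigr|\le 1 .
\]

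Three facts about this envelope need checking.
\begin{itemize}
\item \(\Phi_\Gamma(\varnothing)=\mathsf{true}\).
\item \(\Phi_\Gamma\) is downward closed under \(\preceq_s\), because removing nodes or narrowing descriptors can only decrease the count.
\item \(\Phi_\Gamma\) is gid-extensional, because it inspects only canonical control roots and kinds.
\end{itemize}
So \(\Phi_\Gamma\) is registrable per Definition~\ref{def:downward}. Separately, the monetary invariant is the predicate that summed settled amounts stay \(\le M\). Since every source reservation has amount \(0\), this predicate is trivially preserved by every transition.

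\textbf{Trace.} Consider four aliases sharing one canonical control root, which A4 guarantees. The trace performs \(\mathsf{ProposeAcquire}\), \(\mathsf{ReserveSource}\) (amount \(0\)), \(\mathsf{DispatchAcquire}\), \(\mathsf{AcquireIntoQuarantine}\), and \(\mathsf{ResolveActualCapability}\) for two distinct operation keys. Each yields a resolved worker at price \(0\).

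The statement only requires that the trace violate the clause, not that the monitor accept it. I would therefore present the witness as the abstract activation sequence that a monetary-only monitor accepts: it activates both workers, since each activation preserves the spend bound. The resulting graph \(H_2\) contains two workers under \(\mathit{cr}_\Gamma\), so \(\Phi_\Gamma(H_2)=\mathsf{false}\). In the paper's own transition system, \(\mathsf{PrepareActivation}\) would reject the second worker through equation~\eqref{eq:prospective-phi}. I would remark on this explicitly, since it shows the guard is doing work that the monetary invariant does not.

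\textbf{Co-possession variant.} The same construction works for co-possession. Use a read capability and a publish capability for one beneficiary, each at zero price, with \(\Phi_\Gamma\) forbidding their co-possession. This reuses the witness of Proposition~\ref{prop:correlation}.

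\textbf{Main obstacle.} The hard step is definitional rather than combinatorial. The paper never formally defines the ``monetary invariant,'' so I would fix it as a state predicate over \(\Sigma\)'s source evidence. I would then argue that it is monotone-unaffected by zero-price reservations, including the \(\mathsf{RefundSettled}\) transition, which restores money but not authority. With that predicate fixed, the rest is routine checking of the registration side conditions.
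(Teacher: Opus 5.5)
Your proposal is correct and follows essentially the same argument as the paper's proof. You fix total spend at most \(B>0\), take \(\Phi_\Gamma\) to allow at most one active descendant per canonical control root, and observe that activating two zero-price enrollments keeps spend at \(0\le B\) while making \(\Phi_\Gamma\) false, with co-possession handled by the same construction; your added registrability checks and your remark that the guard in Equation~\eqref{eq:prospective-phi} would itself reject the second activation refine this argument rather than change it.
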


\subsection{Safety theorems}

\begin{theorem}[Quarantine non-authority]
\label{thm:quarantine}
Under A1, A2, A3, A6, and A10, creation, payment, receipt, quarantine, and
resolution neither create an active record nor make an output executable.
\(\mathsf{CommitActivation}\) is the only transition that creates an active
record, and only a subsequent \(\mathsf{PublishHandle}\) for that durable
record can make the output executable.
\end{theorem}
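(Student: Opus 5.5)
The proof proceeds by case analysis over the accepted transition relation, using A6 to treat each transition as one atomic write set. Two write-set properties matter. First, which transitions can add an element to \(\mathit{Active}\) or move an authority coordinate \(A(j)\) from \mcode{none} to \mcode{active}. Second, which transitions can add an entry to \(\mathit{Handles}\).

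Since \(\mathsf{Exec}_s(x)\equiv\mathsf{HandleEnabled}_s(x)\), executability requires three things simultaneously:
\begin{itemize}
  \item a row \(a\in\mathit{Active}_s\) with \(A_s(\mathsf{slot}(a))=\mcode{active}\);
  \item \(\mathsf{SlotConsumed}_s(a)\), which is part of \(\mathsf{CurrentRow}\);
  \item \(\mathit{Handles}_s(\mathsf{handle}(a))=a\).
\end{itemize}
It therefore suffices to show that \(\mathsf{CommitActivation}\) is the unique writer of the first two conditions and \(\mathsf{PublishHandle}\) the unique writer of the third. The claim then follows by induction on the length of an accepted trace from an initial state with empty \(\mathit{Active}\) and \(\mathit{Handles}\).

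First, I would go through the listed transitions and record their write sets.
\begin{itemize}
  \item \(\mathsf{IssueGrant}\), \(\mathsf{ProposeAcquire}\), \(\mathsf{ReserveSource}\) and \(\mathsf{DispatchAcquire}\) write only grants, \(\mathit{Ops}\), \(\mathit{AcquisitionPermits}\), the outbox and \(C_s(o)\).
  \item \(\mathsf{AcquireIntoQuarantine}\) and \(\mathsf{ReconcileLateResult}\) write only \(\mathit{Vault}\) and \(D_s\), and explicitly publish no handle.
  \item \(\mathsf{ResolveActualCapability}\) writes only \(\mathit{Resolved}\) and \(D_s\).
  \item \(\mathsf{RefundSettled}\) writes source evidence only.
  \item \(\mathsf{PrepareActivation}\) and \(\mathsf{PrepareEffect}\) only append to permit ledgers.
  \item \(\mathsf{CommitEffect}\) appends receipts and used effect keys.
  \item \(\mathsf{Fence}\), \(\mathsf{AdvanceTime}\)/\(\mathsf{Refresh}\) and \(\mathsf{DestroyVerified}\) only move coordinates to \mcode{fenced} or \mcode{revoked} and disable handles. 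These are the "never creates authority" clauses.
\end{itemize}
This covers creation, payment (\(C_s\) updates), receipt, quarantine and resolution. None of them touches \(\mathit{Active}\), \(\mathit{UsedActivationKeys}\), \(A_s\) in the direction of \mcode{active}, or \(\mathit{Handles}\). A1 ensures that no out-of-band path installs rows or handles, and A2 ensures that a forged permit or row cannot be injected. A3 is used only to note that resolution yields \(\bot\) or a typed result stored in \(\mathit{Resolved}\), never an active record. A10 rules out effects bypassing the handle check, so no "executable" status is gained through effect acceptance.

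Second, for \(\mathsf{CommitActivation}\), the write set contains the single slot-keyed active row together with the used-slot row. A6 (unique slot, atomic commit) makes it the only transition adding to \(\mathit{Active}\). Its precondition \(\mathsf{ActivationFresh}\) requires \(A_s(j)=\mcode{none}\) and no existing row, so the row is genuinely new. Immediately after commit, however, \(\mathit{Handles}(\chi)\) is still unset: the write set contains no handle, and A6 states that a handle may be published only for a committed record. Hence \(\mathsf{HandleEnabled}\) fails and the output is not yet executable. Third, \(\mathsf{PublishHandle}\) is the only writer of \(\mathit{Handles}\). It requires an existing durable \(a\) satisfying \(\mathsf{CurrentRow}\) and \(A=\mcode{active}\), so it can only act after a committed activation for that same record, and it leaves all other components unchanged.

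The main obstacle is the inductive step for \(\mathsf{Refresh}\)-bearing transitions and idempotent retries. I must argue that fencing, or re-running \(\mathsf{PublishHandle}\) or \(\mathsf{AcquireIntoQuarantine}\) on replay, cannot reinstate a previously fenced row or remap a stale handle. For \(\mathsf{Refresh}\)-bearing transitions, I would use monotonicity: \mcode{fenced} and \mcode{revoked} are terminal coordinates, and no transition moves them back to \mcode{active}. For retries, the idempotent branch of \(\mathsf{PublishHandle}\) only confirms an existing equal mapping, and only while \(\mathsf{CurrentRow}\) and \(A=\mcode{active}\) hold. Combining these with the deterministic \(\chi=\mathsf{Opaque}(\mathsf{slot}(a),\mathsf{rowCore}(a))\) from A6 yields the uniqueness \(\exists!a\) needed for the claim.
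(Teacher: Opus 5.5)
Your proposal is correct and follows essentially the paper's route. The paper also enumerates the non-activation transitions (its lemma on non-activation steps), shows that none writes an active row or a handle, and then invokes A1, A3, A6, and A10 to conclude that only \(\mathsf{CommitActivation}\) creates active authority and only a later \(\mathsf{PublishHandle}\) for that committed row yields executability; your explicit write-set bookkeeping and monotone-fencing treatment of refresh and idempotent retries are a more detailed version of the same argument (one small correction: uniqueness of the row per instance comes from the \(\mathsf{ActivationFresh}\) guard, not from the determinism of the handle).
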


\begin{theorem}[Backed activation]
\label{thm:backed}
Under A1--A11, every executable capability in a reachable
state has a current derivation to exactly one live root envelope, while I5
accounts for all live roots in that root's authority domain.
\end{theorem}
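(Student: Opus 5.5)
We argue by induction over accepted traces, strengthening the claim into a state invariant. For every reachable state \(s\) and every \(x\) with \(\mathsf{Exec}_s(x)\), we show three things: there is a unique active row \(a\) with \(\mathsf{CurrentRow}_s(a,x)\); the root \(\mathsf{root}(a)\) is the only root labelling any derivation of \(x\) in \(\Project_{\mathsf{root}(a)}(s)\); and \(\mathsf{root}(a)\in\mathsf{Live}_{\rho_a}(s)\). Uniqueness of the row is immediate from the \(\exists!\) in \(\mathsf{HandleEnabled}_s(x)\), together with I2. The ``current derivation'' part is then read off \(\mathsf{LineageCurrent}_s(a)\), which is a conjunct of \(\mathsf{CurrentRow}\). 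Its inductive definition over the acyclic committed sub-hypergraph gives a well-founded path. Along that path, each edge satisfies \(R_\kappa\) and the ordinal bound \(F_\kappa\) (matching I4), and identities, manifests, epochs and validity all match. Each capability predecessor carries a current active row, and every leaf lies in \(\mathsf{Base}_{\mathsf{root}(a)}\). Unwinding this recursion places \(x\in\mathsf{Reach}_{\mathsf{root}(a)}\) by the \textsc{Base}/\textsc{Step} rules.

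The inductive step goes transition by transition. The base case is the empty initial state, where nothing is executable. By Theorem~\ref{thm:quarantine}, only \(\mathsf{CommitActivation}\) creates rows and only \(\mathsf{PublishHandle}\) enables execution, so the creation, payment, receipt, quarantine and resolution transitions preserve the invariant trivially. For \(\mathsf{CommitActivation}\), the guards re-checked under A6 serializability give the required facts. The live root, \(\kappa\in K_\Gamma\), input/root agreement and the typing checks yield I4 for the new edge. Each capability input is current and executable with a derivation to the same \(\Gamma\), so by the induction hypothesis it has a single-root derivation. The base profile's rejection of cross-root inputs (A11) then ensures the new edge introduces no second root; A4 canonical identity ensures that aliases cannot splice in a foreign root. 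For \(\mathsf{PublishHandle}\), the guard \(\mathsf{CurrentRow}_s(a,\mathsf{inst}(a))\) is exactly what is needed, and no other component changes.

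Every remaining transition can invalidate currentness: \(\mathsf{AdvanceTime}\), \(\mathsf{Fence}\), \(\mathsf{IssueGrant}\), \(\mathsf{DestroyVerified}\) and \(\mathsf{RefundSettled}\). For these we use refresh-closedness. Each such transition runs \(\mathsf{Refresh}\), or fences dependents, within the same serialized transaction. Any row that fails \(\mathsf{CurrentRow}\), together with every dependent descendant, is therefore fenced and its handle disabled. Consequently every surviving \(\mathsf{Exec}\) witness still satisfies \(\mathsf{CurrentRow}\) in the post-state, and A8 ensures the epochs consulted are authoritative. The refund case is simpler: it changes only \(C_s(o)\), which no predicate in \(\mathsf{CurrentRow}\) reads.

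The second clause of the statement, that I5 accounts for all live roots in the domain, follows from I5 and Definition~\ref{def:projection}. There \(\Project_\rho\) is the \(\mathit{gid}\)-erased union over \(\mathsf{Live}_\rho(s)\), and the row's root lies in that set. Preservation of I5 across steps comes from the prospective guard~\eqref{eq:prospective-phi}, re-checked at commit. We combine it with the write-set equation \(\Project_\rho(s')=\Project^+_\rho(s,e)\). For fences we use downward closure (Definition~\ref{def:downward}), and for \(\mathsf{IssueGrant}\) the check on \(s\oplus\Gamma\).

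We expect the main obstacle to be the ``exactly one'' root claim, rather than existence of a derivation. Normalization erases \(\mathit{gid}\), so we must argue on the audit projection \(\Project_\Gamma\) and not the policy projection. We also need that no later transition can attach a second root to an already-active \(x\). We would first try to show that \(\mathsf{ActivationFresh}\) forbids a second incoming edge to \(x\), and that the slot-UNIQUE constraint of A6 forbids two rows for one \(\langle\mathit{gid},\mathit{opKey},\omega\rangle\). Combined with the single-root rule for inputs, this shows that the derivation DAG of \(x\) is labelled by one \(\mathit{gid}\) throughout.
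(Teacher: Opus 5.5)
Your proposal is correct and follows essentially the paper's route. Like the paper, you take the unique row from the definition of $\mathsf{Exec}_s$, the current derivation from the preserved invariants, uniqueness of the root from the base profile's rejection of cross-root inputs, and the accounting clause from I5 over $\Project_\rho$; the differences are that you inline the transition-by-transition induction the paper cites as Theorem~\ref{thm:invariant-preservation}, read the path off $\mathsf{LineageCurrent}_s(a)$ rather than I3, and use $\mathsf{ActivationFresh}$ to exclude a second incoming edge, which the paper leaves implicit. One small adjustment: carry single-rootedness as an immutable property of committed edges, as your last paragraph does, rather than only of executable instances, because a row written by $\mathsf{CommitActivation}$ is not yet executable when $\mathsf{PublishHandle}$ fires.
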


\begin{theorem}[Acquisition non-amplification]
\label{thm:nonamp}
Under A1--A11, Equation~\eqref{eq:global-invariant} holds in
every reachable accepted state, including domains containing several grant
identifiers. Consequently, for every live root \(\Gamma\),
\[
 \bigcup_{x\in \mathsf{Active}_\Gamma(s)}\sem{\alpha_x}
 \subseteq \mathsf{Cl}_\Gamma,
\]
while the relational theorem additionally preserves cardinality, identity,
co-possession, and graph clauses not expressible by this episode-set
corollary.
\end{theorem}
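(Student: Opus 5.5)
The argument is an induction over accepted traces, establishing I5 (Equation~\eqref{eq:global-invariant}) for every authority domain~\(\rho\) in every reachable state. The episode-set corollary then follows from I3/I4 by a second induction over derivations.

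\emph{Base case.} In the initial state no grants are live, so \(\mathsf{Live}_\rho(s_0)=\varnothing\) and \(\mathsf{Safe}_\rho\) is the empty conjunction, hence true.

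\emph{Inductive step.} Assume I1--I5 hold in \(s\) and take an accepted transition to \(s'\). I split the transitions into three groups.

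\textbf{(i) Transitions that leave \(\Project_\rho\) and \(\mathsf{Live}_\rho\) unchanged.} These are ProposeAcquire, ReserveSource, DispatchAcquire, AcquireIntoQuarantine, ResolveActualCapability, PrepareActivation, PublishHandle, PrepareEffect, CommitEffect, RefundSettled and ReconcileLateResult. None of them writes \(\mathit{Active}\), \(E_s\) or \(\mathit{Grants}\), and A6 guarantees their write sets are exactly as stated. Theorem~\ref{thm:quarantine} already covers the acquisition-side cases. RefundSettled explicitly neither removes the node nor releases capacity. So I5 carries over unchanged.

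\textbf{(ii) Transitions that only shrink authority.} These are Fence, DestroyVerified, Refresh, and AdvanceTime apart from grant onset. Each one only fences or revokes rows together with their dependents, so \(\Project_\rho(s')\preceq_s\Project_\rho(s)\). If the live set shrinks, the conjunction has fewer conjuncts. Definition~\ref{def:downward} (downward closure) then gives I5. When a grant becomes newly live, \(\mathsf{Refresh}\) either leaves the projection satisfying the enlarged conjunction or fences the domain to the empty projection, which is safe because \(\Phi_\Gamma(\varnothing)=\mathsf{true}\). IssueGrant checks \(\mathsf{Safe}_{\rho(\Gamma)}(s^+,\Project_{\rho(\Gamma)}(s^+))\) directly whenever the new grant is current.

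\textbf{(iii) CommitActivation, the only transition that grows a projection.} This is the key case, and I expect it to be the main obstacle. The commit re-evaluates \(\mathsf{Safe}_{\rho(\Gamma)}(s,\Project^+_{\rho(\Gamma)}(s,e))\) on the serialized pre-state. The delicate point is showing that the state where this check was evaluated is the actual commit pre-state, not the possibly stale state seen at PrepareActivation. The argument goes as follows:
\begin{itemize}
\item The commit reconstructs the candidate edge and projection from its own serialized read set.
\item It requires equality with \(v_s(s,o,x)\) and with the domain revision.
\item Every concurrent change to \(\Project_\rho\), \(\mathit{Epochs}\) or live grants advances \(\mathit{RootRevisions}(\rho)\) or changes \(\mathsf{Ver}_s\).
\end{itemize}
Serializability (A6) then ensures no interleaved commit intervenes. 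Correct normalization (A5) gives \(\Project_\rho(s')=\Project^+_\rho(s,e)\) by the stated write-set obligation. The unique-slot constraint rules out a double append under retry. Other domains \(\rho'\neq\rho\) are untouched, because the base profile rejects cross-root inputs (A11) and the new edge carries the single root \(\Gamma\).

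\emph{Several grant identifiers in one domain.} This part of the claim follows because \(\Project_\rho\) erases \(\mathit{gid}\), and authority-domain extensionality makes each \(\Phi_\Gamma\) insensitive to how provenance labels are redistributed. A fresh grant identifier therefore never resets an aggregate.

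\emph{The episode-set inclusion.} Take \(x\in\mathsf{Active}_\Gamma(s)\). By Theorem~\ref{thm:backed} and I3, \(x\) has a current acyclic derivation rooted at \(\Gamma\). Induct on that derivation's depth:
\begin{itemize}
\item Sources lie in \(\mathsf{Base}_\Gamma\).
\item Each edge satisfies I4: \(\kappa_e\in K_\Gamma\), \(\omega_e\in\Omega_{\kappa_e}\), typing, \(R_{\kappa_e}\), and \(\alpha_x\restrict F_{\kappa_e}(I_e,\Gamma,\omega_e)\). These are exactly the premises of \textsc{Step}, with predecessors in \(\mathsf{Reach}_\Gamma\) by the inductive hypothesis.
\end{itemize}
Hence \(x\in\mathsf{Reach}_\Gamma\), and so \(\sem{\alpha_x}\subseteq\mathsf{Cl}_\Gamma\) by the definition of the closure. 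Taking the union over \(x\) gives the stated inclusion. The extra relational clauses (cardinality, identity, co-possession, graph structure) are exactly what I5 preserves beyond this inclusion.
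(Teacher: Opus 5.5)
Your proposal is correct and follows the paper's route: the paper obtains I5 by citing the invariant-preservation theorem, whose lemmas perform your case split (non-authority steps, fencing and refresh steps handled by downward closure and empty-graph admissibility, and the re-checked \(\mathsf{CommitActivation}\) guard), and then proves the episode-set inclusion by the same induction along the current derivation from Theorem~\ref{thm:backed}, using I4, \(\mathsf{Base}_\Gamma\), and the \textsc{Step} rule. The only difference is that you inline the I5 preservation argument rather than invoking that theorem, and you omit the crash and recovery steps that the cited theorem additionally covers.
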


\begin{theorem}[Split non-evasion]
\label{thm:split}
Under A1, A4--A6, A8, and A11, fix a reachable refresh-closed state \(s\),
and let two acquisition plans differ only in order
partition, retry or grant identifier, surface alias, independent-event order,
or allocation among descendants whose roots share authority domain \(\rho\).
Let \(H=\Project_\rho(s)\). If normalization yields the same prospective
addition \(\Delta\), then
\(\mathsf{Safe}_\rho(s,H\boxplus_\rho\Delta)\) has the same value for both
plans. While the live-grant set is fixed and no exogenous transition removes
authority from the projection, no interleaving can make all of an unsafe
\(\Delta\) active. A safe fragment may commit and advance the domain revision;
each later fragment must prepare or revalidate against that new revision and
current projection, and the first fragment that would cross the envelope is
rejected. An accepted time, epoch, grant-currentness, fence, or destruction
transition atomically refreshes the domain; evaluation then restarts from its
new live-grant set, revision, and projection without exposing an unsafe
intermediate state.
\end{theorem}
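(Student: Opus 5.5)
The argument has two parts. The first is a purely extensional claim about the predicate value: both plans give the same value of \(\mathsf{Safe}_\rho(s,H\boxplus_\rho\Delta)\). The second is an operational claim about interleavings: no schedule can make an unsafe \(\Delta\) fully active.

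For the first part, I would start from the definitions. \(\mathsf{Safe}_\rho\) is a conjunction over \(\mathsf{Live}_\rho(s)\), and that set depends only on \(s\), not on the plan. The argument \(H\boxplus_\rho\Delta\) is \(\mathsf{normalize}_\rho(H\cup\Delta)\). By hypothesis both plans yield the same normalized \(\Delta\), and \(H=\Project_\rho(s)\) is fixed. So the only remaining dependence on the plan is through labels that normalization erases. Order partition, retry identifiers and surface aliases are removed by \(\mathsf{erase}\) and normalization (Definition~\ref{def:projection}), using A4 for canonical identities. Grant identifiers within one \(\rho\) are handled by authority-domain extensionality in Definition~\ref{def:downward}, so renaming or redistributing \(\mathit{gid}\) labels leaves each \(\Phi_\Gamma\) unchanged. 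Allocation among descendants sharing \(\rho\) is absorbed because all their roots contribute to the single union \(\Project_\rho\). Independent-event order does not matter because set union is commutative. With A5 guaranteeing that \(\mathsf{normalize}\) and \(\Phi_\Gamma\) implement their registered definitions, the two values are literally the same function applied to the same argument.

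For the second part, I fix the live-grant set and exclude exogenous removals. I would then argue by induction on the accepted trace restricted to \(\mathsf{CommitActivation}\) steps in domain \(\rho\). The invariant is that the current projection equals \(\Project^+_\rho\) of the previous state and the committed edge, as required by the write-set condition. Each commit reconstructs the prospective projection from its serialized read set and checks Equation~\eqref{eq:prospective-phi} against it. It also checks the domain revision and \(v_s\) bound in \(p_x\) (Definition~\ref{def:activation-permit}), and under A6 every successful commit advances the revision. Consequently, a permit prepared against an older revision fails the exact-equality check and must be re-prepared against the new projection. This rules out the race in which two fragments are each checked against the same pre-state, because serializability forces each commit to see every earlier one.

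Now suppose all of an unsafe \(\Delta\) became active. Order its fragments by commit linearization point, which A6 makes total. Let \(k\) be the first index at which \(\mathsf{Safe}_\rho\) fails on the post-commit projection. Such an index exists because the initial projection is safe by I5 and the final one is unsafe. That commit's guard evaluated exactly this post-commit projection, so it would have rejected, which is a contradiction. Downward closure is used here only to note that the earlier safe prefixes are consistent with I5. Under A11 the base profile rejects cross-root joins, so no fragment can escape \(\rho\) by rooting elsewhere within the same domain.

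Finally, I would treat the refresh clause as a separate case. \(\mathsf{AdvanceTime}\), \(\mathsf{Fence}\), \(\mathsf{DestroyVerified}\) and grant onset each run \(\mathsf{Refresh}\) atomically. That operation restores I5, possibly by fencing the whole domain down to the safe empty projection, and it advances the revision. The induction therefore restarts with a new live set and projection, and no unrefreshed state is ever observable.

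The main obstacle is the serialization step: showing rigorously that no prepared permit survives an intervening commit. This depends on the revision and \(v_s\) tokens changing whenever any row in the read set, \(\Project_\rho\) included, changes. I would discharge it directly from the definition of \(\mathsf{Ver}_s\) together with the commit's exact-equality requirement.
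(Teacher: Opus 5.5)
Your first part matches the paper: with \(\mathsf{Live}_\rho(s)\), \(H\), and the normalized \(\Delta\) all fixed, \(\mathsf{Safe}_\rho(s,H\boxplus_\rho\Delta)\) is one deterministic conjunction applied to one argument. Authority-domain extensionality handles the \(\mathit{gid}\) case.

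The gap is in the second part, at the claim that ``the final one is unsafe.'' The theorem excludes only transitions that \emph{remove} authority from the projection. It does not exclude interleaved activations that \emph{add} authority to \(\rho\), such as other plans, sibling descendants, or the adversary's other orders in the same domain. So when the last fragment \(e_j\) of \(\Delta\) commits from pre-state \(s^-\), the projection is \(H\boxplus_\rho\Delta\) plus possibly extra authority, not \(H\boxplus_\rho\Delta\) itself. Unsafety of \(H\boxplus_\rho\Delta\) says nothing about a larger graph unless the predicate is monotone in that direction. Take a non-downward-closed clause such as ``the number of active workers is not exactly two.'' The enlarged graph can be safe again, and your index \(k\) need not exist. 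Your own setup only works when nothing but \(\Delta\)'s fragments commits, which is a restricted case of the theorem.

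The missing step is the contrapositive of downward closure (Definition~\ref{def:downward}), and it is the load-bearing lemma in the paper's proof. Because nothing was removed, \(H\boxplus_\rho\Delta\preceq_s\Project^+_\rho(s^-,e_j)\). Hence \(\mathsf{Safe}_\rho(s^-,\Project^+_\rho(s^-,e_j))\) is false, and the commit guard rejects \(e_j\). You instead assign downward closure only to ``noting that the earlier safe prefixes are consistent with I5.'' With the step inserted, your first-failing-index argument goes through; it is then simply a contradiction with I5.

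Separately, the serialization step you call the main obstacle does not carry the safety claim. \(\mathsf{CommitActivation}\) reconstructs the candidate projection inside its serializable transaction and re-evaluates Equation~\eqref{eq:prospective-phi} on it, whatever the permit recorded. The revision and \(v_s\) bindings only make a stale permit fail earlier through an exact-equality mismatch.
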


\begin{theorem}[Crash-safe at-most-once activation]
\label{thm:crash}
Under A1--A11, for every fault-extended accepted trace containing
crashes, finite recovery prefixes, duplicate messages, delayed receipts, and
retries, every state \(s\) occurring in that trace, and every instance \(x\),
\[
\begin{aligned}
 \mathsf{Exec}_s(x)\Rightarrow
 \exists!a\in\mathit{Active}_s:\quad
 &\mathsf{CurrentRow}_s(a,x)\land\mathsf{SlotConsumed}_s(a)\\
 &{}\land A_s(\mathsf{slot}(a))=\mcode{active}
 \land\mathsf{Safe}_{\rho_a}(s,\Project_{\rho_a}(s)).
\end{aligned}
\]
Moreover, for every such state \(s\) and every activation slot \(j\),
\[
 \left|\{a\in\mathit{Active}_s:\mathsf{slot}(a)=j\}\right|\leq 1.
\]
Thus one logical activation slot cannot produce two active records or two
simultaneously executable authorities.
\end{theorem}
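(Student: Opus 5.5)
The proof proceeds by induction over the positions of a fault-extended accepted trace. First, we reduce fault events to ordinary accepted transitions. By A6, a crash either leaves durable state unchanged or exposes exactly the state after the last complete commit. A recovery prefix only re-reads the durable \(\mathit{Outbox}\) and re-invokes guarded transitions. Duplicate messages and delayed receipts are re-submissions of \(\mathsf{AcquireIntoQuarantine}\), \(\mathsf{ReconcileLateResult}\), \(\mathsf{CommitActivation}\), or \(\mathsf{PublishHandle}\). Each of these is either an exact idempotent replay with no state change (by the A7 correlation through \(\mathsf{providerKey}(o)\) and the activation-slot construction of Definition~\ref{def:activation-permit}) or is rejected. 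Hence every state occurring in a fault-extended trace is also a state of some fault-free accepted trace. It then suffices to prove both conjuncts as invariants of the accepted transition relation, with the proviso that every observable state is refresh-closed.

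For the uniqueness bound \(|\{a\in\mathit{Active}_s:\mathsf{slot}(a)=j\}|\le 1\), we use the durable \(\mathsf{UNIQUE}(\mathit{slot})\) constraint on \(\mathit{Active}\) and \(\mathit{UsedActivationKeys}\) from A6. By Theorem~\ref{thm:quarantine}, \(\mathsf{CommitActivation}\) is the only transition that adds an active row. Its guard requires \(\mathsf{slot}(p_x)\notin\mathit{UsedActivationKeys}_s\), and its single write set inserts both the used key and the slot-keyed row. Two concurrent commits for the same \(j\) therefore serialize, and the second one aborts on the unique conflict. No other transition deletes from \(\mathit{UsedActivationKeys}\) or inserts into \(\mathit{Active}\): \(\mathsf{Fence}\), \(\mathsf{Refresh}\), and \(\mathsf{DestroyVerified}\) only change \(A_s(j)\). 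So the count never exceeds one. A retry after a crash that lost the client's acknowledgement finds the slot already consumed, so it either reads back the same row or rejects.

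For the main implication, assume \(\mathsf{Exec}_s(x)\). By definition this means \(\mathsf{HandleEnabled}_s(x)\), which yields a unique \(a\) with \(\mathsf{CurrentRow}_s(a,x)\), \(A_s(\mathsf{slot}(a))=\mcode{active}\), and \(\mathit{Handles}_s(\mathsf{handle}(a))=a\). \(\mathsf{SlotConsumed}_s(a)\) is a conjunct of \(\mathsf{CurrentRow}\). The safety conjunct \(\mathsf{Safe}_{\rho_a}(s,\Project_{\rho_a}(s))\) is I5 at \(s\), which we take from Theorem~\ref{thm:nonamp}. That theorem applies because we have reduced fault-extended states to reachable accepted states.

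The remaining point is the uniqueness quantifier. \(\mathsf{HandleEnabled}\) asserts \(\exists!\) only over rows that are also current and published, whereas the statement quantifies over rows that are current, consumed, and active. We close this gap with I2, which gives uniqueness among all rows with \(\mathsf{RowFor}_s(a,x)\) and an active slot. I2 in turn follows from \(\mathsf{ActivationFresh}\): it requires \(\neg\exists a\in\mathit{Active}_s:\mathsf{inst}(a)=x\) and that the correlation \(\langle o,\omega\rangle\) uniquely identifies \(x\) in the vault. So no second commit can ever attach to the same instance, even through a different slot.

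The main obstacle is the reduction of crash and recovery states to accepted states. In particular, we must argue that a handle published before a crash still refers to a committed row. This holds because A6 makes the handle a deterministic function of a committed row and \(\mathsf{PublishHandle}\) runs only after that commit. We must also argue that the \(\mathsf{Refresh}\) atomicity in \(\mathsf{AdvanceTime}\), \(\mathsf{Fence}\), and \(\mathsf{IssueGrant}\) means no state with a stale but still-published handle is ever observable.
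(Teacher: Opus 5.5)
Your proposal is correct and follows essentially the paper's route. The crash projection is a committed prefix and recovery uses only ordinary guarded transitions, which is the content of Lemma~\ref{lem:fault-preservation}; you phrase it as a reduction to fault-free reachability. The A6 $\mathsf{UNIQUE}(\mathit{slot})$ constraint gives the per-slot bound as in Lemma~\ref{lem:slot-uniqueness}, and the definition of $\mathsf{Exec}$ plus the preserved I5 gives the implication.

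Two of your steps make explicit what the paper leaves implicit: reading $\mathsf{SlotConsumed}_s(a)$ directly off $\mathsf{CurrentRow}_s(a,x)$, and using I2 (via the no-prior-row conjunct of $\mathsf{ActivationFresh}$) to carry the $\exists!$ from the scope of $\mathsf{HandleEnabled}$ to the theorem's scope. One imprecision: a delayed receipt arriving for the first time is a state-changing $\mathsf{ReconcileLateResult}$ step, not a no-op replay. It is still an ordinary accepted transition, so your reduction is unaffected.
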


\begin{theorem}[Refund non-resurrection]
\label{thm:refund}
Under A1, A2, A4--A6, A8, and A9, if
\(\mathsf{RefundSettled}(o)\) takes \(s\) to \(s'\), then
\(C_{s'}(o)=\mcode{refunded}\) while
\(\Project_\rho(s')=\Project_\rho(s)\) for every authority domain \(\rho\).
Thus a refund, discount, free credit, or restored source reservation cannot by
itself release a semantic capacity clause. Capacity changes require a separate
authorized transition---either fencing (followed, if desired, by ordinary
activation under a narrower descriptor) or verified destruction---under its
registered semantics.
\end{theorem}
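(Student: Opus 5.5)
The proof follows the transition definition of \(\mathsf{RefundSettled}\) directly. The key move is to show that its write set is disjoint from every state component that the policy projection \(\Project_\rho\) reads. First, from the guarded operation list, \(\mathsf{RefundSettled}(o)\) updates only the operation-level commerce coordinate \(C(o)\) and the associated source evidence, setting \(C_{s'}(o)=\mcode{refunded}\). The explicit clause "without removing the capability node or releasing a semantic capacity clause" means it writes nothing to \(G\), \(\mathit{Active}\), the authority coordinates \(A(j)\), \(\mathit{Handles}\), \(\mathit{Grants}\), \(\mathit{Epochs}\), or \(\mathit{RootRevisions}\). Assumption A6 makes this write set atomic and complete, so no partial or hidden write intervenes. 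Assumption A9 confines the source-evidence update to its registered adapter contract, which has no authority-changing effect. Assumptions A1 and A2 exclude unmediated or forged writes that could accompany the refund.

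Second, I will argue that \(\Project_\rho(s)\) is a function only of components left unchanged. By Definition~\ref{def:projection}, \(\Project_\rho(s)\) is the \(\mathit{gid}\)-erased union of the root-specific projections \(\Project_\Gamma(s)\) over \(\Gamma\in\mathsf{Live}_\rho(s)\). Each \(\Project_\Gamma(s)\) is built from active edges, active rows with \(A(j)=\mcode{active}\), descriptors, canonical identities (fixed by A4), and lineage. Because \(\mathit{Grants}\), \(\theta\), \(\mathit{Epochs}\), and signature verification are unchanged, \(\mathsf{CurrentGrant}\) is unchanged, so \(\mathsf{Live}_\rho(s')=\mathsf{Live}_\rho(s)\). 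A8 guarantees that the epochs read are authoritative and are not moved by the refund. \(\mathsf{CurrentRow}\) and \(\mathsf{LineageCurrent}\) read no commerce coordinate. The mixed state \((\mcode{refunded},\mcode{resolved},\mcode{active})\) is explicitly legal in the model, which confirms that \(C\) is orthogonal to \(A\). Since \(\mathsf{normalize}_\rho\) is deterministic (A5), the two projections coincide for every \(\rho\), and I5 is also inherited trivially.

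For the consequence, I will argue that any change to the value of a capacity clause \(\Phi_\Gamma(\Project_\rho(\cdot))\) requires a change to \(\Project_\rho\). The only transitions that shrink the projection are \(\mathsf{Refresh}\) and \(\mathsf{Fence}\), which set \(A(j)=\mcode{fenced}\), and \(\mathsf{DestroyVerified}\), which sets \(A(j)=\mcode{revoked}\). A refund therefore cannot "by itself" free capacity. A subsequent activation under a narrower descriptor then needs the ordinary \(\mathsf{PrepareActivation}\)/\(\mathsf{CommitActivation}\) guard \eqref{eq:prospective-phi}.

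The main obstacle is the second step. I must verify exhaustively that no predicate feeding \(\Project_\rho\) or \(\mathsf{Live}_\rho\) depends on \(C(o)\) or on source reservation evidence. The candidates are the source-reference nodes in \(\mathsf{Base}_\Gamma\) and the source-evidence digest \(h_{\mathsf{src}}\) carried in activation permits. I would handle this by noting two points. First, \(\mathsf{Src}(o,\widehat{o})\) is computed from the immutable operation record and the envelope's profile identifier, not from the settlement status. Second, source nodes enter the graph as canonical references, not as balances. If some profile did let source balances appear in the projection, A9's contract would have to be read as excluding settlement status from the policy-consumed fields, and I would state that reading explicitly.
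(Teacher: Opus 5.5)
Your proposal is correct and follows the paper's proof: \(\mathsf{RefundSettled}\) writes only the commerce coordinate \(C(o)\) and source evidence, touching no delivery or authority coordinate, active row, or graph edge, so every \(\Project_\rho\) is unchanged and capacity can change only through fencing or verified destruction. Your extra checks (that \(\mathsf{Live}_\rho\) is unchanged because grants, clock, and epochs are untouched, and that source-reference nodes and \(h_{\mathsf{src}}\) do not depend on settlement status) make explicit some dependencies that the paper's short proof leaves implicit.
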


\begin{theorem}[Epoch non-inheritance]
\label{thm:epoch}
Under A1, A2, A6, A8, and A10, advancing any bound grant, policy, identity,
contract, resource, or revocation epoch prevents an old activation permit,
effect permit, or handle from authorizing a new protected effect. Changed
resources require resolution and activation against current state.
\end{theorem}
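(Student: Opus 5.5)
The argument is a case analysis on how a protected effect could become authorized after an epoch advance. By A10, a new protected effect exists only as a receipt that first appears through a \(\mathsf{CommitEffect}(p_u)\) transition \(s_k\to s_{k+1}\). So it suffices to show that no such commit succeeds when \(p_u\), or the handle or activation permit behind it, was prepared against an epoch vector that has since advanced. The three kinds of old artifact are handled in turn, starting from the \(\mathsf{Fence}\) transition, which advances an epoch. By A6 that transition executes as one serializable commit, and it invokes \(\mathsf{Refresh}\) in the same write set. The first step is to record what holds in every state after that commit. Every committed row \(a\) whose bound vector \(\eta_a\) includes the advanced coordinate now violates \(\eta_a=\mathsf{Epochs}_s\!\restriction\mathsf{dom}(\eta_a)\), so \(\mathsf{CurrentRow}_s(a,\mathsf{inst}(a))\) fails. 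Refresh-closedness then forces \(A_s(\mathsf{slot}(a))=\mcode{fenced}\). Descendants are covered as well: their \(\mathsf{LineageCurrent}\) recursion requires every predecessor row to be current, so they are fenced too. This part is invariant bookkeeping. Epochs change only monotonically, so no later state can restore the old vector, and the statement holds for every subsequent state, not just the immediate successor.

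\emph{Handles.} \(\mathsf{Exec}_s(x)\) is \(\mathsf{HandleEnabled}_s(x)\), which requires a \(\mathsf{CurrentRow}\) with an \mcode{active} slot. By the step above, an old handle \(\chi\) now resolves only to a fenced or non-current row, so \(\mathsf{PrepareEffect}\) and \(\mathsf{CommitEffect}\) both reject it. \emph{Effect permits.} \(\mathsf{CommitEffect}\) re-resolves the complete epoch vector and the domain revision and requires exact equality with the values signed into \(p_u\). Fencing advances both, so these equalities fail; they also fail independently of the row check because \(\mathsf{CurrentRow}_{s_k}(a,x)\) fails. Here A2 is needed so the agent cannot forge a fresh \(p_u\) carrying the new epochs without passing \(\mathsf{PrepareEffect}\), and that transition demands a current row. \emph{Activation permits.} \(\mathsf{CommitActivation}\) reconstructs \(v_s(s,o,x)\), which covers \(\mathit{Epochs}_s\) and \(\mathit{RootRevisions}_s(\rho_o)\), and the epoch check compares against \(\eta\) in \(p_x\). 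An old \(p_x\) therefore cannot create an active row. By Theorem~\ref{thm:quarantine}, it also cannot make an output executable by any other route. The clause about changed resources follows from the same fact: a resource-epoch advance invalidates the old row, and the only path back to \(\mathsf{Exec}\) is a fresh \(\mathsf{ResolveActualCapability}\), \(\mathsf{PrepareActivation}\), \(\mathsf{CommitActivation}\), and \(\mathsf{PublishHandle}\) against current state.

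The main obstacle is the race in which an epoch advance is concurrent with an in-flight \(\mathsf{CommitEffect}\) whose checks began before the fence. I handle it with A6 serializability together with the A10 commit point. The fence and the effect commit are each single serializable transactions, and the effect commit reads \(\mathit{Epochs}\) and \(\mathit{RootRevisions}\) in its read set, so one of two orders holds. If the effect commit linearizes first, the effect is not a new effect after the advance, and the theorem does not apply to it. If the fence linearizes first, the effect commit's pre-state \(s_k\) already carries the new epoch, and I6 requires equality in \(s_k\); that equality fails, so the commit aborts before the effect. I also need to state that A1 excludes effect paths that bypass the broker, and that A8 makes the authoritative epoch readable at the commit point, so the comparison is made against the true current value and not a cached one.
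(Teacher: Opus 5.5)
Your proposal is correct and follows essentially the paper's own argument: every activation permit, active row (and hence its handle), and effect permit binds the epoch vector, and activation commit, effect preparation (through $\mathsf{CurrentRow}$), and effect commit each require exact equality with authoritative state, so any bound-epoch advance breaks that equality. Your Refresh/fencing layer, the monotonicity remark that extends the conclusion to all later states, and the serialization analysis of the concurrent-commit race add detail that the paper leaves implicit or handles in Lemma~\ref{lem:preservation}; one small caution is that citing Theorem~\ref{thm:quarantine} imports A3, which is not a hypothesis here, although the fact you need (only $\mathsf{CommitActivation}$ creates active rows) follows directly from the transition guards, as in Lemma~\ref{lem:nonactivation}.
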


\begin{theorem}[Resource-to-effect confinement]
\label{thm:confinement}
Under A1--A11, let \(s_0\rightarrow\cdots\rightarrow s_n\) be an accepted
trace and let \(r\) first enter \(\mathit{EffectReceipts}\) in
\(s_k\rightarrow s_{k+1}\). If \(r\) and \(p_u\) witness
\(\mathsf{AcceptedEffect}_{s_{k+1}}(x,u)\), then that transition is the unique
\(\mathsf{CommitEffect}(p_u)\) that accepts \(u\). In its pre-state \(s_k\),
\(u\in\sem{\alpha_x}\), the exact row for \(x\) has a complete current
acquisition derivation, and the joint relational envelope for its authority
domain holds. The episode's registered state predicate and logical-time
interval hold there, and its continuing obligations are satisfied. The effect
result and \(r\) are atomically bound to that same single-use permit.
\end{theorem}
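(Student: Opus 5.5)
We prove Theorem~\ref{thm:confinement} by a case analysis on the transition that first installs \(r\), using the invariants I1--I6 and Theorem~\ref{thm:backed}. Fix the accepted trace and the index \(k\) at which \(r\) first appears. First, by A10 and A6, the only guarded operation whose write set contains an \(\mathit{EffectReceipts}\) row is \(\mathsf{CommitEffect}\). Every other transition in the list leaves that append-only component unchanged: \(\mathsf{Refresh}\), \(\mathsf{Fence}\), \(\mathsf{RefundSettled}\), \(\mathsf{DestroyVerified}\), the activation transitions and \(\mathsf{PublishHandle}\) all preserve it. So \(s_k\rightarrow s_{k+1}\) is a \(\mathsf{CommitEffect}(p')\) step for some permit \(p'\).

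Next we show \(p'=p_u\). By the definition of \(\mathsf{AcceptedEffect}_{s_{k+1}}(x,u)\) we have \(\mathsf{permit}(r)=p_u\) and \(\mathsf{receiptSlot}(r)=\mathsf{effectSlot}(p_u)\). The commit writes a receipt naming exactly the permit it redeems, and A2 makes the receipt's fields and the A10 witness unforgeable, so \(p'=p_u\). Uniqueness then follows from the durable \(\mathsf{UNIQUE}(\mathit{slot})\) constraint on \(\mathit{UsedEffectKeys}\) and \(\mathit{EffectReceipts}\) (A6). Any other transition at some index \(k'\) accepting \(u\) under \(p_u\) would have to consume the same \(\mathsf{effectSlot}(p_u)\). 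Its guard requires an unused slot, which is impossible after \(k+1\). Before \(k\), the slot would already be used in \(s_k\), contradicting the guard at \(k\). The receipt is also fresh at \(k+1\), so no earlier step installed it.

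We then read the pre-state facts directly off the \(\mathsf{CommitEffect}\) guard, which is exactly the content of I6. In \(s_k\) the guard requires \(\mathit{Handles}_{s_k}(\chi)=a\), \(A_{s_k}(\mathsf{slot}(a))=\mcode{active}\), \(\mathsf{CurrentRow}_{s_k}(a,x)\), \(u\in\sem{\alpha_x}\), \(\mathsf{EpisodeCurrent}_{s_k}(u)\) (so \(c(s_k)\) and \(\mathsf{now}(s_k)\in\tau\)), \(\mathsf{Obl}_{s_k}(o_{\alpha_x},u)\), and \(\mathsf{Safe}_\rho(s_k,\Project_\rho(s_k))\). The guard's re-resolution and exact-equality check against \(p_u\) establish that \(a\) is the row bound by \(p_u\); this is the binding recorded by I1. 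From \(\mathsf{CurrentRow}\) we obtain \(\mathsf{LineageCurrent}_{s_k}(a)\) and a live root. Theorem~\ref{thm:backed} upgrades this to a complete current derivation to exactly one live root, with I5 covering the whole domain. Finally, atomicity of the result, receipt and slot consumption, and their immutability afterwards, follow from A6's single linearization point and the append-only ledgers.

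The main obstacle is the step from the local guard to a claim about genuine effects: showing that \(r\) cannot arise from anything other than a guarded \(\mathsf{CommitEffect}\), and that the guard is evaluated against a consistent, refreshed \(s_k\) rather than a stale read. We would handle this in two parts. First, an induction over the trace shows that every accepted state is refresh-closed, since \(\mathsf{AdvanceTime}\), \(\mathsf{Fence}\) and grant onset apply \(\mathsf{Refresh}\) in the same transaction. Second, serializability (A6) places the guard evaluation and the writes at one point, and complete mediation (A1, A10) rules out receipts or effects installed off-broker. A concurrent time advance or fence is thus serialized either before \(s_k\), in which case the guard sees it, or after \(s_{k+1}\).
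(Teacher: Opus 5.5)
Your proposal is correct and follows essentially the paper's route. Like the paper, it identifies the step as the unique \(\mathsf{CommitEffect}(p_u)\) via A1, A10, I6 and effect-slot uniqueness, reads the pre-state facts off the commit guard, and invokes Theorem~\ref{thm:backed} for the complete current derivation; where the paper explicitly lists the substitution, intervening-epoch/revision, and replay aborts, you use a serializability argument instead. One point to make explicit: the guard (equivalently I6) also supplies \(\mathsf{Exec}_{s_k}(x)\). That is the hypothesis Theorem~\ref{thm:backed} needs, and by the uniqueness in \(\mathsf{HandleEnabled}\) it ties that theorem's derivation to the same row \(a\).
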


\section{Provenance-Bounded Activation Design}
\label{sec:design}

The design uses narrow adapters around a common broker. Runtime-specific code
constructs an acquisition effect; provider-specific code quarantines and
resolves an output; the authorization kernel evaluates registered mathematical
objects; and a separate effect broker controls use. No model-generated text is
an authorization input.

\subsection{Acquisition-effect IR}

Table~\ref{tab:ir} lists the required intermediate-representation fields. The
IR is typed and closed: an unknown resource kind, provider profile, or field
name is rejected before dispatch.

\begin{table*}[t]
\caption{Security-relevant acquisition-effect IR.}
\label{tab:ir}
\small
\begin{tabularx}{\textwidth}{L{0.19\textwidth} L{0.29\textwidth} Y}
\toprule
Field group & Bound values & Security purpose \\
\midrule
Origin & root grant, task/session, canonical controller, runtime profile, request ID & attributes the proposal to one current authorization root \\
Operation & acquisition kind, provider, product/profile version, exact parameters, logical and derived root-bound provider keys & prevents operation, cross-root, and retry substitution \\
Inputs & source references, reservation evidence, contributing capability nodes & identifies every consumed or supporting input \\
Intended output & resource kinds, maximum output descriptors, multiplicity, intended beneficiary & authorizes only a bounded prospective conversion \\
Correlation & purpose, target, registered target--domain map, data domain, audience, delegation, descendants, co-possession group & preserves whole clauses instead of fieldwise unions \\
Freshness & grant, policy, contract, identity, provider-resource, and revocation epochs; expiry & makes state changes invalidate prepared authority \\
Evidence & required receipt type, provider state query, projector digest, unresolved-field policy & specifies how actual output is established \\
\bottomrule
\end{tabularx}
\end{table*}

The adapter must inject a stable acquisition request identifier when the
underlying runtime supplies no stable one. A retry reuses the same logical key;
the broker deterministically derives a provider-facing key bound to the grant
root. Changing a transport or order ID cannot allocate a new semantic activation
slot, and two roots that reuse the same raw key cannot alias one provider
operation.

\subsection{Three orthogonal state axes}

A single status flag conflates economically and security-distinct outcomes.
The authorization layer persists the commerce, delivery, and authority coordinates of
the formal root-qualified output-state map \(\sigma_s(o,\omega)\) defined in
Section~\ref{sec:formal}. Thus a payment can be settled while activation is denied; a resource can be
quarantined while settlement is indeterminate; and money can be refunded while
the capability remains active. One provider operation has one commerce
coordinate but separate delivery coordinates and activation slots for every
registered output ordinal, so a multi-output result can contain active,
quarantined, and fenced outputs simultaneously. Transition guards constrain valid combinations
without forcing them into one linear enumeration.

\subsection{Proposal and quarantine}

Algorithm~\ref{alg:acquire} authorizes the acquisition call but deliberately
stops before authority creation. \(\mathsf{Current}\) validates the live root,
every bound epoch, contract membership, authorized source and provider
profiles, purpose membership, the registered evidence profile, and a current
same-root derivation for every capability input;
\(\mathsf{Reserve}\) delegates source accounting to a registered adapter; and
\(\mathsf{VaultReceive}\) returns only broker-internal resource identifiers.
\(\mathsf{PersistOutboxIfCurrent}\) runs as a serializable transaction: it
re-reads the immutable operation and permit ledgers, the source reservation,
the live root and epochs, and the logical clock; it persists the exact outbox
row only if every dispatch guard still holds.

\begin{algorithm}[t]
\caption{Acquire into quarantine}
\label{alg:acquire}
\begin{algorithmic}[1]
\Require root envelope \(\Gamma\), normalized request \(q_a\), state \(s\)
\State \(\kappa\gets\mathsf{ResolveContract}(q_a.contract,q_a.version)\)
\If{\(\kappa=\bot\) or \(\neg\mathsf{Current}(\Gamma,\kappa,q_a,s)\)}
  \State \Return \(\mathsf{deny}(\mcode{E\_STALE\_OR\_UNKNOWN})\)
\EndIf
\State \(\mathit{cr},b\gets\mathsf{CanonicalIdentities}(q_a,s)\)
\If{ambiguous \(\mathit{cr}\) or \(b\)}
  \State \Return \(\mathsf{deny}(\mcode{E\_IDENTITY\_AMBIGUOUS})\)
\EndIf
\State \(o\gets\langle\Gamma.\mathit{gid},q_a.\mathit{opKey}\rangle\)
\State \(\widehat{o}\gets\mathsf{CanonicalOperationRecord}
  (\Gamma,q_a,\kappa,\mathit{cr},b)\)
\If{\(\mathit{Ops}_s(o)\notin\{\bot,\widehat{o}\}\)}
  \State \Return \(\mathsf{deny}(\mcode{E\_OPERATION\_KEY\_CONFLICT})\)
\EndIf
\State \(p_a\gets\mathsf{InstallOrMatchProposal}(o,\widehat{o},\Gamma,q_a,s)\)
\State \(r_s\gets\mathsf{ReserveSource}(p_a,q_a.inputs,o)\)
\State \(w\gets\mathsf{PersistOutboxIfCurrent}(p_a,r_s,o)\)
\If{\(w=\bot\)}
  \State \Return \(\mathsf{deny}(\mcode{E\_DISPATCH\_NOT\_CURRENT})\)
\EndIf
\State \(y\gets\mathsf{ProviderDispatch}(w,\mathsf{providerKey}(o))\)
\If{\(y\in\{\mcode{timeout},\mcode{pending},\mcode{unknown}\}\)}
  \State \Return \(\mathsf{indeterminate}(\mcode{E\_PROVIDER\_UNKNOWN})\)
\EndIf
\If{\(\neg\mathsf{AuthenticatedCorrelation}(y,w)\)}
  \State \Return \(\mathsf{deny}(\mcode{E\_RECEIPT\_MISMATCH})\)
\EndIf
\State \(X\gets\mathsf{VaultReceive}(y,p_a,o)\)
\State \Return \(\mathsf{quarantined}(X)\)
\end{algorithmic}
\end{algorithm}

\FloatBarrier

Provider timeout does not imply absence. When terminal state is not
authoritatively known, the operation remains
\(C_s(o)=\mcode{indeterminate}\);
any observed late output is correlated into quarantine.

\subsection{Resolution and atomic activation}

The resolver queries authoritative current state rather than trusting the
requested product description. It normalizes default identities, network
reachability, roles, audiences, data domains, exportability, delegation,
descendant behavior, concurrency, validity, and revocation handles according
to the registered profile. It records an explicit \(unresolvedFields\) set.
An unresolved required field either maps to the profile's conservative upper
bound or returns \(\bot\).

Algorithms~\ref{alg:activate} and~\ref{alg:activate-commit} separate the two
checks. Preparation constructs a prospective graph and signs its exact root.
Commit repeats every current-state predicate inside the durable transaction.
This prevents a time-of-check to time-of-use change between resolution and
activation.

\begin{algorithm}[t]
\caption{Resolve and prepare one quarantined output}
\label{alg:activate}
\small
\begin{algorithmic}[1]
\Require quarantined output \(x\) at ordinal \(\omega\), root-qualified key \(o\), state \(s\)
\State \(\widehat{o}\gets\mathit{Ops}_s(o)\)
\If{\(\widehat{o}=\bot\)}
  \State \Return \(\mathsf{deny}(\mcode{E\_UNKNOWN\_OPERATION})\)
\EndIf
\State \((\Gamma,\kappa,\pi,\mathit{opKey})\gets
  (\Gamma_o,\kappa_o,\pi_o,\mathsf{rawKey}(o))\)
\State \(\mathit{rr}\gets\mathsf{ResolveActualCapability}(x,o,\omega,\pi,\kappa,s)\)
\If{\(\mathit{rr}=\mcode{correlation\mbox{-}mismatch}\)}
  \State \Return \(\mathsf{deny}(\mcode{E\_RECEIPT\_MISMATCH})\)
\EndIf
\If{\(\mathit{rr}=\bot\)}
  \State \Return \(\mathsf{indeterminate}(\mcode{E\_UNRESOLVED\_OUTPUT})\)
\EndIf
\State \(((\alpha_x,h_x,U_x),s_r)\gets\mathit{rr}\)
\State \(e\gets\mathsf{CandidateEdge}(o,x,\alpha_x,h_x,U_x,\omega,s_r)\)
\If{\(\Gamma\notin\mathsf{Live}_{\rho(\Gamma)}(s_r)\) or
      \(\kappa\notin K_{\Gamma}\) or \(\omega\notin\Omega_{\kappa}\) or
      \(I_e\notin\mathcal I_{\kappa}\) or \(\{x\}\notin\mathcal O_{\kappa}\) or
      \(\neg\mathsf{CurrentInputsAtRoot}(I_e,\Gamma,s_r)\) or
      \(\neg\mathsf{ActivationFresh}_{s_r}(x,o,\omega)\) or
      \(\neg R_{\kappa}(I_e,\{x\})\) or
      \(\mathsf{now}(s_r)\notin\upsilon_{\alpha_x}\) or
      \(\mathsf{Cyclic}(G_{s_r}\cup\{e,x\})\)}
  \State \Return \(\mathsf{deny}(\mcode{E\_CONVERSION})\)
\EndIf
\If{\(\alpha_x\not\restrict F_{\kappa}(I_e,\Gamma,\omega)\)}
  \State \Return \(\mathsf{deny}(\mcode{E\_OUTPUT\_BOUND})\)
\EndIf
\State \(\rho\gets\rho(\Gamma)\); \(H'\gets\Project^{+}_{\rho}(s_r,e)\)
\If{\(\neg\mathsf{Safe}_{\rho}(s_r,H')\)}
  \State \Return \(\mathsf{deny}(\mcode{E\_ENVELOPE})\)
\EndIf
\State \(p_x\gets\mathsf{PrepareExactActivation}
  (x,e,H',\mathit{opKey},\omega,s_r)\)
\State \Return \(\mathsf{prepared}(p_x)\)
\end{algorithmic}
\end{algorithm}

\begin{algorithm}[t]
\caption{Commit activation and publish its opaque handle}
\label{alg:activate-commit}
\small
\begin{algorithmic}[1]
\Require prepared permit \(p_x\), quarantined output \(x\), root-qualified key \(o\)
\State \((\Gamma,\rho,\kappa,\pi,\omega)\gets
  \mathsf{BoundActivationContext}(p_x)\)
\If{\(\rho\neq\rho(\Gamma)\)}
  \State \Return \(\mathsf{abort}(\mcode{E\_ACTIVATION\_RACE})\)
\EndIf
\State \(s'\gets\mathsf{BeginSerializableTransaction}()\)
\State \(\widehat{o}'\gets\mathit{Ops}_{s'}(o)\)
\If{\(\widehat{o}'=\bot\) or
      \(\neg\mathsf{OperationMatchesPermit}(\widehat{o}',p_x,o)\)}
  \State \Return \(\mathsf{abort}(\mcode{E\_ACTIVATION\_RACE})\)
\EndIf
\State \(r'\gets\mathsf{RequeryAndResolve}(x,\pi,\kappa,s')\)
\If{\(r'=\bot\)}
  \State \Return \(\mathsf{abort}(\mcode{E\_ACTIVATION\_RACE})\)
\EndIf
\State \((\alpha'_x,h'_x,U'_x)\gets r'\)
\State \(e'\gets\mathsf{ReconstructCandidateEdge}
  (o,x,\alpha'_x,h'_x,U'_x,\omega,s',\rho)\)
\State \(H''\gets\Project^+_\rho(s',e')\)
\If{\(p_x\notin\mathit{ActivationPermits}_{s'}\) or
      \(\neg\mathsf{ExactPermitBinding}(p_x,e',H'',\alpha'_x,h'_x,U'_x,s')\) or
      \(\mathsf{now}(s')\geq\mathsf{expiry}(p_x)\) or
      \(\Gamma\notin\mathsf{Live}_{\rho}(s')\) or
      \(\kappa\notin K_{\Gamma}\) or \(\omega\notin\Omega_{\kappa}\) or
      \(\omega_{e'}\neq\omega\) or
      \(I_{e'}\notin\mathcal I_{\kappa}\) or \(\{x\}\notin\mathcal O_{\kappa}\) or
      \(\neg\mathsf{CurrentInputsAtRoot}(I_{e'},\Gamma,s')\) or
      \(\neg\mathsf{ActivationFresh}_{s'}(x,o,\omega)\) or
      \(\neg R_{\kappa}(I_{e'},\{x\})\) or
      \(\alpha'_x\not\restrict F_{\kappa}(I_{e'},\Gamma,\omega)\) or
      \(\mathsf{now}(s')\notin\upsilon_{\alpha'_x}\) or
      \(\mathsf{Cyclic}(G_{s'}\cup\{e',x\})\) or
      \(\neg\mathsf{Safe}_{\rho}(s',H'')\)}
  \State \Return \(\mathsf{abort}(\mcode{E\_ACTIVATION\_RACE})\)
\EndIf
\If{\(\mathsf{slot}(p_x)\in UsedActivationKeys_{s'}\)}
  \State \Return \(\mathsf{deny}(\mcode{E\_ACTIVATION\_REPLAY})\)
\EndIf
\State \(a\gets\mathsf{CommitActivationAndConsumeSlot}(p_x,e',s')\)
\State \(h\gets\mathsf{PublishHandleIfCurrent}(a)\)
\If{\(h=\bot\)}
  \State \Return \(\mathsf{deny}(\mcode{E\_HANDLE\_NOT\_CURRENT\_OR\_COLLISION})\)
\EndIf
\State \Return \(\mathsf{active}(h)\)
\end{algorithmic}
\end{algorithm}

\FloatBarrier

A successful projector result may retain a nonempty \(U_x\) only when every
listed field has the registered sound upper bound of
Equation~\eqref{eq:resolver-upper-bound}; the bound, the unresolved-field set,
and the evidence are committed by \(h_x\). A field with no such bound makes
the projector return \(\bot\), so the algorithm never treats omission as a
wildcard.

The active handle is deterministically derived from the durable record.
\(\mathsf{PublishHandleIfCurrent}\) implements the formal
\(\mathsf{PublishHandle}\) transition as a serializable transaction that
re-resolves the row, verifies the exact current row and active slot, checks that
the handle is absent or already maps to that row, and atomically installs the
mapping. A concurrent state change or collision therefore returns \(\bot\)
without exposing a handle. A crash after activation commit and before
publication repeats the same transaction and, when its guards still pass,
republishes the same handle rather than creating another authority.

\subsection{Relational aggregate evaluation}

The kernel evaluates \(\mathsf{Safe}_\rho\) over the union of canonical active
nodes from every live grant in the authority domain plus the candidate edge.
A profile may implement each conjunct as a decision diagram, policy program,
bounded query, or compiled set of clauses. The registered implementation must
be total, deterministic, and authority-domain-extensional. Its commitment
includes the policy source, compiler, normalization profile, and test vectors.

For example, let one clause permit at most one task-local worker, and another
permit either \(\mcode{sensitive.read}\) or
\(\mcode{external.publish}\) for a controller root but forbid their
co-possession. A sequence of free enrollments is evaluated against the count
after alias collapse. Separately allowed read and publish outputs are evaluated
against their joint controller projection. Field membership alone never
authorizes the combination.

For multi-output acquisitions, each ordinal has its own activation slot and is
admitted against the current aggregate. Safe prefixes may activate while the
first violating output remains quarantined; this is the operational meaning
of split non-evasion.

\subsection{Operation-time use}

Algorithm~\ref{alg:use} validates the complete current path, not only the
opaque handle. The broker resolves the actual effect into an episode and
evaluates the exact current lineage before issuing a single-use effect permit.
The permit is not an instruction to dispatch. It is a closed commitment to the
episode, opaque handle, durable active record, authority domain, complete epoch
vector, and root revision. The effect executor validates that immutable permit
and consumes its single-use slot at its own linearization point. Consequently,
a lineage or policy change
between preparation and execution cannot inherit the earlier decision.

\begin{algorithm}[t]
\caption{Prepare and atomically commit a protected use}
\label{alg:use}
\small
\begin{algorithmic}[1]
\Require opaque handle \(\chi\), proposed external effect \(\widetilde{u}\), current state \(s\)
\State \((x,a,\Gamma)\gets\mathsf{ResolveActiveLineage}(\chi,s)\); \(\rho\gets\rho(\Gamma)\)
\If{any object is absent, \(\mathit{Handles}_s(\chi)\ne a\),
      \(A_s(\mathsf{slot}(a))\ne\mcode{active}\),
      \(\neg\mathsf{CurrentRow}_s(a,x)\), \(\neg\mathsf{Exec}_s(x)\),
      fenced, revoked, or digest-mismatched}
  \State \Return \(\mathsf{deny}(\mcode{E\_STALE\_LINEAGE})\)
\EndIf
\State \(u\gets\mathsf{NormalizeProtectedEffect}(\widetilde{u},s)\)
\If{\(u=\bot\)}
  \State \Return \(\mathsf{deny}(\mcode{TARGET\_DOMAIN\_MAPPING\_INVALID})\)
\EndIf
\If{\(u\notin\sem{\alpha_x}\) or
      \(\neg\mathsf{EpisodeCurrent}_s(u)\) or
      \(\neg\mathsf{Obl}_s(o_{\alpha_x},u)\) or
      \(\neg\mathsf{Safe}_\rho(s,\Project_\rho(s))\)}
  \State \Return \(\mathsf{deny}(\mcode{E\_EFFECT\_OUTSIDE\_CLOSURE})\)
\EndIf
\State \(\beta\gets\mathsf{EffectBinding}(u,\chi,a,\rho,
  \mathsf{targetDomainMap}(s),\mathsf{epochs}(s),
  \mathsf{rootRevision}(\rho,s),\widetilde{u}.\mathsf{opKey})\)
\State \(p_u\gets\mathsf{PrepareSingleUseEffectPermit}(\beta)\)
\State \(s'\gets\mathsf{BeginSerializableEffectTransaction}()\)
\State \((x',a',\Gamma')\gets\mathsf{ResolveActiveLineage}(\chi,s')\)
\State \(u'\gets\mathsf{NormalizeProtectedEffect}(\widetilde{u},s')\)
\If{\(u'=\bot\), or any resolved object is absent, fenced, revoked, or
      digest-mismatched, or \(\mathit{Handles}_{s'}(\chi)\ne a'\), or
      \(A_{s'}(\mathsf{slot}(a'))\ne\mcode{active}\), or
      \(\neg\mathsf{CurrentRow}_{s'}(a',x')\), or
      \(\neg\mathsf{Exec}_{s'}(x')\)}
  \State \Return \(\mathsf{abort}(\mcode{E\_EFFECT\_COMMIT\_RACE})\)
\EndIf
\State \(\rho'\gets\rho(\Gamma')\)
\State \(m'\gets\mathsf{targetDomainMap}(s')\); \(\eta'\gets\mathsf{epochs}(s')\)
\State \(r'\gets\mathsf{rootRevision}(\rho',s')\)
\State \(\beta'\gets\mathsf{EffectBinding}
  (u',\chi,a',\rho',m',\eta',r',\widetilde{u}.\mathsf{opKey})\)
\If{\(p_u\notin\mathit{EffectPermits}_{s'}\), or
      \(\mathsf{now}(s')\geq\mathsf{expiry}(p_u)\),
      or \(u'\notin\sem{\alpha_{x'}}\), or
      \(\neg\mathsf{EpisodeCurrent}_{s'}(u')\), or
      \(\neg\mathsf{Obl}_{s'}(o_{\alpha_{x'}},u')\), or
      \(\neg\mathsf{ExactEffectBinding}(p_u,\beta')\), or
      \(\neg\mathsf{Safe}_{\rho'}(s',\Project_{\rho'}(s'))\)}
  \State \Return \(\mathsf{abort}(\mcode{E\_EFFECT\_COMMIT\_RACE})\)
\EndIf
\If{\(\mathsf{effectSlot}(p_u)\in UsedEffectKeys_{s'}\)}
  \State \Return \(\mathsf{deny}(\mcode{E\_EFFECT\_REPLAY})\)
\EndIf
\State \Return \(\mathsf{CommitEffectAndConsumeSlot}(p_u,u',s')\)
\end{algorithmic}
\end{algorithm}

\FloatBarrier

The exact effect binding contains the normalized episode---episode identifier,
subject, beneficiary, effect, target, data domain, purpose, recipient,
registered state precondition, and half-open validity interval---and separately
binds the target-to-domain map identifier, version, and digest. The finite
registered map is total only over its
supported targets; an unknown target, wrong domain, or stale map commitment
fails before permit issuance and is checked again at commit. The
effect slot is scoped by the authority domain
\(\rho=(\mathit{issuer},\mathit{cr},\mathit{policyEpoch})\) and a logical idempotency
key. In particular, neither a grant identifier nor a retry identifier defines
a fresh authority domain.
The registered finite reference profile maps a target into its canonical data
domain and a recipient into its authorized audience, fixes the reference
issuer explicitly, and records only an inert effect-commit receipt. It performs
no external action. A deployment must place
permit verification, one-time effect-slot consumption, and the irreversible
effect in the same trusted broker transaction, or have the recipient atomically redeem the
permit before applying an idempotent effect. Merely checking in one process and
calling an unbound executor afterwards does not implement this algorithm.

\subsection{Refunds, retirement, and late results}

A refund changes commerce state, not delivery or authority state. The active
node remains in \(\Project_{\rho(\Gamma)}(s)\) until authoritative destruction or a
registered permanent receiver fence proves that it cannot be exercised.
Agent assertion, deletion of a local alias, loss of a UI reference, or source
budget restoration is not destruction evidence.

Late provider success is handled symmetrically. Reconciliation identifies the
logical operation and canonical resource, records commerce and delivery state,
and places every output in quarantine. It never consumes a prior activation
slot or treats a prepared permit as implicit acceptance; the output must pass ordinary current-state resolution,
preparation, and commit.
After any restart, recovery first re-queries provider and broker state,
re-runs the registered projector, and reconstructs the current graph or
episode binding. It may proceed only through the ordinary commit guard after
exact comparison with the durable prepared permit; it never jumps directly
from an old permit to activation or effect acceptance.

Table~\ref{tab:crash} maps each externally visible crash window to the durable
recovery rule required by the formal fault semantics.

\begin{table}[H]
\caption{Crash windows and durable recovery outcomes.}
\label{tab:crash}
\small
\begin{tabularx}{\columnwidth}{L{0.33\columnwidth} Y}
\toprule
Crash point & Recovery rule \\
\midrule
Before provider dispatch & retry the same durable outbox/key; no output authority exists \\
After create, before receipt & query the same logical key; any output enters quarantine \\
After resolution, before activation & discard stale preparation and resolve current state again \\
After activation commit, before handle publication & republish the deterministic handle for the same active record \\
After handle publication & durable active record already precedes the handle \\
After effect-permit preparation & re-resolve current lineage and episode; abort stale or changed binding before acceptance \\
After effect commit, before receipt return & return the same durable receipt; the consumed effect slot forbids a second acceptance \\
After refund & preserve delivery and authority state until verified retirement \\
\bottomrule
\end{tabularx}
\end{table}

\subsection{Stable denial taxonomy}

Auditability requires the checker to verify where a trace failed. The core
families are: malformed or unknown IR, stale grant/profile/identity/resource
epoch, ambiguous identity, source-evidence failure, receipt mismatch,
unresolved actual output, manifest mismatch, invalid conversion, cycle or
lineage break, relational-envelope violation, activation replay, commit-order
violation, refund-as-destruction, stale handle, and effect outside closure.
For the two-stage use path, distinct codes identify missing permits, exact
binding mismatch, commit-time lineage or epoch races, and consumed effect-slot
replay.
Each concrete fixture registers one expected phase and error code; a mutant
that happens to fail earlier for an unrelated reason does not count as
detection of its target invariant.

\FloatBarrier

\section{Protocol and Runtime Instantiations}
\label{sec:instantiations}

The proposed architecture is an authorization layer, not a replacement for commerce,
identity, agent-communication, or tool protocols. An adapter assigns each
external field one of four evidence classes:

\begin{enumerate}
  \item \emph{native authenticated}: defined and authenticated by the upstream
  protocol;
  \item \emph{authenticated sidecar}: supplied by a separately registered
  signer and explicitly outside the upstream protocol;
  \item \emph{independently observed}: reconstructed from an authoritative
  provider read such as resource inspection; or
  \item \emph{unavailable}: cannot authorize activation when policy requires
  the field.
\end{enumerate}

Natural-language descriptions never promote an unavailable field to an
authenticated one.

\subsection{Agentic Payment Protocol}

The AP2 v0.2 mapping admits a credential only after a separate upstream
verifier has validated its signature/disclosure proof, credential type,
issuer/key binding, and applicable trust policy. Under that explicit premise,
the registered profile validates the frozen schema branches, timestamps,
Checkout Mandate digest, Payment Mandate transaction identifier, receipt
references, confirmation binding, and cross-record correlation
\cite{ap22026spec}. These records produce \mcode{transactionEvidence}; they do
not produce an activation certificate or an actual-capability manifest.

For experiments that need provider-return semantics, we define a separate
experimental delivery attestation (version 1). It binds the transaction,
Checkout Mandate and line item, provider and product profile, projector digest,
canonical resource and beneficiary, actual-capability-manifest digest,
delegability, validity, revocation handle, and evidence class. This object is
an experimental activation-layer record. It is not represented as part of AP2 or
as a protocol extension adopted by AP2.

This mapping respects the transaction layer's own scope. A successful AP2
check is required where applicable and remains logically independent of the
subsequent resource-resolution and activation decisions.

\subsection{UCP, A2A, and MCP}

UCP uses ``capability'' for merchant-supported commerce features
\cite{ucp2026}; this paper uses \emph{authorization capability} for a set of
exercisable protected episodes. The UCP adapter retains the upstream meaning
and maps order and fulfillment records only into commerce and delivery
evidence.

An A2A Agent Card supplies declared remote-agent identity, supported Skills,
interfaces, and authentication metadata \cite{a2a2026spec}. The declaration
supports discovery and provider-profile selection. Actual beneficiary,
delegation, descendant, revocation, and effect fields still require the
registered authenticated or independently observed evidence classes. An A2A
task therefore does not become an active acquired principal merely because its
Agent Card is syntactically valid.

MCP provides the structured tool-call channel and OAuth-based authorization
context \cite{mcp2026authorization,mcp2026tools}. The acquisition adapter binds
the runtime identity, task root, server identity, tool name, and complete
arguments at the final broker boundary. A tool result that identifies a new
resource is directed to quarantine rather than returned as a usable raw
credential. Invoking a resulting handle is separately mediated through the
single-use effect-permit commit. Ordinary tools that do not create an external
account, credential, tenant, provider resource, or principal are outside the
acquisition profile.

Table~\ref{tab:mapping} summarizes which evidence each protocol layer supplies,
which authority facts it deliberately does not infer, and where the activation layer
uses the resulting record.

\begin{table*}[t]
\caption{Evidence responsibility across protocol layers.}
\label{tab:mapping}
\small
\begin{tabularx}{\textwidth}{L{0.14\textwidth} L{0.25\textwidth} L{0.25\textwidth} Y}
\toprule
Layer & Native evidence used & Evidence not inferred & Activation-layer use \\
\midrule
AP2 & mandate, payment, receipt, transaction binding & actual resource authority, canonical controller, later activation & validate transaction-side prerequisites \\
UCP & commerce discovery, checkout/order and fulfillment records & authorization meaning of a delivered resource & identify declared product and lifecycle evidence \\
A2A & Agent Card, endpoint, declared Skills and authentication metadata & actual derived principal and complete downstream effect set & select and authenticate a remote-agent evidence profile \\
MCP & structured request, server/tool identity, OAuth context & semantics of a newly created provider resource & capture the acquisition request at the runtime boundary \\
Provider inspection & actual resource configuration in the registered observable profile & undocumented or hidden provider behavior & derive the actual-capability manifest \\
Provenance-bounded authorization & envelope, graph, resolver, lineage, epochs, activation and effect permits & payment validity or provider facts not supplied by an adapter & make the activation decision and linearize each protected effect \\
\bottomrule
\end{tabularx}
\end{table*}

Table~\ref{tab:deployment-mapping} makes the resulting deployment contract
explicit.  Each row is a conformance template: the named interception,
resolution, quarantine, activation, and effect-commit points must all be
implemented for that profile.  The table does not claim that the four example
profiles are evaluated production integrations.

\begin{table*}[t]
\caption{Deployment mapping from acquired resources to post-fulfillment authorization
enforcement points.}
\label{tab:deployment-mapping}
\scriptsize
\setlength{\tabcolsep}{2.5pt}
\renewcommand{\arraystretch}{1.08}
\begin{tabularx}{\textwidth}{L{0.09\textwidth} L{0.12\textwidth} L{0.14\textwidth} L{0.12\textwidth} L{0.16\textwidth} L{0.14\textwidth} Y}
\toprule
Acquired resource & Interception point & Authoritative resolver & Quarantine primitive & Activation predicate & Effect linearization & Fail-closed behavior \\
\midrule
MCP credential & final structured tool-call/result boundary at the acquisition broker & issuer/provider introspection plus canonical identity and registered scope profile & secret retained in the broker vault; no raw credential returned & resolved scope, audience, exportability, delegation, epochs, and aggregate envelope pass & broker or recipient atomically redeems the exact single-use permit with API acceptance & missing issuer, scope, audience, or mediated redemption leaves the output quarantined; a raw-return path is nonconforming \\
Cloud instance & brokered create and correlated provider result & provider inspection of canonical resource, IAM, network, storage, and runtime state & create disabled or non-routable, or withhold access behind a gateway & actual manifest and aggregate graph pass; commit-time requery shows no drift & start, attach, expose, or a protected workload effect atomically accepts the permit & ambiguous identity or state yields indeterminate; absence of quarantine or atomic acceptance makes the profile nonconforming \\
A2A child agent & child registration/task result before endpoint or credential publication & registered identity/delegation service and provider-side child state; Agent Card only selects the profile & withhold endpoint and credentials; child remains non-dispatchable & canonical parent and beneficiary, descendant, delegation, effect, epoch, and aggregate bounds pass & brokered task dispatch atomically redeems the permit at the child gateway & card-only or incomplete downstream fields, ambiguous aliases, or a non-redeemable endpoint prevent dispatch \\
AP2/UCP service & verified transaction/fulfillment result at the broker & provider account/entitlement read plus a registered delivery attestation, when used & fulfilled account, token, or entitlement retained in the broker vault & transaction prerequisites and resolved entitlement, beneficiary, delegability, epochs, and aggregate envelope pass & service invocation or recipient gateway atomically redeems the exact permit & payment or receipt alone, unavailable resolution, or ambiguous fulfillment leaves the output quarantined \\
\bottomrule
\end{tabularx}
\end{table*}

The delivery attestation in the AP2/UCP row is the experimental activation-layer
record defined above, not a native AP2 or UCP assertion.  Across all rows,
``atomic'' has the A10 meaning: permit validation and one-time slot consumption are
bound to effect acceptance at the registered linearization point.

\subsection{Agent-runtime mappings}

The evaluated profile uses two separately implemented adapters. One targets
the final structured MCP client-call boundary in the Rust-based Codex CLI
0.154.0; the other targets the corresponding boundary in the TypeScript-based
Gemini CLI 0.59.0 \cite{openai2026codexcli,google2026geminicli}. They share a
declarative acquisition-effect profile, stable enums, and golden vectors, but
not mapping functions. The source audit freezes the exact upstream commit and
boundary locator for each target; neither adapter is represented as an
upstream integration patch.

Each adapter maps a registered capture representation of the final structured
request at its audited MCP boundary. The record includes session/task root,
turn or task identity, tool-use identity, server identity, tool name, and
complete arguments. The mapping requires a stable
\mcode{acquisitionRequestId}; a conforming integration must inject and persist
one when the runtime supplies none. The registered profile excludes generic
shell execution, direct network access, raw container sockets, and
unregistered MCP servers from its complete-mediation claim.

The evaluated mapping establishes field completeness, request attribution,
and fail-closed handling of missing values. A conforming runtime integration
enforces that mapping at the audited interception point. The acquisition
broker remains the security boundary: interception alone cannot authorize a
returned capability, and restating an intended constraint cannot bypass the
broker.

The executable upstream layer uses the exact frozen client implementations
against byte-identical deterministic stdio MCP witnesses.  For Gemini, the path
includes Core tool discovery, \mcode{DiscoveredMCPTool} execution, response
transformation, and the final \mcode{Client.callTool} dispatch.  For Codex, it
includes initialization, discovery, and the lower-level
\mcode{RmcpClient::call\_tool} dispatch that the registered higher-level binding
invokes.  The Codex \mcode{PreparedMcpCall} capture point is source-audited but
not directly executed by this harness.  Task root, session, turn, tool-use, and
request identifiers are harness-supplied sidecar context rather than MCP wire
fields; a digest binds that complete context to the service-observed tool and
arguments before either adapter can map the call.

\subsection{Container-resource and composed runtime profile}

A local container profile gives a concrete, non-commerce resource lifecycle.
The broker alone controls the container interface. It creates a deterministically
named container without starting it, inspects the returned canonical container
ID and configuration, resolves an actual manifest, issues and validates an
implementation-specific container configuration/start certificate derived from
the committed activation record, re-inspects the container, and only then
starts that same container. This certificate is a Docker-adapter refinement
object, not a second definition of the formal activation permit \(p_x\).

The safe profile requires an image pinned by digest, no network, read-only root
filesystem, all Linux capabilities dropped, no new privileges, no host mount,
device, published port, privileged mode, GPU, or automatic restart. CPU and
memory bounds, entrypoint, environment-reference policy, namespace mode, and
all security options are part of the actual manifest. Configuration drift
between inspection and start invalidates the derived start certificate and
therefore prevents effect-permit redemption and start dispatch.

The container engine's create, inspect, and start operations supply independent
evidence for quarantine, resolution, certificate-gated start, and drift
rejection. This local profile does not establish durable activation-record
atomicity, arbitrary cloud policy, or containment of a compromised container
engine. Its purpose is to exercise a real resource handle without network
access, credentials, or production effects.

A separate staged composition selects one of three independently observed
upstream calls for every registered logical case, verifies its wire/context
binding, and admits it to the corresponding adapter, common IR, and reference
state machine. Each case advances only until its registered rejection boundary
or authorized completion.  For the 12 cases that reach resource creation, the
broker creates the digest-pinned container in a non-running state; Docker
inspection supplies the independently observed isolation and resource fields,
while the registered compute-product profile and runtime binding supply the
semantic subject, beneficiary, purpose, and effect fields.  The four authorized
first starts occur only after resolution, authority-domain activation, opaque
handle publication, effect-permit preparation, and re-inspection of the same
container.  The start authorizer then commits the exact effect permit
immediately before dispatching the Docker start command.

This experiment tests an executable evidence path across previously separate
components.  Its deterministic stimulus does not invoke a model, remote MCP
transport, provider, or complete CLI session, and no upstream production source
file is patched.  The reference record and Docker Engine remain separate
durability domains; the established result is broker-side pre-dispatch
mediation, not engine-internal effect finality.

\subsection{Provider-specific cloud mappings}

AWS role passing and Google Cloud service-account attachment demonstrate that
resource creation and attached identity are distinct authorization concerns
\cite{aws2026passrole,google2026actas}. The proposed architecture does not replace these
native controls. A conforming adapter records the native decision as evidence,
then projects the resulting role, audience, data domains, delegation, network,
and resource identity into the common manifest. The cross-provider theorem
applies to the registered common fields; provider-native behavior outside the
profile remains governed by that provider.

\section{Registered Evaluation}
\label{sec:evaluation}

The evaluation answers separation, safety, utility, portability, and cost
questions without executing payments, creating online accounts, launching
cloud resources, or exposing usable credentials. The default suite uses inert
capability descriptors. A separate local container suite exercises
create--inspect--certificate--re-inspect--start ordering under a
network-disabled profile and is reported under its narrower integration
boundary.

\subsection{Questions and hypotheses}

\begin{description}[leftmargin=3em,style=nextline]
  \item[EQ1: Strict separation.]
  Do the six registered observation-limited separation predicates accept a
  fixture that satisfies its declared observed fields but returns a semantic
  capability outside the acquisition envelope?

  \item[EQ2: Safety.]
  Does the full authorization profile reject every registered unbacked, widened, stale,
  split, substituted, co-possessed, multiply activated, or improperly
  committed effect?

  \item[EQ3: Utility.]
  Does it accept supported benign acquisitions in every registered resource
  class without a false rejection?

  \item[EQ4: Portability.]
  Can two separately implemented runtime adapters map their final broker
  requests to the same IR obligations, and can public protocol schemas supply
  their declared evidence classes?

  \item[EQ5: Cost.]
  What local latency and evidence size are attributable to normalization,
  resolution, relational graph evaluation, activation, effect-permit
  preparation, and effect commit?
\end{description}

The registered primary safety criterion is zero unauthorized activation and
zero protected-effect escape in the finite registered denominator. Every zero
is reported with its exact denominator; the corresponding universal claims
come from the conditional proofs, not from extrapolating the finite runs.

\subsection{Resource fixtures}

The main matrix contains five resource classes: funds-backed service, API
quota, compute environment, credential, and child-agent identity. Each class
contains eight unsafe transforms and four benign controls, for 40 unsafe and
20 benign base fixtures. The unsafe families are:

\begin{enumerate}
  \item stronger actual output than the declared product;
  \item alias or order splitting;
  \item cross-agent aggregation or prohibited co-possession;
  \item zero-price or restored-credit acquisition;
  \item retry or duplicate-output multiplication;
  \item activation before quarantine or resolution completes;
  \item an epoch or authority-domain revision change after effect-permit
  preparation but before effect commit; and
  \item drift in an effect permit's bound subject, target, recipient, purpose,
  data domain, or effect at commit.
\end{enumerate}

Five additional replay probes, one per resource class, attempt to consume an
already committed effect slot.  They are reported separately from the
60-fixture primary matrix so they do not alter its preregistered class and
family denominator.

The 40-trace refinement set exercises obligations not represented by the
one-output base fixtures. It comprises one accepted multi-output acquisition;
two output-ordinal failures; two activation-currentness failures; five
publication-currentness cases, of which three drifts reject and two safe
changes accept; six acquisition-permit expiry, epoch, identity, retry, and
operation-key cases; four submitted-handle binding, retry, and collision cases;
five effect-currentness cases; six provider-receipt or root-qualified
provider-key cases; two output-association cases; three explicit
provider/source/purpose envelope-authorization failures; one missing
live-grant-membership case; one missing consumed-activation-slot case; and two
target-to-data-domain failures. The target mapping is a finite, versioned
registry whose identity and digest are bound into each effect permit; the
registered target is deliberately not the same string as its mapped data
domain.

A separate fresh-grant regression instantiates two envelopes that differ only
in \mcode{grantId} and two service-entitlement capabilities under a
one-active-by-kind ceiling. Each capability is admissible in isolation. The
test requires both the reference evaluator and the independently implemented
checker to derive the same authority domain and reject their joint projection;
it is also outside the 60-fixture primary denominator.

The two runtime adapters replay all 60 registered logical mapping cases,
producing 120 adapter traces paired by logical case. The local container suite
contains exactly ten benign and ten unsafe configurations.

The upstream client-component experiment registers nine logical cases for each
runtime: one benign case and eight cases that subsequently exercise an
unregistered tool, missing sidecar context, capture-binding drift, a missing
effect permit, commit-time effect drift, an epoch change, post-certificate
Docker configuration drift, or replay after one authorized start. Each frozen
client executes every case in three fresh client/server process pairs, producing
54 service-observed stdio calls. All 54 calls are transport observations; the
unsafe label identifies the later boundary mutation or gate attempt, not a
rejection by the upstream client.

The staged composition selects repetition one for each logical case and admits
the resulting 18 observed calls to capture validation and the local path. Each
case advances until its registered rejection boundary or authorized completion;
12 reach Docker quarantine and four issue an authorized first start. The runner
records requested and completed Docker start operations; an unsafe case passes
only if its unauthorized attempt adds no start-command request. The payment
profile uses structurally
valid, inert synthetic mandate/receipt records checked against four schemas
frozen from AP2 v0.2; these records are not upstream examples or live payment
credentials. Targeted mutations cover credential type, issuer, digest,
transaction and receipt correlation, confirmation, validity, and success/error
branches. Fault evidence covers 15 activation-path durable crash cuts, five
effect-commit crash cuts, and the complete 32-value schedule space of a
five-bit replay/interleaving model for one logical key. It separately exercises
32 effect-permit replays, 192 receipt substitutions, seven stale-activation
recovery probes, five post-commit/pre-publication drift probes, eight
contract-security probes, ten effect-security probes, and 16 late-success
schedules.

\subsection{Mandatory separating witnesses}

Three witnesses are release gates:

\begin{itemize}
  \item a correctly paid and delivered low-cost API product whose actual
  credential is redelegable or administrative rather than task-local read;
  \item several zero-price acquisitions by aliases or descendants that would
  exceed one canonical active-descendant clause; and
  \item a refund that restores source budget without destroying the first
  credential, followed by a second acquisition under a one-active-credential
  envelope.
\end{itemize}

Additional witnesses combine two separately admissible capabilities into a
forbidden read-and-publish pair, substitute a stronger provider version, and
deliver two outputs after timeout and retry. Every witness uses inert local
handles.

\subsection{Observation-limited separation predicates}

The separation experiment instantiates six registered observation-limited
predicates: amount-only, OAuth-scope-only, mandate-only,
fixed-resource-budget, admission-only, and a request-field-complete pre-action
predicate. Each receives only the proposal-side, transaction-side, or
admission-side fields named by its definition. These proposition-derived test
instruments are neither implementations nor emulations of AP2, OAuth, a
production budget monitor, an admission-control system, or any cited work;
their sole role is to exercise the observation boundary in
Proposition~\ref{prop:observation}. The full profile receives the same
proposal plus the quarantined actual output and current activation graph.

A fixture registers \mcode{applicableBaselines}. Before inclusion, each
separation predicate must pass a \mcode{baselineSelfCheck} proving that the
fixture satisfies the predicate's own validity conditions. An inapplicable
predicate is reported as N/A and excluded from its denominator. The frozen
separation corpus deliberately supplies the declared inputs for all six
predicates in every fixture, so each has the same 60-case denominator. Strict
separation means that such a predicate accepts according to its own contract
while the actual output violates the acquisition envelope. This measured
acceptance is attributable only to the registered predicate--fixture pair; it
is not attributed to any external protocol or implementation.

\subsection{Independent checker and mutants}

The reference implementation emits canonical event traces. A separately
implemented checker imports neither the reference state transition function,
normalizer, envelope evaluator, any separation-predicate implementation, nor
the reference decision predicate. It
independently reconstructs the event digest chain, three state axes, canonical
identities and authority domains, operation keys, actual-manifest and operation
bindings, complete-clause fit, the active-domain aggregate envelope, epochs,
single-use activation and effect slots, commit-before-publish ordering, refund
semantics, exact effect-permit binding, root revision, and the resulting effect
receipt. The formal hypergraph, rooted-derivation, and conversion-edge results
remain proof obligations under the assumptions in
Table~\ref{tab:assumptions}.

The executable semantic matrix instantiates eight unsafe families for each
resource class: stronger actual output, alias splitting, cross-agent
co-possession, restored credit after a zero-price acquisition, duplicate retry,
pre-resolution activation, stale epoch, and use-time drift. Separately, 89
checker negative tests are registered. They mutate every top-level field of
the acquisition permit (23), activation permit (25), effect permit (16), and
effect receipt (10), each while recomputing the event digest chain. The
remaining 15 cover hash-bound and semantic payloads, required activation and
effect-receipt ledger rows, the outbox provider key, logical-time discipline,
source-only predecessor typing, source and destruction evidence, orthogonal
state, commit-before-publish ordering, operation-time effect, refund
occupancy, the registered outcome, and canonical-identity acyclicity. A
semantic case scores only when its registered rejection phase and stable error
code match; a checker negative test passes only on independent rejection.
These tests do not stand in for the arbitrary-hypergraph proof obligations
stated above.

The relationship verifier additionally checks all 40 refinement traces in the
independent checker, 118 acquisition-permit bindings, 81 append-only
activation-permit bindings, 103 resolved rows, 109 output associations, 71
committed-row handle bindings, all 30 canonical effect episodes, and all 117
source reservations across the base and refinement traces. It also checks one
authenticated destruction receipt per resource class, five
same-output-ordinal replay rejections, all five registered target-to-domain
bindings, both fail-closed target-map probes, and four explicit logical-time
advance events.

\subsection{External-source closure}

The external structure audit contains 32 frozen public source units from AP2,
UCP, A2A, MCP, and RepliBench \cite{ap22026spec,ucp2026,a2a2026spec,
mcp2026tools,black2025replibench}. Together they cover all five registered
resource classes. For every unit, the audit records an exact immutable locator,
the upstream object and fields, the paper's safe resource-class abstraction,
and one of four evidence classes for each of the 39 activation fields. The
audit uses public structures and safe abstract labels; it neither copies
deployable examples nor treats a structural mapping as an operational incident.
This evidence establishes source and field closure, not attack prevalence.

\subsection{Metrics and reporting}

Primary metrics are unauthorized capability activation rate, active tuples
without a current root, split/alias bypass, duplicate active authority,
protected-effect escape, stale revival, and phase-accurate rejection. Utility
metrics are benign completion and false rejection by resource class. Cost
metrics are p50/p95/p99 local latency for each kernel phase and serialized
evidence size, including effect-permit preparation and effect commit. Container
execution and runtime-adapter mapping are reported separately from policy-kernel
time.

The result record binds each numerator and denominator to fixture and profile
digests, runtime/protocol versions, exhaustive domains or fixed execution
orders, host hardware where timing is reported, and stable error codes.
Every reported result is sourced from a named frozen record that passes the
independent relationship checks described in
Section~\ref{sec:evidence-status}; the release audit freezes those records and
the corresponding manuscript together.

\section{Evaluation Results}
\label{sec:evidence-status}

The semantic, checker, adapter, and protocol-profile experiments execute
offline with inert identifiers and no external provider effects. A fresh
upstream-component build may fetch lockfile-bound dependencies under its
recorded install policy; the resulting MCP trials use only local stdio and no
provider. A semantic effect commit records only an inert local receipt at the
registered linearization point. Table~\ref{tab:evaluation-summary}
reports exact registered denominators. The container row is a separate local
integration result; it is not folded into the provider-free semantic
denominator.

\begin{table*}[t]
\caption{Registered safety, composition, portability, and external-evidence results, including the staged case-to-boundary matrix.}
\label{tab:evaluation-summary}
\footnotesize
\renewcommand{\arraystretch}{0.96}
\begin{tabularx}{\textwidth}{L{0.20\textwidth} L{0.29\textwidth} Y}
\toprule
Evidence layer & Denominator & Observation \\
\midrule
Reference semantics and trace checker & 60 base fixtures; 810 canonical events; 40 refinement traces; 5 effect-replay probes; 89 checker tamper tests; 1 fresh-\mcode{gid} aggregate regression & 20/20 benign accepted and committed an inert effect; 40/40 unsafe rejected at the registered phase and code; checker agreement 60/60 and refinement agreement 40/40; 89/89 tamper tests and 5/5 replays rejected; the joint fresh-\mcode{gid} aggregate rejected by both evaluators \\
Observation-limited separation predicates & 6 registered predicates; 60 constructed fixtures each & each accepted 60/60, including 40/40 envelope-violating witnesses; proposition-level separation test, not a full-system comparison \\
Runtime-boundary mapping & 60 paired logical cases; 120 adapter replays; 48 fail-closed probes & 60/60 pairs produced identical complete IR obligations; 48/48 probes rejected; 2/2 frozen source boundaries passed \\
Frozen upstream MCP client components & 2 source trees; 9 logical cases per runtime; 3 fresh client/server process pairs per case & 54/54 stdio calls completed initialize, discovery, invocation, server observation, and response; all normalized wire/context bindings agreed across repetitions \\
AP2 target profile & 4 primary and 5 dependency schemas; 1 positive; 25 mutants in 13 families; 39 activation fields & 9/9 source files matched frozen paths, bytes, and digests; 1/1 positive was structurally consistent but remained non-active; 25/25 mutants rejected; 39/39 fields classified \\
Crash and replay & 15 activation cuts; 5 effect cuts; 32 schedules; 32 effect replays; 192 receipt substitutions; 7 activation-recovery probes; 5 publication drifts; 8 contract-security probes; 10 effect-security probes; 16 late-success schedules & no duplicate active record, published handle, or accepted effect; every registered cut, schedule, replay, substitution, recovery, security, publication-drift, and late-arrival outcome passed \\
Local container integration & 10 benign and 10 unsafe configurations & 10/10 benign completed; 10/10 unsafe attempts produced no unauthorized start; all 21 created containers were removed \\
Upstream-observed-to-container composition & 2 benign and 16 unsafe paths; 2 runtime adapters; 12 real containers & 2/2 benign paths completed; all 16/16 unsafe attempts issued zero unauthorized Docker start-command requests; all 12 containers were removed \\
Public-source closure & 32 units, 5 sources, 5 resource classes, 39 fields; 1,248 classifications & every pair classified; 0/32 source units and 0/5 per-class coverage unions supplied a complete activation profile by themselves \\
\bottomrule
\end{tabularx}

\vspace{0.25em}
\textit{Staged composition by logical case.} Each row contains one Codex and
one Gemini execution; ``created'' and ``start'' are exact observed operation
counts, not inferred states.

\scriptsize
\setlength{\tabcolsep}{3pt}
\begin{tabularx}{\textwidth}{L{0.18\textwidth} L{0.34\textwidth} L{0.15\textwidth} Y}
\toprule
Logical case & Earliest terminal boundary and stable result & Container created & Authorized start / later attempt \\
\midrule
Complete chain & broker completion, exit status 0 & 2/2 & 2/2 first starts \\
Unregistered tool & capture/IR gate: \mcode{TOOL\_NOT\_REGISTERED} & 0/2 & 0 starts \\
Missing runtime context & capture gate: \mcode{UPSTREAM\_CONTEXT\_MISSING} & 0/2 & 0 starts \\
Capture-binding drift & capture gate: \mcode{UPSTREAM\_CAPTURE\_BINDING\_MISMATCH} & 0/2 & 0 starts \\
Missing effect permit & start gate: \mcode{E\_EFFECT\_PERMIT\_REQUIRED} & 2/2 & 0 starts \\
Effect-binding drift & effect commit: \mcode{EFFECT\_PERMIT\_BINDING\_MISMATCH} & 2/2 & 0 starts \\
Epoch drift & effect commit: \mcode{EFFECT\_COMMIT\_RACE} & 2/2 & 0 starts \\
Container-configuration drift & re-inspection: \mcode{E\_CONFIGURATION\_DRIFT\_RESOURCES} & 2/2 & 0 starts \\
Single-use replay & second certificate redemption: \mcode{E\_CERTIFICATE\_REPLAY} & 2/2 & 2/2 first starts; 0/2 second start requests \\
\bottomrule
\end{tabularx}
\end{table*}

\subsection{Semantic separation and independent reconstruction}

The reference execution and checker agreed on every fixture. The 40 unsafe
traces divide evenly across eight transformation families and five resource
classes. Twenty-five reject during activation, five during delivery, and ten
during effect commit. Six stable codes account for every rejection: 15
\nolinkurl{RELATIONAL_ENVELOPE_VIOLATION} outcomes and five each for
\nolinkurl{OUTPUT_ASSOCIATION_CONFLICT},
\nolinkurl{OUTPUT_OUTSIDE_RELATIONAL_CLAUSE},
\nolinkurl{INVALID_AUTHORITY_TRANSITION}, \nolinkurl{EFFECT_COMMIT_RACE}, and
\nolinkurl{EFFECT_PERMIT_BINDING_MISMATCH}.

The checker rejected 89/89 additional trace-tampering tests. Field-complete
subsets mutate the 23 acquisition-permit, 25 activation-permit, 16
effect-permit, and ten effect-receipt top-level fields independently while
retaining the corresponding identifier and rehashing the trace. The remaining
15 cover hash-bound and semantic payloads, required ledger rows, the outbox
provider key, logical-time discipline, source-only predecessor typing, source
and destruction evidence, orthogonal state, commit ordering, operation-time
effect, refund occupancy, registered outcome, and canonical-identity
acyclicity. Thus the four field-complete subsets rejected 23/23, 25/25, 16/16,
and 10/10, and the remaining suite rejected 15/15.

All 20 benign traces prepared one exact single-use effect permit, consumed its
slot, and produced one inert effect receipt. Five epoch/revision races and five
commit-time binding drifts rejected before the effect linearization point;
five additional replay probes rejected reuse of a consumed effect slot.  The
authority-domain regression changes only the fresh grant identifier and
confirms that both envelopes map to the same
$\langle\mathit{issuer},\mathit{canonicalRoot},\mathit{policyEpoch}\rangle$
domain and the same effect-idempotency slot.  Under a one-active-by-kind
ceiling, each of two capabilities is separately admissible, while their joint
projection is rejected by both the reference evaluator and the independently
implemented checker. Thus a fresh grant identifier is provenance, not a means
to reset domain accounting.

All 40 refinement traces also agreed under independent reconstruction. Their
registered outcomes include the multi-output, ordinal, activation-currentness,
publication-currentness, acquisition-permit, submitted-handle,
effect-currentness, provider-correlation, output-association,
envelope-authorization, live-membership, consumed-slot, and target-map cases
specified in Section~\ref{sec:evaluation}. Of the five publication-currentness
cases, expiry, epoch, and manifest drift rejected without publishing a handle,
while an unrelated state update and safe graph growth were accepted.

Across the base and refinement evidence, the verifier checked 118 exact
acquisition-permit bindings and 114 durable dispatch-outbox records under the
source-only input profile; 81 append-only activation-permit bindings; 103
resolved rows; 109 root-qualified-operation/output-ordinal associations; and
71 submitted opaque handles bound to committed active rows. It also checked 30
canonical effect episodes, including explicit state preconditions, time
intervals, expiry, and all six current epochs; 117 authenticated source
reservations; five authenticated destruction receipts; five
same-output-ordinal replay rejections; and four explicit logical-time advances.
The provider-key tests accepted one distinct-root operation and rejected one
cross-root receipt; exact output-association retry returned the original
resource, while a conflicting association rejected. The handle tests accepted
one exact retry, rejected two ownership collisions and one wrong submitted
handle, and returned the original committed handle on retry. The finite
ordinal registry bound each output to its contract, version, and resource
class. The versioned target-domain registry exercised one non-identity mapping
in each of the five resource classes, and its identity, version, and digest
remained bound to every one of the 30 effect permits.

Every registered observation-limited separation predicate accepted all 60
constructed fixtures because each fixture intentionally satisfies the fields
that predicate observes. In
particular, the 40 separating witnesses are not protocol-invalid requests;
their unsafe fact is in the actual returned capability or prospective graph.
This result establishes the observation boundary of
Proposition~\ref{prop:observation}. It is neither a system-level comparison nor
a failure-rate estimate for complete AP2, OAuth, budget, or admission-control
implementations.

\subsection{Protocol and public-source closure}

The dedicated AP2 profile freezes four primary and five referenced v0.2 JSON
schemas at one immutable upstream commit. Its positive record set is synthetic,
inert, and structurally valid; credential signature, disclosure, issuer-key,
and trust-policy verification are explicit upstream premises. The checker then
evaluates cross-record hashes, transaction and receipt references, issuer and
time consistency, success/error branches, confirmation binding, and
product/resource correlation. It also classifies all 39 activation fields as
upstream, authenticated after that premise, provider-supplied, or locally
derived. AP2 supplies transaction evidence in this mapping; it supplies neither
the actual-capability manifest nor an activation certificate.
All nine frozen schema files matched their registered paths, byte counts, and
digests. The synthetic positive passed its structure and correlation profile
and remained non-active because independent resource evidence was
intentionally absent; all 25 targeted mutants across 13 families rejected
with their registered codes. The
field mapping contains two upstream, six authenticated, 13 provider, and 18
derived fields.

The broader source audit freezes seven units each from AP2, UCP, A2A, and MCP,
and four from RepliBench. Across the resulting 1,248 unit--field pairs, 11 are
native authenticated, six require an authenticated sidecar, one is
independently observable, and 1,230 are unavailable from that source unit.
Every unavailable entry remains explicit rather than being inferred from
unauthenticated description. Consequently, the 0/32 complete-unit result is a
fail-closed closure finding: the public structures contribute evidence, while
the resource resolver, current graph, epochs, and activation transaction supply
the distinct fields needed to authorize activation.

\subsection{Runtime, crash, and resource evidence}

The two adapter implementations share only a declarative profile and golden
vectors. They target the frozen final structured MCP call paths identified in
Section~\ref{sec:instantiations}; direct network, generic shell, raw container
socket, and unregistered-server paths are outside those profiles and fail
closed. The 48 negative replays cover unregistered boundaries, servers,
transports, and tools; missing or unknown fields; runtime/profile mismatch;
invalid arguments; and correlation mismatch.

The frozen upstream experiment executed nine cases three times through each
client component.  All 54 fresh client processes negotiated with 54 fresh,
separate stdio witness processes, discovered the advertised tool, sent the full
arguments, received the deterministic response, and matched the server-side
observation.  The independent verifier reconstructed every argument, sidecar,
capture, wire, and response digest from the registries and confirmed distinct
process identities.  The Gemini path reached Core discovery,
\mcode{DiscoveredMCPTool}, response transformation, and
\mcode{Client.callTool}.  The Codex path reached
\mcode{RmcpClient::call\_tool}; its higher-level \mcode{PreparedMcpCall} boundary
was source-audited but was not directly executed.

At the Codex release tag, \mcode{Cargo.lock} labels 150 workspace-local packages
as version \mcode{0.0.0} while the release workspace declares
\mcode{0.154.0}.  The offline build therefore used a temporary detached clone
whose only lock normalization changed those 150 local version labels to
\mcode{0.154.0}; external dependency resolution was unchanged.  The original
and derived lock digests, compiler/toolchain, harness, built binary, and both
Codex source boundaries are recorded, and the frozen source checkout remained
clean.  The Gemini runner created a detached temporary clone, installed from the
committed lockfile under a recorded registry-access policy, and built Core
without changing the supplied checkout. Across all 27 fresh client processes,
one stable loaded-module path set covered 446 local distribution files and
1,965 files from 276 lockfile-owned external packages. The record binds every
loaded file digest, package version, lockfile integrity value, portable
installation path, Node/npm version, and aggregate closure digest. The
independent verifier rebuilt a second detached clone and rehashed the complete
2,411-file executed closure.

The crash harness reconstructs a new broker instance solely from checksummed
canonical JSON stored with file synchronization, atomic rename, and directory
synchronization. Across all 15 activation cuts and 32 schedules, each logical
operation created at most one active record and one published handle. Across
all five effect-commit cuts, the used slot, accepted decision, inert result,
linearization witness, and receipt were co-committed; the effect permit
remained append-only.
All 32 effect-permit replay attempts and all 192 registered receipt
substitutions failed closed, including the 32 output-association conflicts.
Seven activation-recovery probes rejected stale permits after restart. Five
post-commit/pre-publication expiry, epoch, state, manifest, or root drifts also
rejected after restart without publishing a handle. All eight contract-security
and all ten effect-security probes rejected their registered mutations, and all
16 delayed successes returned to quarantine before explicit reconciliation.
This profile covers broker-process restarts and an inert idempotent provider
ledger; it does not model hardware loss or faults inside the file system's
atomic-write primitive.

The Docker suite uses one already-present digest-pinned image, performs no
pull, and applies a non-root, network-disabled, read-only, capability-dropped,
no-new-privileges profile with no mounts, devices, published ports, or restart
policy. Ten distinct CPU/memory/PID configurations completed the required
create--inspect--certificate--re-inspect--start sequence. Ten missing,
malformed, tampered, wrong-scope, replayed, expired, or configuration-drift
cases were rejected with zero successful-start increment. This result
establishes certificate-gated local start and drift detection, not durable
activation-record atomicity or protection for starts outside the broker.

The staged composition selected the first server-observed call for each of the
18 logical cases and admitted it to a staged process beginning with capture
validation and IR normalization. Each case advanced only until its registered
rejection boundary or authorized completion. Both benign paths exited with
status zero, 12 cases reached real Docker quarantine, and four authorized first
starts reached effect-permit commit and broker-mediated dispatch. All 16 unsafe
attempts issued zero unauthorized Docker start-command requests.
For each runtime, an unregistered tool, missing sidecar context, and altered
capture binding rejected before resource creation; missing permit, effect
drift, stale epoch, and post-certificate configuration drift rejected before
start dispatch; and certificate replay rejected without a second request after
one authorized first use.  All 12 created containers were removed.  The four
authorized starts and effect-permit commits comprise the two benign paths and
the first use in each replay case.

These observations establish execution of frozen upstream MCP client
components and a digest-bound staged handoff into the local broker path.  They
do not represent a complete interactive CLI session, model-driven tool
selection, remote MCP or network-provider execution, durable cross-system
atomicity, Docker Engine internal finality, or production deployment.

\subsection{Cost and repeatability}

The policy-kernel benchmark executes 20 warm-up and 200 measured rounds over
the fixed 60-fixture order, yielding 12,000 complete traces on an Apple M4 Max
with Node.js 24.18.0. Table~\ref{tab:latency} reports every measured in-memory
phase in the frozen kernel record. Adapter execution is reported by agreement
and protocol outcomes, not folded into these timings. Durable I/O, network,
provider execution, container creation, and production concurrency are
excluded.

\begin{table}[t]
\caption{Local in-process latency in microseconds; sample counts reflect which
actions occur in the fixed 60-fixture matrix over 200 measured rounds.}
\label{tab:latency}
\small
\begin{tabularx}{\columnwidth}{Y r r r}
\toprule
Path (samples) & p50 & p95 & p99 \\
\midrule
Grant installation (12,000) & 38.250 & 51.208 & 55.125 \\
Proposal normalization (16,000) & 226.584 & 264.458 & 360.209 \\
Source reservation (16,000) & 24.875 & 27.750 & 32.834 \\
Dispatch (16,000) & 227.250 & 264.541 & 357.084 \\
Delivery lifecycle (17,000) & 17.083 & 19.083 & 22.917 \\
Quarantine receive (17,000) & 48.833 & 54.417 & 62.708 \\
Lifecycle update (3,000) & 20.459 & 24.375 & 29.458 \\
Actual-capability resolution (15,000) & 52.334 & 58.334 & 67.083 \\
Relational graph gate (16,000) & 277.625 & 321.542 & 423.792 \\
Activation commit (11,000) & 253.375 & 295.583 & 403.375 \\
Handle publication (11,000) & 191.292 & 229.542 & 342.333 \\
Effect-permit preparation (6,000) & 267.667 & 296.084 & 419.792 \\
Effect commit (6,000) & 288.958 & 320.000 & 441.542 \\
Complete semantic trace (12,000) & 2041.625 & 3418.709 & 3601.875 \\
\bottomrule
\end{tabularx}
\end{table}

The complete semantic event ledger contains 810 JSON Lines records totaling
1,105,230 bytes: mean 1,364.5, p50 872, p95 3,547, p99 3,667, and maximum
3,731 bytes per event. The 60 serialized traces total 2,691,339 bytes: mean
44,855.7, p50 44,860, p95 74,292, p99 77,508, and maximum 77,508 bytes per
trace. Semantic, crash/replay, external-source, and AP2 runners each compare
repeated deterministic payloads byte-for-byte. Timing fields are reported as
measurements rather than deterministic artifacts.

\FloatBarrier

Formal results in Section~\ref{sec:formal} remain conditional mathematical
statements proved in Appendix~\ref{app:proofs}. The finite observations above
validate the registered implementations and source profiles; they neither
replace the proofs nor enlarge the complete-mediation and evidence assumptions
in Section~\ref{sec:assurance}.

\section{Related Work}
\label{sec:related}

\subsection{Protection, usage control, and stateful authorization}

Least privilege and complete mediation are foundational protection principles
\cite{saltzer1975protection}. HRU formalized rights acquisition and established
the undecidability boundary for general safety \cite{harrison1976protection}.
State-transition comparisons of access-control lists, trust management, and
capability systems make acquisition and delegation explicit as changes to
authorization state \cite{chander2001state}. Delegation Logic and distributed
proving cast cross-domain authorization as proof of compliance assembled from
policies and credentials \cite{li2003delegationlogic,bauer2005distributedproving};
proof-carrying authentication likewise separates proof construction from
simple request-side checking \cite{appel1999pca}. Our certificate adopts that
checkable-evidence discipline for a different transition: after fulfillment,
the concrete provider object, its aliases, controller, capabilities, and
aggregate edges may enter the active authority graph.

Our architecture deliberately uses a finite registered conversion closure and a
decidable envelope predicate. UCON models mutable attributes and ongoing
authorization \cite{park2004ucon}; ABAC and zero-trust architectures provide
attribute and resource-centric policy foundations
\cite{nist800162,nist800207}. Stateful least privilege permits rules that
depend on prior cloud actions \cite{cao2024stateful}. Our distinct object is
the graph transition that admits a provider-created resource as authority,
including actual-output resolution, alias aggregation, and refund-safe
retirement.

Relationship-based authorization systems demonstrate that tuple-defined
authorization graphs can be evaluated at production scale while respecting
causal ordering across access-control and object changes
\cite{pang2019zanzibar}. Such a system decides requests over already admitted
relations; the proposed admission predicate instead controls whether a newly
delivered resource and its relations may join the active graph.

Macaroons and OAuth mechanisms support contextual caveats, audience
restriction, structured authorization, and token exchange
\cite{birgisson2014macaroons,rfc8707,rfc8693,rfc9396}. The authorization layer can encode
their authenticated constraints inside a capability descriptor or evidence
profile. Formal web-model analysis also shows that OAuth's guarantees depend on
explicit protocol and threat assumptions \cite{fett2016oauth}. Even when those
guarantees hold, token validity alone does not establish that an externally
returned resource is admissible under the full acquisition graph.

\subsection{Agent authorization and dynamic capabilities}

ToolEmu exposes consequential failures across high-stakes agent toolkits, while
AgentDojo evaluates tool-using agents when returned data may carry adversarial
instructions \cite{ruan2024toolrisks,debenedetti2024agentdojo}. These
environments establish the practical need to mediate agent--tool boundaries;
they do not define the authorization-state transition by which a
provider-created account, credential, entitlement, or principal becomes active.

The OWASP Agent Control Standard (ACS) v0.1 standardizes an Observed
Agent--Guardian Agent JSON-RPC contract with lifecycle hooks, including tool-call
request/result and subagent start/stop, five decision dispositions, and trace,
inspection, provenance, and cryptographic-signing profiles together with the
ACS-Core audit chain
\cite{owasp2026acs}. It provides a portable intervention and observability
plane while leaving policy semantics to the Guardian. The proposed activation policy can run
as such a policy: hooks capture acquisition dispatch and delivery, while
the policy enforces quarantine, authenticated actual-state resolution,
aggregate graph admission, atomic activation, and effect-permit redemption.
This mapping is our instantiation, not an ACS guarantee; ACS in turn
standardizes hook, response, and observability surfaces outside
the activation policy. Thus ACS transports the intervention while our policy
defines its acquisition-specific decision and state transition.

Task-scoped authorization and agent-identity protocols bind actions to task or
delegation context \cite{sharma2026pauth,prakash2026aip}. Bounded Agents adds
principal chains, accumulated state, composition closure, and delegation
bounds \cite{muruaga2026bounded}. HCP studies execution invariants for
MCP-style runtimes, including canonical resources and handle provenance
\cite{liu2026executioncontrol}. These mechanisms govern invocations within an
existing authority model. Our architecture constructs the provenance and
activation decision for a newly delivered account, credential, entitlement,
environment, or principal before that object joins the model.

Pre-action policy enforcement and deployed layered access-control systems bind
agent identity, request parameters, execution context, and signed decisions
before a tool operation \cite{uchibeke2026oap,malik2026granular}. Governance
frameworks separately distinguish technical capability from allowed autonomy
\cite{zheng2026autonomylevels}. The proposed architecture shares their separation of
capability and permission while moving the decision object to the actual
provider-created resource and its aggregate acquisition graph after
fulfillment.

EBL-Core binds one fully materialized candidate action, policy versions, typed
evidence, a verifiable decision derivation, and current-state redemption
through an execution-release contract \cite{wu2026eblcore}. Our work
addresses a different temporal object: the provider-created output is not
available when the acquisition action is authorized. It is quarantined and
resolved after fulfillment, then admitted only if the prospective aggregate
resource--capability graph satisfies the current acquisition envelope.

Governing Dynamic Capabilities binds tool manifests and verifiable interaction
evidence \cite{zhou2026dynamic}; ACNBP defines capability discovery,
negotiation, attestation, and binding among agents \cite{huang2025acnbp}.
The authorization layer complements that work with post-fulfillment activation: its
strict separation keeps the tool or remote-agent declaration unchanged while
the provider-created resource instance, controller, actual manifest, or
aggregate graph differs.

Public OpenPort and intent-governed authorization work place normalized effect
decisions outside the model and bind them to intent
\cite{zhu2026openport,zhu2026igac}. The authorization layer supplies a prior certificate
interface: it decides whether an acquired resource may enter the set from
which such an operation-time gate accepts effects.

\subsection{Payment, commerce, and resource budgets}

AP2 and UCP define agent-mediated commerce artifacts and lifecycle structures
\cite{ap22026spec,ucp2026}. Formal and empirical analyses address payment
protocol consistency, mandates, context binding, replay, and threat surfaces
\cite{jiang2026formalpayments,aviv2026beyond,lan2026zerotrustpayments}.
Our architecture requires these controls where applicable. Its counterexample
holds transaction authorization and correct fulfillment fixed, then varies the
actual semantic authority of the delivered resource.

A recent systematization places transaction authorization within the broader
cross-layer security problem of agentic commerce
\cite{mao2026agenticcommerce}. Whisper-attack measurements further show that
an AP2-valid signed transaction can encode a shopping decision steered by
merchant text; A-VIP binds credential lookup, cart contents, and spending to
signed intent \cite{louck2026whisper}. These works govern the formation and
authorization of the transaction. Our analysis holds transaction authorization
and correct fulfillment fixed, then decides whether the semantic authority of
the delivered resource may enter the active graph.

Resource-bounded agent contracts model multidimensional limits and recursive
allocation \cite{ye2026agentcontracts}. Numeric and affine resource accounting
is complementary to the relational graph predicate. A free account can create
authority without spending a unit, a refund can restore money while leaving a
credential live, and a co-possession rule can fail even when every scalar
limit holds.

\subsection{Cloud identity and autonomous replication}

Native cloud systems already protect sensitive resource--identity bindings:
AWS constrains which role may be passed to an instance, and Google Cloud
requires permission to act as an attached service account
\cite{aws2026passrole,google2026actas}. The proposed architecture does not supersede native
IAM. It uses native decisions and authoritative inspection as evidence, then
enforces one cross-resource activation invariant for the registered common
profile.

RepliBench decomposes autonomous replication into obtaining resources,
accessing model artifacts, deployment, and persistence
\cite{black2025replibench}; self-improving-agent work demonstrates evaluated
successor creation in sandboxed research settings \cite{zhang2025dgm}. These
works motivate resource and descendant acquisition as observable security
events. This work contributes the access-control semantics, not a new
replication capability benchmark.

\section{Assurance Boundary}
\label{sec:assurance}

The proposed architecture makes a strong in-domain claim: no registered resource becomes
executable authority unless its actual resolved capability and complete
prospective acquisition graph satisfy every live envelope in its current
authority domain. The claim is
conditional on the explicit assumptions in Table~\ref{tab:assumptions}; the
conditions identify what an adapter and deployment must establish.
Table~\ref{tab:evidence-meaning} then separates the conclusion supported by
each evidence layer from the claims that still require a stronger deployment
or a different experiment.

\subsection{What each evidence layer establishes}

\begin{table}[H]
\caption{Interpretation of evidence layers.}
\label{tab:evidence-meaning}
\small
\begin{tabularx}{\textwidth}{L{0.20\textwidth} Y Y}
\toprule
Evidence & Establishes & Does not by itself establish \\
\midrule
Formal proof & all reachable accepted traces preserve the stated invariants under A1--A11 & correctness of a concrete resolver, identity service, provider, or deployment premise \\
Reference semantics & registered fixtures instantiate the formal state, exact permit binding, currentness, and slot obligations under the frozen finite profile & a machine-checked forward simulation for every implementation state, provider behavior, or production-scale performance \\
Independent trace checker & emitted flat-trace records satisfy registered finite event, state, and aggregate invariants without reusing runtime decision code & arbitrary hypergraph derivations or completeness of events that bypass the declared boundary \\
Runtime source/profile audit & frozen adapter captures the stated final logical acquisition request within explicit exclusions & an upstream patch, raw network-wire coverage, or unknown execution paths \\
Frozen upstream MCP client-component execution & both frozen clients complete initialize, discovery, stdio call, server observation, and response for all registered cases & a complete CLI session, model-driven selection, or direct execution of the Codex higher-level capture point \\
Local container integration & real create/inspect/certificate/re-inspect/start ordering and configuration checks on the registered host profile & durable activation-record atomicity, arbitrary cloud IAM, kernel compromise, or containment of an unsafe image \\
Upstream-observed composition & server-observed calls from both frozen clients enter one staged pipeline and reach either the registered rejection boundary or authorized completion; 12 cases reach Docker quarantine and four authorized first starts reach effect-permit commit and brokered dispatch & simultaneous model-to-provider execution, durable cross-system atomicity, engine-internal finality, or paths that bypass the broker \\
External-schema closure & public records supply, derive, or lack each required evidence field & real-world attack prevalence or undocumented provider semantics \\
\bottomrule
\end{tabularx}
\end{table}

Definition~\ref{def:implementation-refinement} states the full abstraction and
step-refinement obligation. The executable rows in
Table~\ref{tab:evidence-meaning} test named observable projections and
rejection boundaries of that obligation; they are not promoted to a universal
refinement proof for unexecuted states or undeclared deployment paths.

\subsection{Fail-closed profile admission}

A provider profile is admitted only when every policy-relevant output field is
authenticated, independently observable, or conservatively upper-bounded. If
the provider can create a raw credential directly in an agent-controlled
channel, quarantine is not complete. If aliases cannot be resolved, split
non-evasion is not available. If terminal state cannot be queried after a
timeout, the output remains indeterminate and quarantined. If effects can
bypass the broker, confinement does not cover those effects.

A conforming effect adapter also exposes a linearization primitive: it must
re-resolve the current lineage and atomically validate and consume the exact
effect permit with effect acceptance, or let the recipient redeem that permit
before applying an idempotent effect. A check followed by an unbound external
call is not a conforming implementation of the effect-confinement theorem.

These outcomes are explicit conformance failures, not inferred success. A
deployment may still use a narrower profile whose observable fields and
mediation points satisfy the contract.

\subsection{Scope of the guarantee}

The guarantee covers registered resource-to-authority transitions under
authenticated or independently observed evidence and complete mediation.
General right-acquisition safety, hidden real-world consequences, provider
behavior beyond registered evidence, objective alignment, side-channel
resistance, and partition liveness remain separate assurance domains. AP2,
UCP, A2A, MCP, OAuth, and cloud IAM contribute authenticated inputs within
their respective protocol scopes; the authorization layer composes those inputs into the
post-fulfillment activation decision.

The executable evaluation covers provider-free traces, inert credentials, and
local containers under the stated profiles. Performance results bind the named
host, implementation, and denominator. Finite zero observations characterize
those registered executions, while the conditional proof supplies the
universal in-model result.

\subsection{Operational consequence}

The assurance boundary produces a simple deployment rule: transaction success
may advance commerce state, and provider delivery may advance delivery state,
but only a current activation record may advance authority state.
No adapter may collapse those transitions. That rule remains meaningful even
when an organization chooses different payment, identity, runtime, or provider
systems.

\section{Conclusion}
\label{sec:conclusion}

An authorized acquisition is not yet authorization to exercise what it
returns. Autonomous agents make this distinction consequential because one
valid call can create a durable credential, execution environment, service
entitlement, or new principal whose future effects outlive the transaction.

The proposed architecture places the missing decision after fulfillment and before first
activation. It represents actual provider outputs in a typed acquisition
hypergraph, evaluates a relational downward-closed envelope over canonical
identities and aggregates, keeps returned resources in quarantine, commits
each finite-range output ordinal into at most one slot-unique active record,
revalidates the exact row and currentness before publishing its opaque handle,
preserves capability occupancy across refunds, and
rechecks current lineage before issuing a single-use effect permit that is
atomically consumed at effect commit. The resulting theorems cover
backed authority, non-amplification, split non-evasion, crash safety, freshness,
and end-to-end effect confinement within a registered decidable profile.

The executable evidence realizes that profile across five resource classes:
20 benign traces complete and 40 strict unsafe traces reject over 810 events,
with the latter divided among 25 activation, five delivery, and ten effect
rejections. Two runtime mappings agree on 60 logical cases, all 40
implementation-refinement traces agree under independent reconstruction, and
all 89 registered checker mutations reject, including field-complete sets for
23 acquisition-permit, 25 activation-permit, 16 effect-permit, and ten
effect-receipt fields. The verifier checks 118 acquisition-permit and 81
activation-permit bindings, 103 resolved rows, 109 output associations, 71
committed-row handle bindings, 30 canonical effect episodes, 117 source
reservations, five destruction receipts, five distinct target-to-domain
bindings, five same-ordinal replay rejections, and four logical-time advances.
All 15 activation cuts, five effect cuts, 32 schedules, 32 effect replays, 192
receipt substitutions, seven activation-recovery probes, five
post-commit/pre-publication drifts, eight contract-security probes, ten
effect-security probes, and 16 late completions satisfy their registered
safety outcomes. A local container gate passes all 20 cases.
Frozen Codex and Gemini MCP client components complete 27 calls each against
byte-identical deterministic local stdio witnesses. An 18-case
staged composition admits their server-observed calls and advances each to its
registered rejection boundary or authorized completion: both benign paths
complete, all 16 unsafe attempts issue no unauthorized Docker start request,
four authorized first starts occur, and all 12 created containers are removed.
This evidence exercises frozen client components and broker-mediated local
Docker starts; it is not a complete interactive CLI or model-driven session, a
network-provider experiment, or a production deployment.
The 32-unit public-source audit further establishes that transaction,
communication, tool, and replication structures contribute useful evidence
without collapsing into the separate activation certificate.

The separation is architectural: payment, budget, endpoint, identity, tool,
and provider-native controls retain their full roles, while none is silently
treated as the activation certificate for a newly acquired resource. This
gives heterogeneous autonomous-agent systems one auditable rule for moving
from acquisition to authority.

\appendix
\section{Proofs}
\label{app:proofs}

This appendix makes the transition-preservation argument explicit. We reason
over finite traces generated by the accepted transition relation in
Section~\ref{sec:formal}. A rejected transition leaves the security state
unchanged. Cryptographic verification, durable atomicity, canonical identity,
and conservative provider resolution are invoked only through A1--A11.

\subsection{Auxiliary definitions}

\begin{definition}[Current derivation]
\label{def:current-derivation}
A current derivation \(\Gamma\leadsto_s x\) is an acyclic rooted sub-hypergraph
ending at \(x\) such that:
\begin{enumerate}
  \item its root is \(\Gamma\in\mathsf{Live}_{\rho(\Gamma)}(s)\), with no
  implicit union of predecessor roots;
  \item every leaf belongs to \(\mathsf{Base}_\Gamma\): it is authenticated
  source evidence or a capability seed exactly authorized by \(\Gamma\), and
  no leaf is an active capability rooted elsewhere;
  \item every edge \(e\) has committed root \(\Gamma_e=\Gamma\),
  \(\kappa_e\in K_\Gamma\),
  \(\omega_e\in\Omega_{\kappa_e}\),
  \(I_e\in\mathcal I_{\kappa_e}\), and
  \(O_e\in\mathcal O_{\kappa_e}\), and its inputs and outputs equal their
  committed canonical resource identities and actual manifests;
  \item every edge satisfies \(R_{\kappa_e}(I_e,O_e)\) and its registered
  ordinal-specific \(F_{\kappa_e}(I_e,\Gamma,\omega_e)\) bound;
  \item every bound grant, policy, identity, contract, resource, and revocation
  epoch equals current authoritative state, and every node's descriptor
  validity interval contains \(\mathsf{now}(s)\);
  \item every non-source capability node \(y\), including each capability
  input, has a matching durable row \(a_y\) satisfying
  \(\mathsf{CurrentRow}_s(a_y,y)\) and
  \(A_s(\mathsf{slot}(a_y))=\mcode{active}\); and
  \item its terminal node has such a row \(a_x\), and its published handle
  \(\chi_x=\mathsf{handle}(a_x)\) satisfies
  \[
    \mathit{Handles}_s(\chi_x)=a_x.
  \]
\end{enumerate}
\end{definition}

\begin{definition}[Accepted trace]
An accepted trace is \(s_0\xrightarrow{a_1}s_1\cdots
\xrightarrow{a_n}s_n\), where \(s_0\) is empty except for trusted registries
and every \(a_j\) satisfies all guards of its named transition. External
messages that do not pass a transition guard may be recorded for audit but do
not change \(G\), \(\mathit{Active}\), \(\mathit{Handles}\),
\(\mathit{EffectReceipts}\), \(\mathit{UsedActivationKeys}\), or
\(\mathit{UsedEffectKeys}\). Logical time changes only through an accepted
\(\mathsf{AdvanceTime}\) step, whose post-state is the refresh-closed result;
there is no silent wall-clock transition.
\end{definition}

\begin{definition}[Implementation refinement obligation]
\label{def:implementation-refinement}
Let \(\mathcal S_{\mathrm{impl}}\) be an implementation state space and let
\(\mathcal A:\mathcal S_{\mathrm{impl}}\to\mathcal S\) be an abstraction
map.  An accepted implementation step \(j\to_{\mathrm{impl}}j'\) refines the
model when there are \(m\geq0\) formal steps
\[
 \mathcal A(j)=t_0\xrightarrow{b_1}t_1\xrightarrow{b_2}\cdots
 \xrightarrow{b_m}t_m=\mathcal A(j')
\]
where every \(b_i\) is an accepted formal transition,
and the implementation and formal observations agree at both endpoints:
\(\mathsf{Obs}_{\mathrm{impl}}(j)=
\mathsf{Obs}_{\mathrm{formal}}(\mathcal A(j))\) and likewise for \(j'\).
Here \(\mathsf{Obs}\) includes enabled handles, accepted protected effects,
canonical active authority, consumed slots, epochs, domain revisions, and
logical time. A zero-step match is permitted only when neither a security
observation nor authority state changes.

An executable specialization must map initial states to the formal initial
state, refine every accepted step, map a rejected step \(j\to j'\) to a
stutter satisfying \(\mathcal A(j)=\mathcal A(j')\), and
map crash recovery to the durable projection followed by formal recovery
steps.  These are the refinement obligations used to assess the project's
executable specializations; they do not assert a refinement proof for an
unexamined production deployment.
Consequently, an initial-state match plus stepwise accepted-transition or
stutter simulation transfers I1--I6 to every reachable implementation
abstraction.
\end{definition}

\begin{definition}[Crash and recovery]
\label{def:crash-recovery}
Let \(\mathsf{DurableProjection}(s)\) be the store state after the last
fully committed transaction represented in \(s\). It contains every write of
that transaction and its committed predecessors, and contains no uncommitted
or process-volatile write. Define
\[
  \mathsf{Crash}(s)=\mathsf{DurableProjection}(s).
\]
Let \(\mathcal R\) contain the ordinary guarded transitions that may re-drive
a durable outbox operation with its stable key, record or query a provider
terminal state, quarantine and resolve a correlated result, prepare or commit
activation or effect, and publish a handle. Then
\[
  \mathsf{Recover}(s)
  =\{t\mid t\in\mathsf{Reach}_{\mathcal R}(\mathsf{Crash}(s))\}.
\]
Every member is a finite prefix of recovery; the definition asserts neither
that recovery terminates, that an activation eventually succeeds, nor that an
effect is eventually accepted. A
published handle is a deterministic name for its committed active record, so
republication after recovery yields the same handle and creates no additional
active record. A \emph{fault-extended accepted trace} interleaves accepted
ordinary transitions, \(\mathsf{Crash}\) steps, and finite recovery prefixes.
\end{definition}

\begin{lemma}[Storage-derived slot uniqueness]
\label{lem:slot-uniqueness}
Under A6, every state in a fault-extended accepted trace contains at most one
committed \(\mathit{UsedActivationKeys}\) row and at most one committed
\(\mathit{Active}\) row
for each activation slot. Every committed active row and its corresponding
used-key row enter durable state in the same activation transaction. Under A6
and A10, at most one accepted effect receipt exists for each effect slot, and
that receipt and its \(\mathit{UsedEffectKeys}\) row enter durable state with the same
effect acceptance.
\end{lemma}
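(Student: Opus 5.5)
We prove the lemma by induction over the positions of a fault-extended accepted trace. The invariant is: for every activation slot \(j\), the committed state contains at most one \(\mathit{UsedActivationKeys}\) row and at most one \(\mathit{Active}\) row keyed by \(j\). Moreover, an \(\mathit{Active}\) row for \(j\) exists exactly when a used-key row for \(j\) was written by the same committed activation transaction. The effect-side invariant is analogous: each effect slot has at most one accepted receipt and at most one \(\mathit{UsedEffectKeys}\) row, and these two rows were written together.

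The base case is immediate, because \(s_0\) contains only trusted registries and all four relations are empty.

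For the inductive step we case-split on the step kind.

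\emph{Rejected or aborted transitions.} By A6, and by the definition of accepted trace, these leave durable state unchanged.

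\emph{Accepted transitions that cannot touch the four relations.} These are every transition other than \(\mathsf{CommitActivation}\) and \(\mathsf{CommitEffect}\). We check against the transition list that none writes to \(\mathit{UsedActivationKeys}\), \(\mathit{Active}\), \(\mathit{UsedEffectKeys}\) or \(\mathit{EffectReceipts}\). \(\mathsf{Fence}\), \(\mathsf{Refresh}\) and \(\mathsf{DestroyVerified}\) change only the authority coordinate \(A_s(j)\) and \(\mathit{Handles}\); they neither insert rows nor delete them. \(\mathsf{PublishHandle}\) writes only \(\mathit{Handles}\). The ledgers are append-only. Hence cardinalities per slot are preserved.

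\emph{\(\mathsf{CommitActivation}\).} Its write set consists of the graph append, the used-slot row, and exactly one slot-keyed active row, all for \(\mathsf{slot}(p_x)\). A6 gives two facts.
\begin{enumerate}
\item The complete write set commits at one linearization point, or none of it does. This yields the ``same transaction'' clause.
\item The durable \(\mathsf{UNIQUE}(\mathit{slot})\) constraints on \(\mathit{UsedActivationKeys}\) and \(\mathit{Active}\) abort any commit that would create a second row for an occupied slot.
\end{enumerate}
Serializability plus the unique constraints therefore preserve the ``at most one'' bound even when two transactions for the same slot run concurrently. The guard \(\mathsf{slot}(p_x)\notin\mathit{UsedActivationKeys}_s\) is a redundant sequential check. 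We deliberately do not rely on it, because serializability is what makes the check and the write atomic.

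\emph{\(\mathsf{CommitEffect}\).} The argument is the same. Its write set is the used-slot row, the accepted receipt and the protected-effect result. A10 places this write set at the single linearization point. A6 supplies atomicity, and the unique constraints on \(\mathit{UsedEffectKeys}\) and \(\mathit{EffectReceipts}\) supply the ``at most one'' bound.

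\emph{\(\mathsf{Crash}\) steps.} By Definition~\ref{def:crash-recovery}, \(\mathsf{Crash}(s)=\mathsf{DurableProjection}(s)\). This is the state after the last complete commit and contains no partial write. It is a prefix of committed transactions, each of which preserved the invariant, so the invariant holds there.

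\emph{Recovery prefixes.} These consist solely of ordinary transitions in \(\mathcal R\), so they are covered by the cases above. In particular, retries address the same structured slot, since the slot is \(\langle\mathit{gid},\mathit{opKey},\omega\rangle\) and not a transport identifier. A replayed commit therefore hits the unique constraint or the guard and aborts.

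The step I expect to need the most care is the concurrent case: two serializable activation or effect transactions race on the same slot, possibly across a crash boundary. My first approach is to argue purely from A6. Serializability orders the two commits. Durability of the first commit means that either the second transaction's read of \(\mathit{UsedActivationKeys}\) observes the first commit, or its insert conflicts with the first commit's row; either way it aborts without change. Recovery cannot resurrect an aborted write, because recovery exposes only complete commits.

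A secondary point is the ``same transaction'' conjunct after a crash that occurs between the two writes. This is excluded by A6's all-or-nothing clause, so \(\mathsf{DurableProjection}\) never contains an active row without its used-key row, or a receipt without its used-effect row.
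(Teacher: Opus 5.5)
Your proposal is correct and follows essentially the paper's argument: induction along the A6 serialization order, with the all-or-nothing commit plus the \(\mathsf{UNIQUE}(\mathit{slot})\) constraints giving both the per-slot cardinality bounds and row co-commitment, A10 placing the receipt and used-effect row at one linearization point, \(\mathsf{Crash}\) as a committed prefix, and recovery as ordinary guarded transitions. Your explicit frame check, that no transition other than \(\mathsf{CommitActivation}\) or \(\mathsf{CommitEffect}\) writes the four relations, spells out a step the paper leaves implicit.
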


\begin{proof}
Order committed transactions by the serialization order supplied by A6.
Before the first relevant transaction, the claims are vacuous. An
activation transaction for slot \(j\) either aborts, changing no durable row,
or commits both proposed rows as one write set. If a row for \(j\) already
exists, the
relevant \(\mathsf{UNIQUE}(\mathit{slot})\) constraint rejects the conflicting write
and the complete transaction aborts. Thus induction over the serialization
order gives both cardinality bounds and row co-commitment. A crash removes
only uncommitted or volatile writes and retains the complete prefix through
the last commit. Recovery reuses the same slot and invokes only ordinary
guarded transitions; any conflicting activation transaction again aborts.
The same serialization argument applies to \(\mathsf{CommitEffect}\): A10
places permit validation, effect-slot consumption, receipt creation, and
effect acceptance at one registered linearization point, while the used-slot
uniqueness constraint rejects a second commit for that slot before another
effect can be accepted. Therefore neither crash nor recovery can invalidate
either bound.
\end{proof}

\begin{lemma}[Initial safety]
\label{lem:initial}
I1--I6 hold in \(s_0\).
\end{lemma}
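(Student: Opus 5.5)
The plan is to check each invariant directly against the definition of \(s_0\) in the accepted-trace definition: \(s_0\) is empty except for trusted registries. In particular \(\mathit{Grants}\), \(G\), \(\mathit{Ops}\), \(\mathit{AcquisitionPermits}\), \(\mathit{Outbox}\), \(\mathit{Vault}\), \(\mathit{Resolved}\), \(\mathit{ActivationPermits}\), \(\mathit{Active}\), \(\mathit{Handles}\), \(\mathit{EffectPermits}\), \(\mathit{EffectReceipts}\), and both used-key sets are empty, and every coordinate of \(\sigma_{s_0}\) takes its initial value (\(\mcode{none}\), \(\mcode{absent}\), \(\mcode{none}\)). I would first record this emptiness as a single observation. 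Each of I1--I5 is a universally quantified statement over prepared or committed permits, executable instances, active edges, or live authority domains. In \(s_0\) these sets are empty, so each invariant should hold vacuously, with one exception discussed below.

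The steps go in order. For I1, no activation or effect permit exists, so the binding obligation is vacuous. For I2 and I3, \(\mathsf{Exec}_{s_0}(x)\) is \(\mathsf{HandleEnabled}_{s_0}(x)\), which requires some \(a\in\mathit{Active}_{s_0}\); since \(\mathit{Active}_{s_0}=\varnothing\), no \(x\) is executable and both implications hold vacuously. For I4, \(E_{s_0}=\varnothing\), so there is no active edge. I6 is trace-indexed: a receipt first appearing at position \(k+1\) would require \(k+1\geq 1\), and \(\mathit{EffectReceipts}_{s_0}=\varnothing\), so the length-zero trace has no new receipt and the obligation is vacuous.

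The only step with content is I5, which I expect to be the (mild) main obstacle. It is not vacuous in the same way, because it quantifies over every authority domain \(\rho\), including domains with no grants. For an arbitrary \(\rho\), I would argue that \(\mathit{Grants}_{s_0}=\varnothing\) gives \(\mathsf{Live}_\rho(s_0)=\varnothing\). Then \(\mathsf{Safe}_\rho(s_0,\cdot)\) is the empty conjunction and equals \(\mathsf{true}\), independent of \(\Project_\rho(s_0)\). As a cross-check, \(\Project_\rho(s_0)=\varnothing\) anyway, and Definition~\ref{def:downward} requires \(\Phi_\Gamma(\varnothing)=\mathsf{true}\). So even if trusted registries were read as containing pre-installed seed grants, the conclusion would still follow, provided those seeds obey the registration rule. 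I would state this alternative explicitly so the lemma does not depend on which reading of ``trusted registries'' is taken.

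Finally, I would note that \(s_0\) is refresh-closed. Clause (i) is vacuous because no row exists, and clause (ii) is the I5 argument just given. This matters because later inductive steps assume accepted states are refresh-closed, and the initial state should satisfy that premise as well.
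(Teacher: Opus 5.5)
Your proposal is correct and follows essentially the same route as the paper's proof. I1--I4 and I6 are vacuous because \(s_0\) contains no grant, permit, active row, handle, edge, or receipt, and I5 holds because no domain has a live grant, with the registration requirement \(\Phi_\Gamma(\varnothing)=\mathsf{true}\) playing the backstop role the paper assigns to the first grant in a domain; your remarks on a seed-grant reading and on refresh-closedness of \(s_0\) are harmless additions.
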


\begin{proof}
There is no grant, prepared permit, active node, published handle, acquisition
edge, or accepted effect. I1--I4 and I6 are therefore vacuous, and there is no
live authority domain over which I5 must be checked. Grant registration later
requires the candidate domain's current projection to satisfy every predicate
in its candidate live-grant set. In particular, the first grant for a domain
must admit the empty graph. A utility condition belongs to the separate goal
predicate and does not alter safety.
\end{proof}

\begin{lemma}[Non-activation steps cannot create authority]
\label{lem:nonactivation}
Grant issuance, logical-time advancement, proposal, source reservation,
dispatch, quarantine, actual capability resolution, activation or effect
preparation, effect commit, refund, and provider-state reconciliation do not
make a previously nonexecutable node executable or publish a handle.
\end{lemma}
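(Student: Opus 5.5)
The argument is a case analysis over the listed transitions, driven by the structure of \(\mathsf{Exec}\). Since \(\mathsf{Exec}_s(x)\equiv\mathsf{HandleEnabled}_s(x)\), a node becomes executable in a step \(s\to s'\) only if \(s'\) contains a row \(a\) satisfying three conditions: \(\mathit{Handles}_{s'}(\mathsf{handle}(a))=a\), \(A_{s'}(\mathsf{slot}(a))=\mcode{active}\), and \(\mathsf{CurrentRow}_{s'}(a,x)\).

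For each transition in the list, I would identify its write set and show it does not enlarge any of the three monotone-positive ingredients. It never inserts into \(\mathit{Handles}\). It never inserts into \(\mathit{Active}\). It never moves an authority coordinate from \mcode{none} to \mcode{active}. It never adds to \(\mathit{UsedActivationKeys}\). Together these rule out both a fresh handle publication and the enabling of a previously nonexecutable node.

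The transitions fall into three groups.

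\emph{Pure bookkeeping.} Proposal, reservation, dispatch, quarantine, resolution, refund and reconciliation touch only \(\mathit{Ops}\), \(\mathit{AcquisitionPermits}\), \(\mathit{Outbox}\), \(\Sigma\)'s commerce and delivery coordinates, \(\mathit{Vault}\) and \(\mathit{Resolved}\). In particular, \(\mathsf{AcquireIntoQuarantine}\) explicitly publishes no handle. \(\mathsf{RefundSettled}\) changes only \(C\).

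\emph{Append-only ledgers.} Activation and effect preparation append only to \(\mathit{ActivationPermits}\) and \(\mathit{EffectPermits}\), and leave \(A_s(j)=\mcode{none}\). \(\mathsf{CommitEffect}\) adds only to \(\mathit{UsedEffectKeys}\), \(\mathit{EffectReceipts}\) and the effect result. None of these sets occurs in \(\mathsf{CurrentRow}\) or \(\mathsf{HandleEnabled}\).

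\emph{Refresh-bearing steps.} Grant issuance and \(\mathsf{AdvanceTime}\) are the only listed steps that can change grants, time or epochs, and both invoke \(\mathsf{Refresh}\). By its definition, \(\mathsf{Refresh}\) only fences rows and disables handles, and never creates authority.

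The main obstacle is that \(\mathsf{CurrentRow}\) depends on ambient state such as \(\mathsf{now}\), \(\mathsf{Live}_\rho\) and \(\mathsf{Epochs}\). An advance in time or a newly current grant could therefore turn a previously non-current row \(a\) that already has a published handle and an \mcode{active} coordinate into a current one. To close this I would invoke refresh-closedness of every accepted state. In the pre-state \(s\), every row with an \mcode{active} coordinate already satisfies \(\mathsf{CurrentRow}_s\). Any row that was published but failed \(\mathsf{CurrentRow}\) would have been fenced by an earlier \(\mathsf{Refresh}\), so its coordinate is \mcode{fenced} or \mcode{revoked}, and no listed step moves a coordinate back to \mcode{active}. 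Hence if \(\mathsf{Exec}_{s'}(x)\) holds, the witnessing row already had an \mcode{active} coordinate and a handle in \(s\). It was then current in \(s\) by refresh-closedness, so \(\mathsf{Exec}_s(x)\) held as well.

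A second subtlety is the uniqueness quantifier \(\exists!\) in \(\mathsf{HandleEnabled}\). For this I would appeal to Lemma~\ref{lem:slot-uniqueness} and to the fact that no listed step adds rows to \(\mathit{Active}\), so uniqueness cannot be newly acquired in \(s'\). Combining the three groups with this argument gives the lemma. It leaves \(\mathsf{CommitActivation}\) and \(\mathsf{PublishHandle}\) as the only creators of executability, which is what Theorem~\ref{thm:quarantine} needs.
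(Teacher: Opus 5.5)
Your route matches the paper's proof: a write-set case analysis showing that no listed step adds an $\mathit{Active}$ row, a $\mathit{Handles}$ entry, or an \mcode{active} authority coordinate. Your refresh-closedness paragraph improves on the paper's terse argument. The paper never says why a change to $\mathsf{now}$, $\mathsf{Live}_\rho$, or $\mathsf{Epochs}$ cannot revive a row that is already published but not current. Applying clause (i) of refresh-closedness to the pre-state is the right way to close that.

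The step that fails as written is your treatment of $\exists!$ in $\mathsf{HandleEnabled}$. ``No listed step adds rows to $\mathit{Active}$'' only shows that uniqueness cannot be \emph{lost}, but the danger runs the other way. Suppose a pre-state held two rows for $x$, both \mcode{active}, current, and published. Then $\mathsf{HandleEnabled}_s(x)$ is false. An $\mathsf{AdvanceTime}$ whose refresh fences only one of them would make it newly true, by removal rather than addition. Lemma~\ref{lem:slot-uniqueness} bounds rows per \emph{slot}, not per instance, so citing it alone does not exclude this. What you need is the invariant that at any time at most one $a\in\mathit{Active}_s$ has $\mathsf{inst}(a)=x$. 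It follows from the $\mathsf{ActivationFresh}$ guard that $\mathsf{CommitActivation}$ re-checks inside its serialized transaction, which requires $\neg\exists a\in\mathit{Active}_s:\mathsf{inst}(a)=x$. Alternatively, combine slot uniqueness with the fact that $\mathsf{corr}_s(x)=\langle o,\omega\rangle$ fixes the slot of any row for $x$. With that invariant, $\exists!$ reduces to $\exists$ and your monotonicity argument goes through unchanged.

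A minor inaccuracy: $\mathsf{IssueGrant}$ does not invoke $\mathsf{Refresh}$. It checks $\mathsf{Safe}_{\rho(\Gamma)}$ on the candidate projection and rejects otherwise. This is harmless: the fresh $\mathit{gid}$ means no existing row has the new grant as its root, and your argument relies only on refresh-closedness of the pre-state.
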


\begin{proof}
By transition definition, these operations may add grants, proposals,
reservations, outbox records, receipts, quarantined or resolved vault nodes,
or a prepared permit. Effect commit may add an effect receipt but no
acquisition authority. None of these transitions adds a durable active-node
row or a handle, and A1 and A10 make an exact current row plus broker
validation necessary for execution. \(\mathsf{AdvanceTime}\) may instead
remove executability through its atomic refresh.
Preparation binds a prospective graph root but does not append its edge to the
active graph. Refund changes commerce state only. Therefore the executable
set cannot grow.
\end{proof}

\begin{lemma}[Activation guard preserves I1--I6]
\label{lem:activation-guard}
If I1--I6 hold in \(s\) and
\(\mathsf{CommitActivation}(p_x,e)\) succeeds, they hold for the new edge and
output in \(s'\).
\end{lemma}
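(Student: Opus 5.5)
The plan is to take each invariant in turn and discharge it from the commit guard of \(\mathsf{CommitActivation}\), which is re-evaluated in the serialized pre-state \(s\) (written \(s'\) inside Algorithm~\ref{alg:activate-commit}). The write set adds exactly three things: the edge \(e\) to \(G\), the slot \(j=\mathsf{slot}(p_x)\) to \(\mathit{UsedActivationKeys}\), and one active row \(a\) with \(A(j)=\mcode{active}\), together with a domain-revision increment. A6 makes this write set atomic, and A5 makes the reconstructed objects equal their mathematical definitions. Every other component of \(s\) is unchanged. So for every invariant it is enough to show two things: the new objects satisfy it, and the old objects are not disturbed by an append plus a revision bump.

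\textbf{I1 and I4.} Both follow from the exact-binding check. The commit requires \(\mathsf{ExactPermitBinding}(p_x,e',H'',\ldots)\), so the durable row carries exactly the manifest, identities, envelope digest, provider profile, source digest, contract and ordinal, prospective root \(h^+_\rho(s,e)\), state version, and epochs listed in Definition~\ref{def:activation-permit}. A2 rules out any forged match. The guard also rechecks \(\kappa\in K_\Gamma\), \(\omega\in\Omega_\kappa\), input and output typing, \(R_\kappa(I_e,\{x\})\), and \(\alpha_x\restrict F_\kappa(I_e,\Gamma,\omega)\). Those are literally the conjuncts of I4 for \(e\), so I4 holds for the new edge.

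\textbf{I2.} The row \(a\) satisfies \(\mathsf{RowFor}_{s'}(a,x)\) with an active slot. Uniqueness of such a row comes from two sources. \(\mathsf{ActivationFresh}_s(x,o,\omega)\) guarantees that no earlier active row for \(x\) exists. Lemma~\ref{lem:slot-uniqueness} guarantees at most one row per slot.

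\textbf{I3.} Here I would build a current derivation \(\Gamma\leadsto_{s'}x\) in the sense of Definition~\ref{def:current-derivation}. I would take the derivations of the inputs and append \(e\). Those input derivations exist because \(\mathsf{CurrentInputsAtRoot}\) requires each capability input to be executable with a same-root current derivation, and each source input to be authorized by the same \(\Gamma\). The remaining clauses are checked as follows. Acyclicity is checked directly by the guard. The single root follows from A11 together with the same-root requirement. Epoch and validity currentness hold because the commit reads current epochs and \(\mathsf{now}(s)\in\upsilon_{\alpha_x}\). Clause 7, the published handle, is the subtle one: after commit, \(\mathsf{Exec}\) is still false until \(\mathsf{PublishHandle}\) runs. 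I would therefore state I3 for the row, namely \(\mathsf{CurrentRow}_{s'}(a,x)\), and note that \(\mathsf{SlotConsumed}\) holds by co-commit. Executability of \(x\) is deferred to the publish step, which rechecks \(\mathsf{CurrentRow}\).

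\textbf{I5.} The transactional obligation \(\Project_\rho(s')=\Project^+_\rho(s,e)\), together with the guard \(\mathsf{Safe}_\rho(s,\Project^+_\rho(s,e))\), gives the invariant for \(\rho\). One point needs care: \(\mathsf{Live}_\rho(s')=\mathsf{Live}_\rho(s)\), because activation touches neither grants, clock, nor epochs. Other domains \(\rho'\ne\rho\) have unchanged projections. This relies on the base profile rejecting cross-root inputs, so \(e\) contributes only to \(\rho\).

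\textbf{I6.} No effect receipt is added, so I6 holds by the induction hypothesis.

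\textbf{Main obstacle: preservation for old rows.} The hardest step is showing that old active rows remain \(\mathsf{CurrentRow}\), i.e.\ that refresh-closure survives. The worry is that \(\mathsf{LineageCurrent}\) of earlier rows might depend on state that activation modifies, in particular the advanced domain revision. My plan is to argue that \(\mathsf{CurrentRow}\) is stated over immutable row fields, live grants, epochs, the clock, and predecessor rows. It does not mention \(\mathit{RootRevisions}\), and it is monotone under appending a fresh edge with a fresh output that lies in no existing lineage. Acyclicity and the freshness of \(x\) ensure no earlier row's sub-hypergraph changes. The revision bump only invalidates other outstanding permits, and that is precisely its intended effect.
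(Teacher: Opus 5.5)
Your proposal is correct and takes essentially the same route as the paper: you discharge each invariant from the rechecked commit guard and the single atomic A6 write set, obtain I5 from $\Project_\rho(s')=\Project^+_\rho(s,e)$ under an unchanged live-grant set, and get I6 because no receipt is added. It is, if anything, more careful than the paper's argument: you note that $x$ is not yet executable before $\mathsf{PublishHandle}$, so the real content for the new output is $\mathsf{CurrentRow}_{s'}(a,x)$, and you check explicitly that older rows stay current because $\mathsf{CurrentRow}$ does not read $\mathit{RootRevisions}$ and the fresh output lies in no existing lineage.
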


\begin{proof}
Let \(\Gamma=\Gamma_e\) be the candidate edge's committed provenance root.
The commit transaction verifies exact equality of the canonical resource,
actual-manifest root, descriptor, edge root, prospective graph root,
controller, beneficiary, grant-envelope digest, versioned provider profile,
source-evidence digest, contract version, security-state version, and epoch map bound
by \(p_x\); this establishes I1. The transaction writes the active record
before the handle can be published, establishing I2 for its singleton output.

The commit repeats the guard that \(\Gamma\in
\mathsf{Live}_{\rho(\Gamma)}(s)\), \(\kappa_e\in K_\Gamma\),
\(\omega_e\in\Omega_{\kappa_e}\), the input and output tuples are well typed,
\(\mathsf{now}(s)<\mathsf{expiry}(p_x)\), and every input is an authenticated source
authorized by the provenance root or a current executable capability derived
from that same root. The \(\mathsf{ActivationFresh}\) and acyclicity guards
ensure that the exact correlated, resolved vault output has no prior edge,
active row, or admitted slot and cannot become its own ancestor. The single-root rule rejects every
cross-root merge in the base transition system. Current-epoch,
root-currentness, and descriptor-validity checks together
with canonical identity resolution establish I3.
The contract resolver is exact, and the commit repeats
\(R_{\kappa_e}(I_e,\{x\})\) and
\(\alpha_x\restrict F_{\kappa_e}(I_e,\Gamma,\omega_e)\), establishing I4.

Let \(\rho=\rho(\Gamma)\). The candidate edge does not alter the live-grant
set. The transaction independently reconstructs \(\mathsf{Live}_\rho(s)\)
and the complete candidate projection, and accepts only if
\[
  \mathsf{Safe}_\rho(s,\Project^+_\rho(s,e)).
\]
This check includes every active subgraph whose provenance root is any live
grant in the domain, not just the candidate's \(\mathit{gid}\). Other domains are
unchanged. The atomic graph append establishes
\(\Project_\rho(s')=\Project^+_\rho(s,e)\), so I5 holds after commit even
when several fresh grant identifiers share \(\rho\).

Membership of \(p_x\) in the immutable permit ledger was established at
preparation. The proposed \(\mathit{UsedActivationKeys}\) consumption row,
graph append, active record, and domain-revision advance are one transaction
write set; the permit itself remains unchanged.
Because the activation succeeds, A6 places that complete write set at one
durable linearization point; an abort would have changed none of it.
Lemma~\ref{lem:slot-uniqueness} derives that no other committed active record
can occupy the same slot. The transition accepts no effect and does not alter
any prior effect receipt, so I6 is preserved. Previously prepared effect
permits may become stale after the domain-revision advance, but remain exactly
bound objects under I1 and cannot commit under the effect guard.
\end{proof}

\begin{lemma}[Other accepted transitions preserve I1--I6]
\label{lem:preservation}
If I1--I6 hold in \(s\), every accepted transition other than
\(\mathsf{CommitActivation}\) preserves them.
\end{lemma}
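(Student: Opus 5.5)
The argument is a case analysis over the accepted transitions listed in the Transitions subsection, excluding \(\mathsf{CommitActivation}\). For each transition \(s\to s'\) we check I1--I5 in \(s'\) and I6 for the appended step. The transitions fall into four groups.

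\emph{Bookkeeping steps.} These are \(\mathsf{ProposeAcquire}\), \(\mathsf{ReserveSource}\), \(\mathsf{DispatchAcquire}\), \(\mathsf{AcquireIntoQuarantine}\), \(\mathsf{ResolveActualCapability}\), \(\mathsf{RefundSettled}\) and \(\mathsf{ReconcileLateResult}\). None of them writes \(G\), \(\mathit{Active}\), \(\mathit{Handles}\), \(\mathit{Epochs}\), \(\mathit{Grants}\), the used-key sets or \(\mathit{EffectReceipts}\). Every predicate in I2--I5 reads only these components, together with \(\mathit{Resolved}\) for rows that are already committed. Since resolution only fills \(\mathit{Resolved}(x)\) for a fresh quarantined \(x\) with no active row (by \(\mathsf{ActivationFresh}\)), these predicates are unchanged. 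Lemma~\ref{lem:nonactivation} gives I2 directly. The refund case is exactly \(\Project_\rho(s')=\Project_\rho(s)\), so I5 holds.

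\emph{Preparation steps.} \(\mathsf{PrepareActivation}\) and \(\mathsf{PrepareEffect}\) only append immutable permits. I1 follows because each permit is built from exactly the fields listed in Definitions~\ref{def:activation-permit} and~\ref{def:effect-permit}; the signatures and commitments, by A2, fix those bindings. The other invariants are untouched.

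\emph{Publication and effect commit.} \(\mathsf{PublishHandle}\) adds \(\mathit{Handles}(\chi)=a\) only under the guards \(\mathsf{CurrentRow}_s(a,\mathsf{inst}(a))\) and \(A_s(\mathsf{slot}(a))=\mcode{active}\). I2 then follows from Lemma~\ref{lem:slot-uniqueness}, which gives at most one row per slot, and from the deterministic \(\chi\), which together yield the unique \(a\). I3 holds because \(\mathsf{CurrentRow}\) includes \(\mathsf{LineageCurrent}\) and a live root. The graph is unchanged, so I4 and I5 carry over. \(\mathsf{CommitEffect}\) adds only a receipt, a used effect slot and an inert result. Its guards are literally the pre-state conditions demanded by I6. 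Lemma~\ref{lem:slot-uniqueness} with A10 gives uniqueness of the committing transition and atomic co-entry; append-only ledgers give immutability. I1--I5 are unaffected because authority state is untouched.

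\emph{Authority-removing steps.} These are \(\mathsf{IssueGrant}\), \(\mathsf{AdvanceTime}\), \(\mathsf{Fence}\) and \(\mathsf{DestroyVerified}\), and they are the main obstacle because they change the live-grant set or remove rows. The strategy is to show that each post-state is refresh-closed.

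\(\mathsf{IssueGrant}\) either adds a not-yet-current grant, which contributes nothing to \(\mathsf{Live}\), or explicitly checks \(\mathsf{Safe}_{\rho}(s^+,\Project_\rho(s^+))\). Other domains are unchanged. Grant identifiers are erased in \(\Project_\rho\) and policies are authority-domain extensional, so a fresh \(\mathit{gid}\) adds no capacity.

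For \(\mathsf{AdvanceTime}\), \(\mathsf{Fence}\) and \(\mathsf{DestroyVerified}\), the argument is that \(\mathsf{Refresh}\), or the dependent-fence clause of destruction, runs in the same A6 transaction. Every active row failing \(\mathsf{CurrentRow}\) is fenced together with its descendants. Hence every remaining active row is current, which yields I2 and I3 via \(\mathsf{LineageCurrent}\) and Definition~\ref{def:current-derivation}. Removed rows or edges mean the remaining edges still satisfy I4.

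The delicate point is I5. The new projection is obtained from the old one by removing nodes and edges, so \(\Project_\rho(s')\preceq_s\Project_\rho(s)\), and downward closure (Definition~\ref{def:downward}) preserves each previously live conjunct. A newly live conjunct may not hold on the reduced projection. In that case \(\mathsf{Refresh}\) fences the whole domain, and \(\Phi_\Gamma(\varnothing)=\mathsf{true}\) restores I5. I will check carefully that fencing descendants keeps the projection a sub-hypergraph in the sense of \(\preceq_s\); this relies on acyclicity and on fencing never altering retained identities or lineage.

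Finally, no effect is accepted in any of these four steps. Previously prepared effect permits become stale through the advanced epoch or domain revision, so I6 holds vacuously for the new step.
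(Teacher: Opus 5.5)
Your proposal is correct and follows essentially the same route as the paper's proof. Both are a transition-by-transition case analysis: preparation steps yield I1 by exact binding; \(\mathsf{PublishHandle}\) and \(\mathsf{CommitEffect}\) discharge I2--I3 and I6 respectively from their guards together with A6/A10 and Lemma~\ref{lem:slot-uniqueness}; and \(\mathsf{IssueGrant}\), \(\mathsf{AdvanceTime}\), \(\mathsf{Fence}\), and \(\mathsf{DestroyVerified}\) preserve I5 through atomic refresh, downward closure, and the direct-check-or-fence-to-empty rule with \(\Phi_\Gamma(\varnothing)=\mathsf{true}\) for a newly live conjunct. One small correction: \(\mathsf{ActivationFresh}\) is a guard of \(\mathsf{PrepareActivation}\) and \(\mathsf{CommitActivation}\), not of \(\mathsf{ResolveActualCapability}\), so the bookkeeping case should rest, as the paper's does via Lemma~\ref{lem:nonactivation}, on the fact that resolution writes no active row, slot-level authority coordinate, active edge, or handle.
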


\begin{proof}
For \(\mathsf{IssueGrant}(\Gamma)\), let
\(s^+=s\oplus\Gamma\) and \(\rho=\rho(\Gamma)\). If the grant is current in
\(s^+\), the guard evaluates the candidate live-grant set, including the new
profile, on the existing shared-domain projection and accepts only if
\[
  \mathsf{Safe}_\rho(s^+,\Project_\rho(s^+)).
\]
If it is not yet current, \(\mathsf{Live}_\rho\) and the projection are
unchanged. In both cases registration has separately established
\(\Phi_\Gamma(\varnothing)=\mathsf{true}\). A fresh \(\mathit{gid}\) in an
existing domain therefore adds a live conjunct when current rather than an
empty accounting compartment. Other domains are unchanged, so I5 is
preserved. The domain-revision advance makes older effect permits stale but
changes neither their exact bindings nor any historical accepted effect.

For \(\mathsf{AdvanceTime}(\theta')\), the clock write and
\(\mathsf{Refresh}\) have one serialization point. Expired roots,
descriptors, and their dependent descendants are fenced, including committed
active-authority rows whose handles have not yet been published, so no
remaining active-authority row loses currentness and no executable node loses
I3. Removing an expired live grant weakens the
conjunction; the associated fencing only decreases the projection, so
downward closure preserves every remaining conjunct. At a grant onset, the
new conjunct is tested on the current projection. If it holds, I5 holds
directly; if it fails, refresh fences that domain, and every live grant accepts
the resulting empty projection. Thus the post-state is refresh-closed and
I2--I5 hold. The step creates no permit or effect, so I1 and historical I6 are
unchanged.

\(\mathsf{PrepareActivation}\) checks and signs every binding in
Definition~\ref{def:activation-permit}; hence the new prepared permit satisfies
I1, while I2--I6 are unchanged. \(\mathsf{PrepareEffect}\) re-resolves one
exact current row \(a\) for \(x\), sets
\(\rho=\rho(\mathsf{root}(a))\), normalizes the complete requested episode,
checks \(\mathit{Handles}_s(\chi)=a\), the active slot-level authority
coordinate, \(\mathsf{CurrentRow}_s(a,x)\), \(\mathsf{Exec}_s(x)\), the instance descriptor,
the current target-to-domain map commitment, \(\mathsf{EpisodeCurrent}_s(u)\),
\(\mathsf{Obl}_s(o_{\alpha_x},u)\), and the current shared-domain predicate,
and signs every field in Definition~\ref{def:effect-permit}. It therefore establishes
I1 for the immutable permit appended to \(\mathit{EffectPermits}\), but
accepts no effect and changes none of I2--I6. The
remaining pre-commit transitions in Lemma~\ref{lem:nonactivation} neither
create a prepared permit nor change an existing permit binding.

For \(\mathsf{CommitEffect}(p_u)\), serializability places the effect commit
either before or after every concurrent authority-changing transaction. Let
\(s^-\) be the unique pre-state at its linearization point. The guard resolves
the permit's handle to the same active row \(a\) and instance \(x\), sets
\(\rho=\rho(\mathsf{root}(a))\), and re-resolves the complete epoch vector,
domain revision, normalized episode, and lineage. It checks exact equality
with \(p_u\), including the current target-to-domain map commitment,
\(\mathit{Handles}_{s^-}(\chi)=a\),
\(A_{s^-}(\mathsf{slot}(a))=\mcode{active}\),
\(\mathsf{CurrentRow}_{s^-}(a,x)\), \(\mathsf{Exec}_{s^-}(x)\),
\(\mathsf{now}(s^-)<\mathsf{expiry}(p_u)\),
\(u\in\sem{\alpha_x}\), \(\mathsf{EpisodeCurrent}_{s^-}(u)\),
\(\mathsf{Obl}_{s^-}(o_{\alpha_x},u)\), and
\(\mathsf{Safe}_\rho(s^-,\Project_\rho(s^-))\), and requires an unused
effect slot and membership of \(p_u\) in the immutable permit ledger. Under
A6 and A10, appending the used-slot consumption row and
accepted receipt with its witness, and accepting the protected effect form one
indivisible commit. A
binding substitution, intervening epoch or revision change, or replay
therefore aborts before the effect. A successful commit establishes I6 and
the effect-slot part of Lemma~\ref{lem:slot-uniqueness}; it changes no grant,
active graph, activation record, or handle, so the first five invariants remain
true and the just-established I6 completes preservation of I1--I6.

\(\mathsf{PublishHandle}\) requires the exact durable row \(a\),
\(\mathsf{CurrentRow}_s(a,\mathsf{inst}(a))\), and an active authority
coordinate before atomically installing the deterministic, collision-checked
\(\mathit{Handles}(\chi)=a\) mapping. The equal mapping is idempotent and an
unequal mapping aborts, so publication cannot make a stale or different
committed row executable and preserves I1--I6.

\(\mathsf{Fence}\) advances an epoch and invokes the same atomic refresh used
for time. It cannot widen a descriptor or add an active node, and refresh
fences stale committed rows even before handle publication while preserving
current derivations for every remaining executable node. Each
affected domain's executable projection can only decrease; if a live-grant
set changes, refresh applies the same direct-check-or-empty-domain rule as the
time case. Downward closure and empty-graph admissibility therefore preserve
I5.
\(\mathsf{DestroyVerified}\)
removes the destroyed instance from the active projection only after current
authoritative evidence binds the same canonical resource, while retaining its
immutable provenance. Its guard atomically fences every descendant whose
registered derivation requires the destroyed resource. Thus every remaining
executable descendant keeps a current derivation, and the resulting normalized
graph is below the original in \(\preceq_s\); the downward closure of every
predicate in each affected domain gives I3--I5. I1 is unchanged, removing the
destroyed handle preserves I2, and I6 records what held at each historical
effect linearization point rather than asserting that its authority remains
current forever.

\(\mathsf{RefundSettled}\) changes commerce evidence but not the active graph,
and \(\mathsf{ReconcileLateResult}\) adds at most a quarantined output. A late
result must execute the ordinary resolution and activation sequence before
joining \(\mathit{Active}\). Therefore every non-activation transition preserves all
six invariants.
\end{proof}

\begin{lemma}[Fault steps preserve safety]
\label{lem:fault-preservation}
If I1--I6 hold before a crash, they hold in
\(\mathsf{Crash}(s)\) and throughout every finite recovery prefix in
\(\mathsf{Recover}(s)\). The slot bounds of
Lemma~\ref{lem:slot-uniqueness} also hold throughout.
\end{lemma}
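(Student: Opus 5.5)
The argument has two parts: first that the state produced by a crash satisfies I1--I6, then that every step of a finite recovery prefix preserves them. Let \(s\) be a state satisfying I1--I6 and let \(s^c=\mathsf{Crash}(s)=\mathsf{DurableProjection}(s)\). By A6, \(s^c\) is the state after the last fully committed transaction in the serialization order, and it contains no partial write set or volatile write. We cannot assume that \(s^c\) equals some earlier state of the accepted trace. The ordinary trace may contain committed transactions followed by volatile actions such as prepared-but-unpersisted permits or outbox decisions in memory. The plan is to show that \(s^c\) coincides with the post-state of a committed prefix on every component that I1--I6 mention. Because every authority-changing operation is a single serializable transaction under A6, the durable store after the last commit is exactly the store after the corresponding accepted transition. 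Volatile items, such as an unpersisted handle publication or an in-flight permit, are not among the durable relations \(\mathit{Active}\), \(\mathit{Handles}\), \(\mathit{UsedActivationKeys}\), \(\mathit{EffectReceipts}\), or \(\mathit{UsedEffectKeys}\). The logical clock is durable state and changes only through \(\mathsf{AdvanceTime}\), which commits atomically with \(\mathsf{Refresh}\). The durable prefix is therefore refresh-closed, and it is the post-state of some accepted prefix \(s_0\to\cdots\to s_m\).

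Since I1--I5 are state invariants, they hold in \(s_m\) by Lemma~\ref{lem:initial}, Lemma~\ref{lem:activation-guard}, and Lemma~\ref{lem:preservation} applied inductively along the prefix. I6 is trace-indexed. Each receipt in \(s^c\) was added by a committed \(\mathsf{CommitEffect}\) within that prefix, whose pre-state conditions were recorded at its linearization point. Crash drops no committed receipt and adds none. The slot bounds follow from the crash clause already handled in Lemma~\ref{lem:slot-uniqueness}.

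For recovery, every element of \(\mathsf{Recover}(s)\) is reached from \(s^c\) by transitions in \(\mathcal R\). The members of \(\mathcal R\) are ordinary guarded transitions: outbox re-drive with the stable \(\mathsf{providerKey}(o)\), provider query, quarantine, resolution, preparation, commit, and publication. The proof is an induction on the length of the recovery prefix. Each step is an accepted transition from a state satisfying I1--I6, so it preserves them by Lemma~\ref{lem:activation-guard} if it is \(\mathsf{CommitActivation}\), and by Lemma~\ref{lem:preservation} otherwise. Three recovery-specific points need attention.

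\begin{itemize}
\item An outbox re-drive or a late provider receipt correlates to the same root-qualified key by A7. An exact replay in \(\mathsf{AcquireIntoQuarantine}\) is a no-op, and a conflicting association rejects.
\item A stale prepared activation or effect permit is never applied directly. The commit guard reconstructs the candidate and compares it against the current state.
\item \(\mathsf{PublishHandle}\) after recovery recomputes the deterministic \(\chi=\mathsf{Opaque}(\mathsf{slot}(a),\mathsf{rowCore}(a))\) for the committed row. It therefore yields the same handle or fails closed, and never creates a second row.
\end{itemize}

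Rejected recovery attempts leave the security state unchanged. Crash steps inside a fault-extended trace are handled by iterating the first part of the argument.

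The main obstacle is the first part: justifying that \(\mathsf{DurableProjection}(s)\) really is an accepted reachable state and does not mix components from different moments. One concern is the activation step, where a committed row exists without its handle. This matters because I2 and refresh-closedness require every active-slot row to satisfy \(\mathsf{CurrentRow}\) whether or not its handle is published, so an unpublished row is a legitimate post-\(\mathsf{CommitActivation}\) state. A second concern is time. A crash could be followed by a long delay, but \(\mathsf{now}\) cannot advance silently. Any staleness is introduced only by an explicit \(\mathsf{AdvanceTime}\) with refresh during recovery, and that step is again covered by Lemma~\ref{lem:preservation}.
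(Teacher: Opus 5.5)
Your proposal is correct and follows essentially the paper's proof: by A6 the crash state is the committed prefix after the last complete transaction, so I1--I6 hold there by the ordinary transition lemmas; each step of a finite recovery prefix is an ordinary guarded transition covered by Lemma~\ref{lem:activation-guard} or Lemma~\ref{lem:preservation}; and Lemma~\ref{lem:slot-uniqueness} supplies the slot bounds across both. Your additional remarks on re-drive correlation, stale permits, and deterministic republication are consistent elaborations, with one harmless slip: \(\mathsf{AdvanceTime}\) is not among the recovery transitions in \(\mathcal R\), but when it occurs as an interleaved ordinary step it is covered by Lemma~\ref{lem:preservation} anyway.
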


\begin{proof}
By A6, \(\mathsf{Crash}(s)\) is the state after the last fully committed
transaction, hence a committed prefix of the pre-crash execution rather than
a partial authority-changing write. I1--I6 held at that prefix by the ordinary
transition lemmas. Every recovery action is an ordinary guarded transition in
\(\mathcal R\): Lemma~\ref{lem:activation-guard} covers a successful
activation transaction and Lemma~\ref{lem:preservation} covers the remaining
actions. Induction over the finite recovery prefix preserves I1--I6.
Lemma~\ref{lem:slot-uniqueness} separately supplies the slot bounds across
both the crash projection and those transactions. In particular, a crash
before an effect commit's A10 linearization point exposes neither acceptance
nor its durable receipt and used-slot row; a crash after that point exposes
the complete committed set. Recovery cannot create a second acceptance for
the same effect slot.
\end{proof}

\begin{theorem}[Invariant preservation]
\label{thm:invariant-preservation}
Under A1--A11, I1--I6 hold after every prefix of every
fault-extended accepted trace.
\end{theorem}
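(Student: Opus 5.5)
The proof is by induction on the length of a fault-extended accepted trace, assembling the lemmas already established. The base case is Lemma~\ref{lem:initial}: I1--I6 hold in \(s_0\), and the slot bounds of Lemma~\ref{lem:slot-uniqueness} hold vacuously. For the inductive step, assume I1--I6 hold after a prefix ending in \(s\), and consider the next element of the trace. By the definition of a fault-extended accepted trace, that element is one of three kinds: an accepted ordinary transition, a \(\mathsf{Crash}\) step, or a step inside a finite recovery prefix. A rejected transition or an unguarded external message changes none of \(G\), \(\mathit{Active}\), \(\mathit{Handles}\), \(\mathit{EffectReceipts}\), or the used-key sets, so it is a stutter and preserves everything trivially.

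The ordinary steps split into two cases. If the step is \(\mathsf{CommitActivation}\), Lemma~\ref{lem:activation-guard} gives I1--I6 for the new edge and output. Because that commit writes only the new row, the new used-slot row, the graph append, and the domain revision, the invariants for pre-existing rows carry over unchanged. For I5, the one predicate affected globally, Lemma~\ref{lem:activation-guard} has already checked it on the full shared-domain projection \(\Project^+_\rho\). Every other accepted transition is covered by Lemma~\ref{lem:preservation}. For a crash step or a recovery prefix, Lemma~\ref{lem:fault-preservation} applies directly: \(\mathsf{Crash}(s)\) is a committed prefix state, which already satisfied the invariants at an earlier position of the induction. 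Each recovery action is an ordinary guarded transition, so it falls back into the two cases above. I6 is trace-indexed, so it must hold for every receipt first appearing at every position. I will argue that receipts are append-only and enter only through \(\mathsf{CommitEffect}\), as noted in Lemma~\ref{lem:nonactivation} and the effect case of Lemma~\ref{lem:preservation}. The I6 obligation established at the receipt's unique linearization step therefore persists as a historical fact in every later prefix. Lemma~\ref{lem:slot-uniqueness} guarantees that this step is unique.

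The main obstacle is that the per-transition lemmas are stated for a single serialized step, while the theorem covers interleaved concurrent transactions and crashes in the middle of a transaction. I would close this gap explicitly using A6. Every authority-changing operation commits at one linearization point, and an abort leaves the durable state unchanged. The serialization order therefore yields a sequential accepted trace whose states are exactly the committed states of the concurrent execution. The concurrent case then reduces to the sequential induction above. Crashes add nothing new because \(\mathsf{DurableProjection}\) exposes only committed prefixes. A second point needing care is that refresh-closedness, which I2, I3, and I5 rely on, must hold after every step. I would add it to the inductive hypothesis, justified by two facts: \(\mathsf{AdvanceTime}\), \(\mathsf{Fence}\), and \(\mathsf{DestroyVerified}\) apply \(\mathsf{Refresh}\) atomically, and no other transition alters the clock, the epochs, or grant currentness.
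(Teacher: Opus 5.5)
Your proof is correct and is the paper's argument: induction on the length of the fault-extended trace, with Lemma~\ref{lem:initial} as the base case, Lemma~\ref{lem:activation-guard} for a successful $\mathsf{CommitActivation}$, Lemma~\ref{lem:preservation} for every other ordinary transition, and Lemma~\ref{lem:fault-preservation} for crash steps and recovery prefixes. Your additions (carrying pre-existing rows across an activation commit, the historical persistence of I6, the A6 serialization reduction, and refresh-closedness in the hypothesis) only make explicit what the paper leaves inside those lemmas; one correction to the last justification: $\mathsf{IssueGrant}$ can also change a domain's live-grant set, and there it is that transition's own $\mathsf{Safe}$ guard, not $\mathsf{Refresh}$, that keeps the state refresh-closed.
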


\begin{proof}
By induction on trace length. Lemma~\ref{lem:initial} is the base case.
Lemma~\ref{lem:activation-guard} handles a successful activation step, and
Lemma~\ref{lem:preservation} handles every other accepted ordinary
transition. Lemma~\ref{lem:fault-preservation} handles crashes and finite
recovery prefixes.
\end{proof}

\subsection{Separation propositions}

\begin{proof}[Proof of Proposition~\ref{prop:observation}]
Assume for contradiction that a function \(f(z)\) is sound and complete for
activation. Let provider worlds \(w_1,w_2\) produce the same transaction tuple
\(z\), but resolve to \(\alpha_1,\alpha_2\), inducing candidate edges
\(e_1,e_2\) with
\(\Phi_\Gamma(H\boxplus_{\rho(\Gamma)}
(\{e_1\}\cup\mathsf{out}(e_1)))=\mathsf{true}\) and
\(\Phi_\Gamma(H\boxplus_{\rho(\Gamma)}
(\{e_2\}\cup\mathsf{out}(e_2)))=\mathsf{false}\). Because its input is equal,
\(f\) returns the same result in both worlds. If that result accepts, it is
unsound in \(w_2\); every non-accepting result, including indeterminate, is
incomplete in \(w_1\). Contradiction. An
authenticated actual-output observation or a sound profile upper bound is
therefore necessary to distinguish the worlds.
\end{proof}

\begin{proof}[Proof of Proposition~\ref{prop:correlation}]
Fix one canonical control root \(\mathit{cr}\) and two descriptors \(\alpha_r\) and
\(\alpha_p\) containing, respectively, a sensitive-read and an
external-publish episode. Let \(\Phi_\Gamma(H)\) hold exactly when \(H\) does
not contain both capabilities controlled by \(\mathit{cr}\). This predicate is
downward closed. Its component allowlists contain \(\mathit{cr}\), both effect classes,
both targets, and all other individual fields, so a componentwise membership
test accepts each descriptor and their field union. Yet the graph containing
both makes \(\Phi_\Gamma\) false. Hence componentwise checks do not preserve
the correlation.
\end{proof}

\begin{proof}[Proof of Proposition~\ref{prop:zero-cost}]
Let the monetary invariant be total spend at most \(B>0\), and let
\(\Phi_\Gamma\) permit at most one active descendant for \(\mathit{cr}\). Two
zero-price enrollment operations preserve total spend \(0\le B\). If both
outputs activate, the descendant count is two and \(\Phi_\Gamma\) is false.
Thus monetary compliance does not imply acquisition safety. The same
construction applies to prohibited co-possession.
\end{proof}

\subsection{Safety theorem proofs}

\begin{proof}[Proof of Theorem~\ref{thm:quarantine}]
Lemma~\ref{lem:nonactivation} enumerates every pre-commit transition and shows
that none writes active authority or publishes a handle. A1 ensures no
registered acquisition or effect path bypasses these transitions; A3 prevents
resolution from manufacturing a silent grant; A6 orders the durable active
record before publication; and A10 requires a broker-accepted handle for a
protected effect. Therefore only a successful
\(\mathsf{CommitActivation}\) creates active authority, and executability
arises only when the broker subsequently publishes a handle for that durable
active record.
\end{proof}

\begin{proof}[Proof of Theorem~\ref{thm:backed}]
Consider executable \(x\). By the definition of \(\mathsf{Exec}_s\), it has a
unique matching active row \(a\); set \(\Gamma=\mathsf{root}(a)\). By I2 the
row has a committed activation edge. Invariant preservation gives I3 directly:
the row has a current acyclic derivation whose unique provenance root is the
live \(\Gamma\). Finiteness and acyclicity make recursive traversal terminate
at a source or capability seed authorized by that root. Because the base
transition system rejects cross-root capability combinations, every traversed
edge retains the same root. Hence the derivation terminates at exactly one live
root \(\mathit{gid}\) under the profile's root-identity rule. This
provenance uniqueness does not create a separate accounting compartment: I5
still evaluates the union of all live roots in that root's authority domain.
\end{proof}

\begin{proof}[Proof of Theorem~\ref{thm:nonamp}]
Theorem~\ref{thm:invariant-preservation} gives, for every live authority
domain \(\rho\),
\(\mathsf{Safe}_\rho(s,\Project_\rho(s))\) in every reachable state. This is
the relational non-amplification statement, including the case in which the
domain contains several provenance \(\mathit{gid}\) values. For the root-specific
episode-set corollary, every \(x\in\mathsf{Active}_\Gamma(s)\) lies on a
valid current path from
\(\Gamma\) by Theorem~\ref{thm:backed}. I4 supplies
\(\kappa_e\in K_\Gamma\), \(\omega_e\in\Omega_{\kappa_e}\), well-typed
input/output tuples, \(R_{\kappa_e}\), and the registered ordinal-specific
per-edge bound. The source leaves belong to \(\mathsf{Base}_\Gamma\).
Induction on path length, with every input tuple's components already
reachable, applies the \textsc{Step}
rule defining \(\mathsf{Reach}_\Gamma\), so every such \(x\) belongs to that
least set and \(\sem{\alpha_x}\subseteq\mathsf{Cl}_\Gamma\) by definition.
Unioning these inclusions proves the displayed subset. The main I5 statement
remains strictly stronger because the conjunction over
\(\mathsf{Live}_\rho(s)\) also constrains identities, counts,
co-possession, and topology across sibling grant identifiers.
\end{proof}

\begin{proof}[Proof of Theorem~\ref{thm:split}]
By Definition~\ref{def:projection}, normalization removes precisely the
syntactic distinctions allowed by the theorem and retains every semantic
output, canonical identity, count, relation, and edge. Equal normalized
additions therefore present identical shared-domain projections to every
predicate in \(\mathsf{Live}_\rho(s)\), so the deterministic conjunction
\(\mathsf{Safe}_\rho\) gives equal values. In particular, placing a second
worker under a fresh sibling \(\mathit{gid}\) does not hide it: both workers remain in
\(\Project_\rho\), and a domain clause allowing at most one active worker
rejects the second activation.

For the execution claim, fix an interval in which the live-grant set is
unchanged and no exogenous transition removes authority from the projection,
and suppose an interleaving fully activates a \(\Delta\) for which
\(\mathsf{Safe}_\rho(s,H\boxplus_\rho\Delta)\) is false. Let \(e_j\) be the
final fragment commit and \(s^-\) its pre-state. Earlier fragments may have
advanced the domain revision, so \(e_j\) must have been prepared again or
revalidated against that current revision. Because no authority was removed,
\(\Project^+_\rho(s^-,e_j)\) contains the normalized initial \(H\), all of
\(\Delta\), and possibly additional authority. Downward closure gives the
contrapositive: since the smaller \(H\boxplus_\rho\Delta\) is unsafe, this
candidate extension is also unsafe. The commit guard therefore rejects
\(e_j\), contradicting successful full activation. A safe prefix may remain active,
which is consistent with non-evasion. If an accepted time, epoch,
grant-currentness, fence, or destruction transition intervenes, it advances
the domain revision and atomically refreshes the projection. No stale permit
can cross that step. The argument then restarts from the new refresh-closed
projection and live-grant set, so no unsafe intermediate aggregate is exposed.
\end{proof}

\begin{proof}[Proof of Theorem~\ref{thm:crash}]
Consider an arbitrary state \(s\) of an arbitrary fault-extended accepted
trace and an arbitrary executable instance \(x\) in that state. By the exact
execution definition there is a unique row \(a\) satisfying
\(\mathsf{CurrentRow}_s(a,x)\) and the handle and active-coordinate guards;
put \(j=\mathsf{slot}(a)\) and
\(\Gamma=\mathsf{root}(a)\). Lemma~\ref{lem:slot-uniqueness} gives at most one durable active row for
\(j\) before any fault and preserves that bound through every crash and finite
recovery prefix. More explicitly, a crash before the activation transaction's
linearization point projects to a state containing none of its writes. A
crash after that point retains its complete write set, including the consumed
slot and active row. There is no durable state containing only one of those
two rows. Handle publication occurs only from the committed row. If recovery
repeats publication, determinism yields the same handle rather than a second
authority.

A replayed activation permit addresses the same slot, so its conflicting
transaction aborts under \(\mathsf{UNIQUE}(\mathit{slot})\). A duplicate receipt correlates with
the existing logical operation and remains quarantined unless it passes a
separate ordinary activation under the complete graph. A late success follows
the same rule. Revalidation within any successful activation commit
establishes the current manifest, live root, descriptor validity, epochs, and
\(\mathsf{Safe}_{\rho(\Gamma)}(s,\Project_{\rho(\Gamma)}(s))\);
each accepted time or epoch transition refreshes them before its post-state is
observable. Theorem~\ref{thm:invariant-preservation} preserves these facts
across the fault-extended trace. Consequently every
executable handle is backed by the unique safe committed active record for its
slot, and no slot produces two simultaneously executable authorities. This is
an at-most-once safety result: it does not assert that retries terminate or
that any attempted activation eventually commits. A later epoch advance may
leave a historical handle reference recorded while making
\(\mathsf{Exec}_s(x)\) false, which is consistent with the theorem's
current-state antecedent. Since the used-slot row and active row co-commit,
\(\mathsf{SlotConsumed}_s(a)\) also holds, proving every conjunct in the
theorem's implication.
\end{proof}

\begin{proof}[Proof of Theorem~\ref{thm:refund}]
By definition, refund changes only \(C_s(o)\) to
\(C_{s'}(o)=\mcode{refunded}\) and updates source evidence. It does not change
any \(D_s(o,\omega)\), any slot-level \(A_s(j)\), active row, or graph edge. Thus
\(\Project_\rho(s')=\Project_\rho(s)\) for every authority domain \(\rho\),
and every domain capacity clause has the same value immediately before and
after the refund. A later candidate
is evaluated with the first capability still present unless a separate
authorized transition occurs under the registered profile. Concretely, the
old capability must be fenced (optionally followed by ordinary activation
under a narrower descriptor) or verified as destroyed. Source restoration is
not evidence that either event occurred. Hence restored money cannot
resurrect a semantic slot by itself.
\end{proof}

\begin{proof}[Proof of Theorem~\ref{thm:epoch}]
Activation permits, active records, and effect permits bind the complete epoch
vector; effect permits also bind the current domain revision. Activation
commit, effect preparation, and effect commit compare these values with
authoritative state. After any bound epoch advances, exact equality fails: an
old activation permit cannot commit, an old effect permit cannot linearize an
effect, and an old handle cannot obtain a current replacement permit. An
authority-domain change likewise advances its revision and invalidates a
prepared effect permit. Re-resolution and new activation are the only
transitions that can bind a changed resource to the new epoch vector.
\end{proof}

\begin{proof}[Proof of Theorem~\ref{thm:confinement}]
Let \(r\) and \(p_u\) witness
\(\mathsf{AcceptedEffect}_{s_{k+1}}(x,u)\), with \(r\) first appearing in
\(s_{k+1}\). By A1, A10, and I6, there is exactly one transition
\(s_k\xrightarrow{\mathsf{CommitEffect}(p_u)}s_{k+1}\) that atomically
accepts the effect and appends \(r\) and its previously unused effect-slot row.
In the pre-state \(s_k\), the broker resolves the handle to the exact durable
row \(a\) and instance \(x\), with
\(\rho=\rho(\mathsf{root}(a))\) and
\(\mathit{Handles}_{s_k}(\chi)=a\),
\(A_{s_k}(\mathsf{slot}(a))=\mcode{active}\), and
\(\mathsf{CurrentRow}_{s_k}(a,x)\). I6 also gives
\(\mathsf{Exec}_{s_k}(x)\). By
Theorem~\ref{thm:backed}, the row has a complete current derivation there. The
guard compares the complete normalized episode, row root, authority domain,
target-to-domain map commitment, epoch vector, and domain revision with
\(p_u\), requires
\(\mathsf{now}(s_k)<\mathsf{expiry}(p_u)\), and checks
\(u\in\sem{\alpha_x}\), \(\mathsf{EpisodeCurrent}_{s_k}(u)\),
\(\mathsf{Obl}_{s_k}(o_{\alpha_x},u)\), and
\(\mathsf{Safe}_\rho(s_k,\Project_\rho(s_k))\).

If the episode, handle, or active record was substituted after preparation,
exact permit equality fails. If activation, fencing, destruction, grant
issuance, or an epoch change intervened, the epoch or domain-revision check
fails. If the same logical effect is replayed, the used-slot check fails. By
A10 each failure aborts before the protected effect, whereas success appends
the slot-consumption row atomically with effect acceptance. Consequently an
accepted effect lies within the actual capability and current shared-domain
envelope and cannot be detached from its acquisition provenance by a
prepare-to-commit race.
\end{proof}

\bibliographystyle{ACM-Reference-Format}
\setlength{\bibsep}{-0.2pt}
\bibliography{references}

\end{document}